\renewcommand{\vec}[1]{\mathbf{#1}}
\newcommand{\hinge}{\text{hinge}}
\newcommand{\cX}{\mathcal{X}}
\newcommand{\cY}{\mathcal{Y}}
\newcommand{\cH}{\mathcal{H}}
\newcommand{\cD}{\mathcal{D}}
\newcommand{\cE}{\mathcal{E}}
\newcommand{\regret}{\mathsf{Regret}}
\newcommand{\mistake}{\mathsf{Mistake}}
\newcommand{\transpose}{\mathsf{T}}
\newcommand{\OPT}{\mathsf{OPT}}
\newcommand{\indicator}[1]{\mathbf{1}\!\left\{#1\right\}}
\newcommand{\w}{\mathbf{w}}
\newcommand{\z}{\mathbf{z}}
\newcommand{\x}{\mathbf{x}}
\newcommand{\cF}{\mathcal{F}}
\newcommand{\BR}{\mathsf{BR}}
\newcommand{\Value}{\mathsf{Value}}
\newcommand{\Cost}{\mathsf{Cost}}
\newcommand{\Dist}{\mathsf{dist}}
\newcommand{\Halving}{\mathsf{Halving}}
\newtheorem{theorem}{Theorem}[section]
\newtheorem{proposition}[theorem]{Proposition}
\newtheorem{remark}[theorem]{Remark}
\newtheorem{example}[theorem]{Example}
\crefname{definition}{definition}{definitions}
\Crefname{definition}{Definition}{Definitions}
\crefname{prop}{proposition}{propositions}
\Crefname{Prop}{Proposition}{Propositions}
\crefname{lemma}{Lemma}{Lemmas}
\crefname{section}{Section}{Sections}
\crefname{subsubsubsection}{Section}{Sections}
\crefname{remark}{Remark}{Remarks}
\crefname{figure}{Figure}{Figures}
\crefname{table}{Table}{Tables}
\Crefname{lemma}{Lemma}{Lemmas}
\crefname{theorem}{Theorem}{Theorems}
\Crefname{theorem}{Theorem}{Theorems}
\crefname{algo}{Algorithm}{Algorithms}
\begin{document}
\title{Fundamental Bounds on Online Strategic Classification\footnote{Authors are ordered alphabetically.}}
\author{Saba Ahmadi$^\dagger$}
\author{Avrim Blum$^\dagger$}
\author{Kunhe Yang$^\ddagger$}
\affil{$^\dagger$Toyota Technological Institute at Chicago\\
$^\ddagger$University of California, Berkeley\\
{\small\texttt{\{saba,avrim\}@ttic.edu\quad kunheyang@berkeley.edu}}}

\date{\today}

\maketitle

\allowdisplaybreaks
\begin{abstract}
We study the problem of online binary classification where strategic agents can manipulate their observable features in predefined ways, modeled by a manipulation graph, in order to receive a positive classification.   We show this setting differs in fundamental ways from classic (non-strategic) online classification.  For instance, whereas in the non-strategic case, a mistake bound of $\ln\!|\mathcal{H}|$  is achievable via the halving algorithm when the target function belongs to a known class $\mathcal{H}$, we show that no deterministic algorithm can achieve a mistake bound $o(\Delta)$   in the strategic setting, where $\Delta$ is the maximum degree of the manipulation graph (even when $|\mathcal{H}|=\mathcal{O}(\Delta)$).  We complement this with a general algorithm achieving mistake bound $\mathcal{O}(\Delta\ln|\mathcal{H}|)$.
 We also extend this to the \emph{agnostic} setting,
 and show that this algorithm achieves a $\Delta$ multiplicative regret (mistake bound of $\mathcal{O}(\Delta\cdot {\rm OPT} + \Delta\cdot\ln|\mathcal{H}|)$), and that no deterministic algorithm can achieve $o(\Delta)$ multiplicative regret.  
 
Next, we study two randomized models based on whether the random choices are made before or after agents respond, and show they exhibit fundamental differences.
In the first, \emph{fractional} model, at each round the learner deterministically chooses a probability distribution over classifiers inducing expected values on each vertex (probabilities of being classified as positive), which the strategic agents respond to. We show that any learner in this model has to suffer linear regret.  On the other hand, in the second
\emph{randomized algorithms} model, while the adversary who selects the next agent must respond to the learner's probability distribution over classifiers, the agent then responds to the actual hypothesis classifier drawn from this distribution.  Surprisingly, we show this model is more advantageous to the learner, and we design randomized algorithms that achieve sublinear regret bounds against both oblivious and adaptive adversaries. 
\end{abstract}

\section{Introduction}

\emph{Strategic classification} concerns the problem of learning classifiers that are robust to gaming by self-interested agents~\cite{10.1145/2020408.2020495,Hardt2016}. An example is deciding who should be qualified for getting a loan and who should be rejected. Since applicants would like to be approved for getting a loan, they may spend efforts on activities that do not truly change their underlying loan-worthiness but may cause the classifier to label them as positive. An example of such efforts is holding multiple credit cards. Such gaming behaviors have nothing to do with their true qualification but could increase their credit score and therefore their chance of getting a loan. 
 Strategic classification is particularly challenging in the \emph{online} setting where data points arrive in an online manner. In this scenario, the way that examples manipulate depends on the \emph{current classifier}. %
Therefore, the examples' behavior changes over time and it may be different from examples with similar features observed in the previous rounds. Additionally, there is no useful source of unmanipulated data since there is no assumption that the unmanipulated data is coming from an underlying distribution.

Strategic agents are modeled as having bounded manipulation power, and a goal of receiving a positive classification. The set of plausible manipulations has been characterized in two different ways in the literature. The first model considers a geometric setting where each example is a point in the space that can move in a ball of bounded radius (e.g., %
~\citet{dong2018strategic,chen2020learning,haghtalab-ijcai2020,ahmadi2021strategic,ghalme2021strategic}). Another model is an abstraction of feasible manipulations using a \emph{manipulation graph} that first was introduced by \citet{zhang2021incentive}. %
We follow the second model and formulate possible manipulations using a graph. 
Each possible feature vector is modeled as a node in this graph, and an edge from $\vec{x}\rightarrow \vec{x'}$ in the manipulation graph implies that an agent with feature vector $\vec{x}$ may modify their features to $\vec{x'}$ if it helps them to receive a positive classification. We consider the problem of online strategic classification given an underlying manipulation graph. Our goal is to minimize the \emph{Stackelberg regret} which is the difference between the learner's cumulative loss and the cumulative loss of the best-fixed hypothesis against the same sequence of agents, but best-responding to this fixed hypothesis. 

In this paper, we consider three models with different levels of randomization. First, we consider the scenario where the learner can pick \emph{deterministic} classifiers. A well-known deterministic algorithm in the context of online learning is the \emph{halving} algorithm, which classically makes at most $O(\ln|\cH|)$ mistakes when the target function belongs to class $\cH$. First, we show that when agents are strategic, the \emph{halving} algorithm fails completely and may end up making mistakes at every round even in this realizable case. Moreover, we show that no deterministic algorithm can achieve a mistake bound $o(\Delta)$ in the strategic setting, where $\Delta$ is the maximum degree of the manipulation graph, even when $|\cH|=O(\Delta)$.  We complement this result with a general algorithm achieving mistake bound $O(\Delta\ln|\mathcal{H}|)$ in the strategic setting.
We further extend this algorithm to achieve $O(\Delta)$ multiplicative regret bounds in the non-realizable (agnostic) strategic setting, giving matching lower bounds as well.

Our next model is a {\em fractional} model where  at each round the learner chooses a probability distribution over classifiers, inducing expected values on each vertex (the probability of each vertex being classified as positive), which the strategic agents respond to.
The agents' goal is to maximize their utility by reaching a state that maximizes their chance of getting classified as positive minus their modification cost. For this model, we show regret upper and lower bounds similar to the deterministic case. 

In the last model, the learner again picks a probability distribution over classifiers, but now, while the adversary who selects the next agent must respond to this probability distribution, the agent responds to the actual classifier drawn from this distribution. That is, in this model, the random draw occurs after the adversary's selection of the agent but before the agent responds, whereas in the fractional model the random draw occurs after the agent responds. Surprisingly, we show this model is %
not only more transparent to the agents, but also
more advantageous to the learner than the fractional model. 
{ We argue that transparency can make the learner and agents cooperate against the adversary in a way that would be more beneficial to both parties, which is an interesting phenomenon that differentiates the strategic setting from the nonstrategic one.} %
{In this model, }we design randomized algorithms that achieve sublinear regret bounds against both oblivious and adaptive adversaries. 
We give a detailed overview of our results in~\Cref{sec:overview-results}.

\subsection{Related Work}

Our work builds upon a growing line of research, initiated by \citet{10.1145/1014052.1014066,dekel2008incentive,10.1145/2020408.2020495}, that studies learning from data provided by strategic agents.~\citet{Hardt2016} differentiated the field of strategic classification from the more general area of learning under adversarial perturbations;
they introduced the problem of \emph{strategic classification} and modeled it as a sequential game between a jury that deploys a classifier and an agent that best responds to the classifier by modifying their features at a cost.

Following the framework of \citet{Hardt2016}, recent works have focused on both the offline setting where examples come from underlying distributions~\citep{zhang2021incentive,sundaram2021pac,lechner2022learning,pmlr-v119-perdomo20a} and the online settings where examples are chosen by an adversary in a sequential manner~\citep{dong2018strategic,chen2020learning,ahmadi2021strategic}.
\citet{milli-etal,10.1145/3287560.3287597} extend the setting considered by \citet{Hardt2016} to the case that heterogeneous sub-populations of strategic agents have different manipulation costs and studied other objectives such as social burden and fairness. %
A number of other works focus on incentivizing agents to take improvement actions that increase their true qualification as opposed to gaming actions~\citep{10.1145/3417742,Alon2020MultiagentEM,haghtalab-ijcai2020,ahmadi_et_al:LIPIcs.FORC.2022.3,bechavod2022information}.
The works by \citet{pmlr-v119-shavit20a,pmlr-v119-perdomo20a,bechavod2021gaming} study causal relationships between observable features and outcomes in strategic classification.~\citet{levanon2021strategic} provide a practical learning framework for strategic classification%
. 
Recent works relax the assumption that strategic agents best respond to the classifiers and consider alternative assumptions such as noisy response~\citep{jagadeesan2021alternative}, learning agents~\citep{zrnic2021leads}, and non-myopic agents~\citep{haghtalab2022learning}.

Our work is most closely related to that of~\citet{zhang2021incentive,lechner2022learning}, which captures the set of plausible manipulations using an underlying \emph{manipulation graph}, where each edge $\vec{x}\rightarrow \vec{x'}$ represents a plausible manipulation from features $\vec{x}$ to $\vec{x}'$. This formulation is in contrast to a geometric model where agents' features are vectors in a $d$-dimensional space, with manipulation cost captured by some distance metric. As a result, agents in the geometric setting move in a ball of bounded radius~\citep{dong2018strategic,chen2020learning,haghtalab-ijcai2020,ahmadi2021strategic,ghalme2021strategic,sundaram2021pac}. However, the work of \citet{zhang2021incentive,lechner2022learning} focuses on the \emph{offline} PAC learning setting. Our work can be considered as generalizations of their model to the \emph{online learning} setting.

Our work is also connected to the line of work that studies randomness and transparency in strategic classification. In terms of \emph{classification accuracy} in the offline setting, \citet{Braverman2020TheRO} shows that in a one-dimensional feature space, both committed randomness (probabilistic classifiers) and noisy features under deterministic classifiers can improve accuracy, and the optimal randomized classifier has a structure where agents are better off not manipulating. On the other hand, \citet{ghalme2021strategic} gives sufficient conditions under which \emph{transparency} is the recommended policy for improving predictive accuracy. Our paper combines the insights of both papers in the online setting, where we show that randomness is necessary against the adversary that selects agents, but transparency is more advantageous when it comes to the strategic agents themselves (see \Cref{sec:discussion-transparency} for more discussions). In addition to accuracy, there are also studies about the societal implications of randomization and transparency in the presence of multiple subpopulations, such as information discrepancy~\citep{bechavod2022information} and fairness~\citep{immorlica2019access,kannan2019downstream,frankel2022improving,Braverman2020TheRO}.

\section{Model}

\subsection{Strategic Classification}
Let $\cX$ denote the space of agents' features, and $\cY=\{+1,-1\}$ denote the space of labels. %
We consider the task of sequential classification where the learner aims to classify a sequence of agents $\{u_t,y_t\}_{t=1}^T$ that arrive in an online fashion. Here, we assume $u_t\in\cX$ is the true feature set of agent $t$ and $y_t\in\cY$ is the true label. We call agents with $y_t=+1$ \emph{true positives}, and the ones with $y_t=-1$ \emph{true negatives}. %
Importantly, we make minimum assumptions on the sequence of agents, and our results apply to the case of adversarially chosen sequences.
A hypothesis $h:\cX\rightarrow \cY$ (also called a {\em classifier} or an {\em expert}) is a function that assigns labels to the agents $u\in\cX$. 
Given a hypothesis $\mathcal{H}:\cX\to\cY$, our goal is to bound the total number of mistakes made by the learner, compared to the best classifier $h^\star\in\cH$ in hindsight. 

To model the gaming behavior in real-life classification tasks, we work with the setting of \emph{strategic classification}, in which agents have the ability to modify their features at a given cost and reach a new observable state.
Formally, strategic classification can be described as a repeated Stackelberg game between the learner (leader) and the agents (followers).
At each step $t\in[T]$, the learner first publicly commits to a classifier $h_t$. Then, the $t$-th agent $(u_t,y_t)$ arrives and responds to $h_t$ by modifying their features from $u_t$ to $v_t$. As a result of manipulation, only $v_t$ (instead of $u_t$) is observable to the learner. 

We assume that $v_t$ is chosen as a best-response ($\BR$) to the announced rule $h_t$, such that the agent's utility is maximized:
\begin{align}
    v_t\in\BR_{h_t}(u_t)\triangleq\arg\max_{v\in \cX} \Big[\Value(h_t(v))-\Cost(u_t,v)\Big].
\end{align}
Here, $\Value(h_t(v))$ indicates the value of outcome $h_t(v)$, which is a binary function that takes the value of $1$ when $h_t(v)=+1$, and $0$ when $h_t(v)=-1$.
In~\Cref{sec:fractional-model}, we consider the generalization of agents best responding to a probability distribution over classifiers, where $\Value(h_t(v))$ becomes the induced expectation on node $v$, i.e., the probability of $v$ getting classified as positive by $h_t$. Equivalently, we refer to $h_t$ as a \emph{fractional classifier} and the induced probabilities as \emph{fractions}.
$\Cost(u_t,v)$ is a known, non-negative cost function that captures the cost of modifying features from $u_t$ to $v$. It is natural to assume $\Cost(u,u)=0$ for all $u\in \cX$. We use $v_t \in \BR_{h_t}(u_t)$ to show the best-response of agent $u_t$ at time-step $t$. Ties are broken by always preferring features with higher $\Value(h_t(\cdot))$, and 
preferring to stay put, i.e. $u_t=v_t$, if $u_t$ is among the set of best-responses that achieves the highest value.

\textbf{Learner's Objective:}
The learner's loss is defined as the misclassification error on the observable %
state: $\ell(h_t,v_t,y_t)=\indicator{y_t\neq h_t(v_t)}$. Since $v_t \in \BR_{h_t}(u_t)$ and has the highest value of $h_t(\cdot)$ according to the tie breaking rule, we also abuse the notation and write $\ell(h_t,\BR_{h_t}(u_t),y_t)=\indicator{y_t\neq\Big. \max\left\{ h_t(v):\ {v\in \BR_{h_t}(u_t)}\right\}}$.
The learner's goal is to minimize the Stackelberg regret with respect to the best hypothesis $h^\star\in\cH$ in hindsight, had the agents best responded to $h^\star$:
\begin{align}
    \regret(T)\triangleq\sum_{t=1}^T \ell(h_t,\BR_{h_t}(u_t),y_t)-\min_{h^\star\in\cH}\sum_{t=1}^T \ell(h^\star,\BR_{h^\star}(u_t),y_t).
\end{align}
For notational convenience, we use $\OPT$ to denote the optimal loss achieved by the best hypothesis:
\begin{align}
    \OPT\triangleq\min_{h^\star\in\cH}\sum_{t=1}^T \ell(h^\star,\BR_{h^\star}(u_t),y_t).
\end{align}
When $\OPT=0$, we call the sequence of agents \emph{realizable}, meaning that there exists a perfect classifier that never makes a mistake had agents best responded to it. Otherwise when $\OPT>0$, we call it \emph{unrealizable} or \emph{agnostic}.

\subsection{Manipluation Graph}
We use graph $G(\cX,\cE)$ to characterize the set of \emph{plausible manipulations}. In graph $G$, each node in $\cX$ corresponds to a state (i.e., features), and
each edge $e=(u,v)\in\cE$ captures a plausible manipulation from $u$ to $v$. 
The cost function $\Cost(u,v)$ is defined as the sum of costs on the shortest path from $u$ to $v$, or $+\infty$ if such a path does not exist.
We present our results for the case of undirected manipulation graphs and show how they can be extended to the case of directed graphs (\Cref{sec:directed-graphs}).

To model the cost of each edge, we consider \emph{weighted graphs} in which each edge $e\in \cE$ is associated with a nonnegative weight $w(e)\in[0,1]$. 
As a special case of the weighted graphs, we also consider \emph{unweighted graphs}, in which each edge takes unit cost, i.e., $w(e)=1$. We remark that in unweighted graphs, agents will move for at most one hop because manipulating the features can increase the value of classification outcomes by at most $1$. To be specific, let $N[u]$ denote the closed neighborhood of state $u\in\cX$, then agent $u$ respond to classifier $h$ as follows:  
if $h(u)$ is negative and there exists a neighbor $v\in N[u]$ with positive $h(v)$, then $u$ will move to $v$; otherwise, $u$ does not move. As a result, the loss function in unweighted graphs can be equivalently expressed as
\[
\ell(h,\BR_h(u),y)=
\begin{cases}
1 &\quad y=+1 \text{, } \forall v\in N[u]: h(v)=-1;\\
1 &\quad y=-1 \text{, } \exists v\in N[u]: h(v)=+1;\\
0 &\quad \text{otherwise}.
\end{cases}
\label{def-loss}
\]

When fractional classifiers are used, we also consider the \emph{free-edge} manipulation model. In this model, we restrict the agent to only moving one hop, where the cost of moving is zero. 
Specifically, each pair of nodes $(u,v)\in\cX^2$ has zero manipulation cost if $(u,v)\in\cE$, otherwise the cost is infinity. 
When agents best respond to classifier $h$ under this cost function, they will move to a one-hop neighbor of their initial state that has the highest probability of getting classified as positive.

\section{Overview of Results}
\label{sec:overview-results}

\begin{table}
    \centering
    \footnotesize
    \begin{tabular}{|c|c|c|c|}
    \hline
    \parbox[c][][c]{2cm}{Type of \\Randomness} &\parbox[c][][c]{2cm}{Manipulation\\ Graph} & {Upper Bound} &Lower Bound\\\hhline{====}
    \parbox[c][][c]{1.9cm}{Deterministic}&
    {Unweighted}
    &
    \begin{tabular}{l|l}
        \multirow{2}{*}{Realizable} & $O(\Delta\ln|\cH|)$\\
        &\Cref{alg:halving} (\Cref{thm:baseline-realizable-upper-bound})\\\hline
        \multirow{2}{*}{Agnostic} & $O(\Delta\cdot\OPT+\Delta\ln|\cH|)$
        \\
        & \Cref{alg:biased-weighted-maj-vote} (\Cref{thm:biased-weighted-maj-vote-mistake-bound})
    \end{tabular}
    &
    \begin{tabular}{l|c}
        \multirow{2}{*}{Realizable} & $\Delta-1$\\
        &\Cref{thm:deterministic-lower-bound}\\\hline
        \multirow{2}{*}{Agnostic} & $\Delta\cdot\OPT$
        \\
        & \Cref{thm:deterministic-lower-bound}
    \end{tabular}
    \\\hhline{====}
    \multirow{2}{*}{\parbox[c][][c]{1.9cm}{Fractional \\Classifiers \\{\tiny (random choice after agents respond)}}}&Free-edges&
    \parbox[c][1cm][c]{4cm}{$O(\Delta\cdot\OPT+\Delta\ln|\cH|)$ \\
    \Cref{alg:biased-weighted-maj-vote} (\Cref{thm:biased-weighted-maj-vote-mistake-bound})}
    &
    \parbox[c][1cm][c]{3cm}{
        $\frac{\Delta}{2}\cdot\OPT$\\
        \Cref{thm:fractional-onehop-lower-bound}
    }\\\hhline{~---}
    &Weighted&
    \parbox[c][1cm][c]{4cm}{$O(\tilde{\Delta}\cdot\OPT+\tilde{\Delta}\ln|\cH|)$ \\
    \Cref{prop:fractional-multi-hop-upper-bound}}
    &
    \parbox[l][1cm][c]{3cm}{
        {$\frac{{\Delta}}{4}\cdot\OPT\ \left(\frac{\Tilde{\Delta}}{4}\cdot\OPT\right)$ }\\
        \Cref{thm:fractional-multi-hops-lower-bound}
    }\\\hhline{====}
    \parbox[c][][c]{1.9cm}{Randomized\\ Algorithms
    \\{\tiny (random choice before agents respond)}}&{
    Unweighted}&
    \begin{tabular}{l|l}
        \multirow{2}{*}{Oblivious} & $O\left(T^{\frac{2}{3}}\ln^{\frac{1}{3}}|\mathcal{H}|\right)$\\
        &\Cref{alg:reduction-MAB-FIB} (\Cref{thm:regret-alg-oblivious})\\\hline
        \multirow{4}{*}{Adaptive} & $\widetilde{O}\left(T^{\frac{3}{4}}\ln^{\frac{1}{4}}|\mathcal{H}|\right)$
        \\
        & \Cref{alg:reduction-adaptive} (\Cref{thm:regret-alg-adaptive-reduction})\\\hhline{~-}
        & $\widetilde{O}\left(\sqrt{T|\cH|\ln|\mathcal{H}|}\right)$
        \\
        & Vanilla EXP3 Algorithm
    \end{tabular}
    &Open\\\hline
    \end{tabular}
    \caption{\small{This table summarizes the main results of this paper for the model of deterministic classifiers, fractional classifiers, and randomized algorithms respectively. We use $\Delta$ to denote the maximum degree of the manipulation graph, and $\tilde{\Delta}$ to denote the maximum degree of the expanded manipulation graph, which is constructed from a weighted graph by connecting all %
    pairs of nodes $(u,v)\in \cX^2$ where $\Cost(u,v)\leq 1$.
    Although the table is presented in terms of undirected graphs, we remark that all the upper and lower can be extended to the setting of directed graphs, with the degrees to be replaced by the corresponding out-degrees, %
    see \Cref{sec:directed-graphs} for an example in the setting of deterministic classifiers}.
    $\OPT$ stands for the optimal number of mistakes
    achieved by the best hypothesis in class $\cH$.
    }
      \label{table-of-results}
\end{table}

Our work considers three types of randomness: deterministic, fractional classifiers, and randomized algorithms. In the \emph{deterministic} model, the learner is constrained to using deterministic algorithms to output a sequence of deterministic classifiers. In the \emph{fractional classifiers} model, the learner is allowed to output a probability distribution over classifiers at every round, inducing fractions on every node that represent their probability of being classified as positive. The agents best respond to these fractions before the random labels are realized. In the last \emph{randomized algorithms} model, the learner outputs a probability distribution over classifiers as in the fractional model, and the adversary may pick $u_t$ based on these probabilities, but now the agents respond to the true realized classifier in selecting $v_t$. We summarize our main results 
in \Cref{table-of-results}.

\textbf{Deterministic Classifiers:} 
In the case of deterministic classifiers, we model strategic manipulations by unweighted graphs that have unit cost on all edges. We first consider the realizable setting where the perfect classifier lies in a finite hypothesis class $\cH$, and show fundamental differences between the non-strategic and strategic settings.
In the non-strategic setting, the deterministic algorithm $\Halving$ achieves $O(\ln|\cH|)$ mistake bound. However, in the strategic setting, we show in \Cref{example:halving-fails} that the same algorithm can suffer from an infinite number of mistakes. 

In \Cref{sec:deterministic}, we analyze the strategic setting and provide upper and lower bounds of the mistake bound, both characterized by the \emph{maximum degree} of vertices in the manipulation graph, which we denote with $\Delta$.
On the lower bound side, we show in \Cref{thm:deterministic-lower-bound} that no deterministic algorithm is able to achieve $o(\Delta)$ mistake bound, and this barrier exists even when $|\cH|=O(\Delta)$.
On the upper bound side, we propose \Cref{alg:halving} that achieves mistake bound $O(\Delta\ln|\cH|)$ by incorporating the graph structure into the vanilla $\Halving$ algorithm.

We then move to the agnostic strategic setting and propose \Cref{alg:biased-weighted-maj-vote} which achieves a mistake bound of $O(\Delta\cdot\OPT+\Delta\ln|\mathcal{H}|)$, where $\OPT$ denotes the minimum number of mistakes made by the best classifier in $\mathcal{H}$. 
This bound is $\Delta$-multiplicative of the bound achieved by the weighted majority vote algorithm in the non-strategic setting. {Furthermore, we extend our results to the setting where the input graph is a supergraph of the true manipulation graph, i.e., it contains all the edges in the true manipulation graph but it might also contain some fake edges (\Cref{remark:supergraph}).}
We complement this result with a lower bound showing that no deterministic algorithm can achieve $o(\Delta\cdot\OPT)$ mistake bound. 
In order to overcome the $\Delta$-multiplicative barrier, we study the use of randomization in the next two models.

\textbf{Fractional Classifiers:} %
In this setting, we consider 
two models of cost function:
the \emph{free-edges} cost model, where traveling one edge is cost-free but the second edge costs infinity, and the \emph{weighted graph} model, where agents can travel multiple edges and pay for the sum of costs of edges.
In the free-edges model, we show that no learner can overcome the mistake lower bound $\frac{\Delta}{2}\cdot\OPT$, and provide an upper bound of $O(\Delta\cdot\OPT+\Delta\ln|\mathcal{H}|)$ based on~\Cref{alg:biased-weighted-maj-vote}.
In the weighted graph model, we show a mistake lower bound of $\frac{{\Delta}}{4}\cdot\OPT$, and an upper bound of $O(\Tilde{\Delta}\cdot\OPT+\Tilde{\Delta}\ln|\mathcal{H}|)$, which is obtained by running \Cref{alg:biased-weighted-maj-vote} on the \emph{expanded manipulation graph} $\Tilde{G}$ that is constructed by connecting all pairs of nodes $(u,v)\in \cX^2$ where $\Cost(u,v)\leq 1$, and $\Tilde{\Delta}$ denotes the maximum degree of $\Tilde{G}$.
In particular, our construction for the lower bound satisfies $\Tilde{\Delta}=\Delta$, so this result also implies that no learner is able to surpass the $\frac{\Tilde{\Delta}}{4}$-multiplicative regret.

Our results in this setting indicate that using fractional classifiers cannot help the learner to achieve $o(\Delta\cdot\OPT)$ regret. 
To resolve this issue, we move on to the randomized algorithms model where the learner realizes the random choices in transparency to the agents.

\textbf{Randomized Algorithms:} 
In this setting, the learner uses randomized algorithms that produce probability distribution over deterministic classifiers at each round.
The key difference from the fractional classifiers setting is, although the adversary still chooses agent $(u_t,y_t)$ based on the distribution, the agent will best respond to the classifier to be used after it is sampled from the distribution.
Surprisingly, we show that revealing the random choices to the agents can make the interaction more fruitful for both the agents and the learner, as the learner is now able to achieve vanishing regret without the multiplicative dependency on $\Delta$ or $\Tilde{\Delta}$. This demonstrates an interesting difference between strategic and non-strategic settings {from the learner's perspective}: whereas delaying the realization of random bits is helpful in non-strategic settings, it is more helpful to realize the random choices \emph{before agents respond} in the strategic setting. We refer the readers to \Cref{sec:discussion-transparency} for more discussions about this difference.

As for algorithms and upper bounds in this setting, we first show that the vanilla EXP3 algorithm on expert set $\cH$ gives us a regret upper bound of $O\left(\sqrt{T|\mathcal{H}|\ln|\mathcal{H}|}\right)$.
To improve the dependency on $|\cH|$, we design two algorithms that simultaneously observe the loss of all experts by using an all-positive classifier at random time steps to stop the manipulations.
In particular, \Cref{alg:reduction-MAB-FIB} achieves regret upper bound of $O\left(T^{\frac{2}{3}}\ln^\frac{1}{3}|\mathcal{H}|\right)$ against oblivious adversaries; and \Cref{alg:reduction-adaptive} achieves regret bound of $\widetilde{O}\left(T^{\frac{3}{4}}\ln^{\frac{1}{4}}|\mathcal{H}|\right)$ against general adaptive adversaries.
We also extend this algorithmic idea to the linear classification setting where original examples are inseparable and obtain an upper bound in terms of the hinge loss of the original data points, resolving an open problem proposed in \citet{ahmadi2021strategic}. Although, our mistake bound has an extra $O(\sqrt{T})$ additive term compared to their bound for the case that original data points are separable.

\textbf{Two Populations:}
We propose an extension to our model in which agents are divided into two populations with heterogeneous manipulation power: group $A$ agents face a cost of 0.5 on each edge, whereas group $B$ agents face a cost of 1. 
We assume that group membership is a protected feature, and is observable only after the classifier is published.
In \Cref{sec:two-populations}, we present an algorithm with a $\min\left\{\Delta+1+\frac{1}{\beta},\ \Delta^2+2\right\}$-multiplicative regret, where $\beta$ is the probability that agents are assigned to group $B$.

\section{Deterministic Classifiers}
\label{sec:deterministic}
\subsection{Realizable Case}
\label{sec:realizable}

In the realizable case, we assume that there exists a perfect expert $h^\star\in\cH$ with zero mistakes, i.e., $\OPT=0$. This implies that for all time steps $t\in [T]$, we have $\ell(h^\star,\BR_{h^\star}(u_t),y_t)=0$. 
In this case, our goal of bounding the Stackelberg regret coincides with the mistake bound:
\begin{align}
    \mistake(T)\triangleq\sum_{t=1}^T \ell(h_t,\BR_{h_t}(u_t),y_t).
\end{align}
For notational convenience, let $S^\star$ denote %
the set of nodes in $\cX$ with positive labels under $h^\star$, namely $S^\star\triangleq\left\{u\in\cX:\ h^\star(u)=+1\right\}$. Then realizability implies that $S^\star$ must satisfy two properties: (1) all the true positives can reach $S^\star$ within no more than one hop;
(2) No true negatives can reach $S^\star$ in one hop.
We formalize these two properties in \Cref{prop:dominating-set}.

\begin{proposition}
    \label{prop:dominating-set}
 In the realizable case, there exists a subset of nodes $S^\star\subseteq \cX$ such that $S^\star$ is a \emph{dominating set} for all the true positives $u_t$, i.e. $\Dist(u_t, S^\star)\leq 1$. Additionally, none of the true negatives $u_t$ are dominated by $S^\star$, i.e. $\Dist(u_t, S^\star)>1$, where $\Dist(u, S^\star)$ represents the minimum distance from node $u$ to the set $S^\star$.

\end{proposition}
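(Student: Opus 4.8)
The plan is to take $S^\star$ to be precisely the positive region of the perfect classifier, i.e. $S^\star \triangleq \{u \in \cX : h^\star(u) = +1\}$, and then verify the two claimed properties by unpacking what $\OPT = 0$ means in the unweighted-graph model. Realizability gives, for every round $t$, that $\ell(h^\star, \BR_{h^\star}(u_t), y_t) = 0$. Since the graph is unweighted, an agent's best response reduces to ``move to a positively-labeled closed neighbor if one exists, otherwise stay put,'' so the loss takes the explicit three-case form stated in the model section; a zero loss therefore imposes a concrete structural constraint on $h^\star$ in the closed neighborhood $N[u_t]$, and reading off that constraint is the whole content of the proof.

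First I would handle the true positives. Fix a round $t$ with $y_t = +1$. Because $\ell(h^\star, \BR_{h^\star}(u_t), y_t) = 0$ and $y_t = +1$, the first case of the loss expression (which would contribute a mistake) must fail; that is, it is not the case that $h^\star(v) = -1$ for every $v \in N[u_t]$. Hence there exists $v \in N[u_t]$ with $h^\star(v) = +1$, i.e. $v \in S^\star$. Since $v$ lies in the closed neighborhood of $u_t$, this says exactly that $\Dist(u_t, S^\star) \le 1$, so $S^\star$ dominates every true positive.

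Next I would handle the true negatives. Fix a round $t$ with $y_t = -1$. Now a zero loss forces the second case of the loss expression to fail, i.e. there is no $v \in N[u_t]$ with $h^\star(v) = +1$; equivalently $N[u_t] \cap S^\star = \emptyset$. In particular $u_t \notin S^\star$ and no neighbor of $u_t$ belongs to $S^\star$, so the nearest element of $S^\star$ is at distance at least $2$, giving $\Dist(u_t, S^\star) > 1$. Since $h^\star$, and hence $S^\star$, is fixed and does not depend on $t$, the same set $S^\star$ simultaneously witnesses both properties, which completes the argument.

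I do not expect a genuine obstacle here: the proposition is essentially a restatement of the definition of $\OPT = 0$ once the unweighted-graph loss is written out, and the only point worth double-checking is the best-response/tie-breaking convention, namely that in the unweighted model the agent's best response really does collapse to ``reach a positively-labeled closed neighbor iff one exists'' (this is precisely the reduction of the loss to its three-case form already carried out in the model section). One may also observe that realizability implicitly forces the node sets of true positives and true negatives to be disjoint, so that the two properties are not in tension, but this observation is not needed for the proof itself.
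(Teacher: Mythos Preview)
Your proposal is correct and takes essentially the same approach as the paper: define $S^\star$ as the positive region of the perfect classifier $h^\star$ and read off the two properties directly from the unweighted-graph loss expression being zero. The paper treats this as essentially immediate, giving the argument informally in the paragraph preceding the proposition rather than as a separate proof, so your write-up is more detailed but follows the same logic.
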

\subsubsection{The failure of vanilla Halving}
In the problem of nonstrategic online classification with expert advice, the well-known $\Halving$  algorithm achieves a mistake bound of $\mistake(T)=\mathcal{O}(\ln{|\mathcal{H}|})$. In each iteration, $\Halving$ uses the majority vote of remaining experts to make predictions on the next instance, which ends up reducing the number of remaining experts by at least half on each mistake. Since there are $|\cH|$ experts at the beginning and at least one expert at the end, the total number of mistakes is bounded by $\mathcal{O}(\ln{|\mathcal{H}|})$. However, in the following example, we show that when agents are strategic, the vanilla $\Halving$ algorithm may suffer from an infinite number of mistakes, as do
two extensions of vanilla $\Halving$ that consider the best response function before taking majority votes.
Moreover, our construction indicates that these algorithms fail even when the sequence of agents is chosen by an oblivious adversary.

\begin{figure}
\centering
\begin{tikzpicture}
\def \n {20}
\def \radius {2cm}

\def \margin {8} %

\node[draw, circle] at (360:0mm) (ustar) {$x_0$};
\node at (352:\radius*0.3) {\textcolor{red}{$-$}};
\foreach \i [count=\ni from 0] in {\Delta,1,2,3,4}{
  \node[draw, circle] at ({120-\ni*36}:\radius) (u\ni) {$x_{\i}$};
  \node at ({116-\ni*36}:\radius*1.3) {\textcolor{red}{$-$}};
  \draw[thick] (ustar)--(u\ni);
}
\node[draw, circle] at ({225}:\radius) (ui) {$x_{i}$};
\node at ({225}:\radius*1.3) {\textcolor{red}{+}};
\draw[thick] (ustar)--(ui);
\foreach \i in {5,6,8,9}{
  \node[circle] at ({120-\i*36}:\radius) (aux) {\phantom{$u_{5}$}};
  \draw[dotted, thick, shorten >=1mm, shorten <=2mm] (ustar)--(aux);
}

\draw[dotted, semithick, red] (-40:\radius/2) arc[start angle=-40, end angle=-120, radius=\radius/2];
\draw[dotted, semithick, red] (-150:\radius/2) arc[start angle=-150, end angle=-230, radius=\radius/2];
\end{tikzpicture}
\caption{Expert $h^i$}
\label{fig:lower-bound-deterministic}
\end{figure}
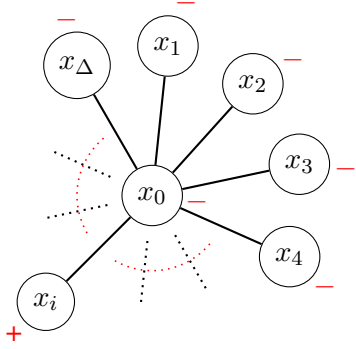

\begin{example}
    \label{example:halving-fails}
Consider the following manipulation graph $G(\cX,\cE)$ and hypothesis class $\cH$: $G(\cX,\cE)$ is a star that includes a central node $x_0$, and $\Delta$ leaves $x_1,\cdots,x_{\Delta}$. Hypothesis class $\mathcal{H}=\{h^1,\cdots,h^{\Delta}\}$, where each $h^i\in \cH$ assigns positive to $x_i$, and negative to all other nodes in $\cX$ (see \Cref{fig:lower-bound-deterministic}). The perfect expert is $h^\star=h^j\in \mathcal{H}$ for some $j\in[\Delta]$ unknown to the learner.
\end{example}

Now consider two algorithms: the vanilla $\Halving$ algorithm and the variant that performs an expansion of positive region on top of $\Halving$.

\begin{enumerate}
    \item \textbf{Vanilla $\Halving$}.
    
    Consider the following sequence of agents: at every time $t$, the same agent with initial position $u_t=x_0$ and label $y_t=+1$ arrives. We claim that the $\Halving$ algorithm makes mistakes on each agent regardless of the total number of rounds executed. First, note that this sequence is realizable with respect to class $\cH$: for all $h^i\in \mathcal{H}$, we have $\BR_{h^i}(x_0)=x^i$ and $h^i(x_i)=+1$, so each $h^i$ classifies $(x_0,+1)$ correctly in isolation. Therefore, any expert in $\mathcal{H} $ achieves %
    zero mistakes on this sequence of agents. 
    
    Now consider the vanilla $\Halving$ algorithm. Initially, for each node $x\in\cX$, there is at most one expert in $\cH$ that labels it as positive. Therefore, the majority vote classifier of $\cH$ labels every node as negative. In response to this all-negative majority vote classifier, the first agent $(x_0,+1)$ stays put and gets classified as negative mistakenly.
    However, %
    we know that each classifier $h^i$ predicts correctly on $(x_0,+1)$. As a result, none of the experts get discarded. Therefore, a mistake is made by the learner, but no progress is made in terms of shrinking the set $\mathcal{H}$. The same agent appears at every round, resulting in the $\Halving$ algorithm %
    making mistakes in each round.

    \item \textbf{A strategic variant of $\Halving$}.

    Now consider a different voting rule for taking the majority-vote classifier based on the best-response function: Let $\overline{h}(u)=h(\BR_{h}(u))$ for all $h\in\cH$ and $u\in\cX$, and suppose that the learner runs $\Halving$ on the hypothesis class $\overline{\cH}=\{\overline{h^1},\cdots,\overline{h^\Delta}\}$. Specifically, for each $h^i\in \mathcal{H}$, $\overline{h^i}(x_0)=h^i(\BR_{h^i}(x_0))=h^i(x_i)=+1$, therefore the majority-vote classifier predicts positive on $x_0$. On the other hand, the majority vote classifier predicts negative on all the leaves.
    Now, suppose the adversary secretly chooses $j\in[\Delta]$ and constructs a sequence in which $h^j$ is realizable as follows: at each time step $t$, selects an example with true label $y_t=-1$ and initial position $u_t=x_i\in \cX\setminus\{x_0,x_j\}$.
    Note that all classifiers in ${\cH}$ except ${h^i}$ will classify $(u_t,y_t)$ correctly.
    However, the majority vote classifier will make a mistake because $u_t$ can manipulate to $x_0$ and get classified as positive. Once the mistake is made, had the true location $u_t=x_i$ been observable, the learner could have shrunk the size of $\cH$ by discarding $h^i$. However, $u_t$ is hidden from the learner, so the learner would not know which classifier is making a mistake. Therefore, it cannot make progress by excluding at least one expert from $\mathcal{H}$ in each round.

   \item \textbf{Another strategic variant of $\Halving$.}
   
   The positive region of $h^{\text{maj}}$ in the previous variation can be reached by all the nodes in the graph, which makes gaming too easy for the agents. Now, suppose the learner's goal is to shrink the positive region of $h^{\text{maj}}$ and get a new classifier $h$, such that the positive region of $h$ can only be reached by the true positives under $h^{\text{maj}}$, but none of the true negatives.
   
   We use the same example as above to show the failure of this algorithm because such $h$ does not exist. Recall that the positive region of $h^{\text{maj}}$ contains only the central node $x_0$. 
   Suppose such an $h$ exists, then $x_0$ cannot belong to the positive region of $h$, because it can be reached by all leaf nodes $x_i$, which are true negatives under $h^{\text{maj}}$. 
   In addition, no leaf nodes should be included in the positive region of $h$ either. This implies that the positive region of $h$ is empty, which contradicts with the assumption that true positive node $x_0$ can reach it.
   For this reason, the learner is unable to find an $h$ satisfying this property. %

\end{enumerate}

\Cref{example:halving-fails} indicates that taking majority votes fails in the strategic setting. One crucial point is that the leaves do not meet the threshold for \emph{majority}, and therefore they are always negative under the majority vote classifier (whether we consider the best response function or not) and thus indistinguishable, weakening the learner's leverage to identify the optimal expert. In fact, in this example, the only evidence for removing an expert is a false positive agent at the corresponding leaf node, so the learner should classify the leaves as positive in order to make progress. Therefore, one needs to lower the threshold for majority votes to increase the likelihood of false positives and make more room for improvement.

In the next section, we propose an algorithm based on the idea of \emph{biased majority vote} in favor of positive predictions, which provably achieves finite mistake bounds against any adversarial sequence of strategic agents. We show that compared to the nonstrategic setting, the extra number of mistakes made by the learner is closely characterized by the maximum degree of the manipulation graph.

\subsubsection{Upper Bound: Biased Majority-Vote Algorithm}
In this section, we propose a biased version of the majority vote algorithm for the realizable strategic setting. The algorithm proceeds in rounds as follows: At each round $t$, a new agent arrives and gets observed as $v_t$. From the remaining set of experts, if at least $1/(\Delta+2)$ fraction of them classify $v_t$ as positive, then the algorithm predicts positive. If the algorithm made a mistake, all the experts that predicted positive get removed from $\mathcal{H}$. 
If less than $1/(\Delta+2)$ fraction of the experts classify $v_t$ as positive, the algorithm predicts negative. If the prediction was wrong, then each expert that labeled all the vertices in the neighborhood of $v_t$, i.e. $N[v_t]$, as negative gets removed from $\mathcal{H}$.
We present this algorithm in \Cref{alg:halving} and analyze its mistake bound in \Cref{thm:baseline-realizable-upper-bound}.

\begin{algorithm}[!ht]
\SetKwInOut{Input}{Input}
\SetKwInOut{Output}{Output}
\SetNoFillComment
\Input{Manipulation graph $G(\cX,\cE)$, hypothesis class $\mathcal{H}$}
\For{$t=1,2,\cdots$}{
    \tcc{learner commits to a classifier $h_t$ that is constructed as follows:}
    \For{$v\in \cX$}{
        \eIf{$|h\in \mathcal{H}:h(v)=+1|\geq |\mathcal{H}|/(\Delta+2)$}{
            $h_t(v)\leftarrow+1$\;
        }
        {
            $h_t(v)\leftarrow-1$\;
        }
    }
    Observe the manipulated example $v_t$ and predict according to $h_t(v_t)$\;
    \tcc{If there was a mistake:}
    \If{$h_t(v_t)\neq y_t$}{
        \eIf{$y_t=-1$}{
            $\mathcal{H}\leftarrow \mathcal{H}\setminus \{h\in \mathcal{H}:h(v_t)=+1\}$\tcp*{Remove experts that predict $v_t$ as positive.}           
        }
        {
            $\mathcal{H}\leftarrow \mathcal{H}\setminus \{h\in \mathcal{H}: \forall x\in N[v_t], h(x)=-1\}$\tcp*{Remove experts that predict $N[v_t]$ as all-negative.}
        }
    }
}
\caption{Biased majority-vote algorithm.}
\label[algo]{alg:halving}
\end{algorithm}

\begin{theorem}
\label{thm:baseline-realizable-upper-bound}
If there exists at least one perfect expert under manipulation, \Cref{alg:halving} makes at most $(\Delta+2)\ln{|\mathcal{H}|}$ mistakes.
\end{theorem}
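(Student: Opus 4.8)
The plan is to maintain two invariants along the run of \Cref{alg:halving} and then close with a standard Halving-style potential argument. The invariants are: (I) the perfect expert $h^\star$ is never deleted from $\mathcal{H}$; and (II) on every round in which the learner makes a mistake, at least a $\tfrac{1}{\Delta+2}$ fraction of the experts currently in $\mathcal{H}$ is deleted. Granting both, if $M$ mistakes occur in total, then the surviving set has size at most $|\mathcal{H}|\cdot\bigl(1-\tfrac{1}{\Delta+2}\bigr)^{M}$ (here $|\mathcal{H}|$ is the initial size), while by (I) it always has size at least $1$. Taking logarithms and using $-\ln(1-x)\ge x$ for $x\in[0,1)$ gives $\tfrac{M}{\Delta+2}\le -M\ln\!\bigl(1-\tfrac{1}{\Delta+2}\bigr)\le\ln|\mathcal{H}|$, hence $M\le(\Delta+2)\ln|\mathcal{H}|$, which is the claim.

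For invariant (I), I would use realizability of $h^\star$ together with the fact that in an unweighted graph the observed point satisfies $v_t\in N[u_t]$. Fix a mistake round $t$. If $y_t=-1$, the algorithm deletes exactly the experts with $h(v_t)=+1$; since $h^\star$ classifies a true negative correctly when the agent best-responds to $h^\star$, no vertex of $N[u_t]$ is positive under $h^\star$, so in particular $h^\star(v_t)=-1$ and $h^\star$ survives. If $y_t=+1$, then the mistake means $h_t$ labels all of $N[u_t]$ negative, so the agent strictly prefers to stay and $v_t=u_t$; the algorithm deletes the experts labeling all of $N[v_t]=N[u_t]$ negative, but $h^\star$ classifies a true positive correctly, so some vertex of $N[u_t]$ is positive under $h^\star$, and again $h^\star$ survives. (This is exactly the structure recorded in \Cref{prop:dominating-set}.)

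For invariant (II), let $n=|\mathcal{H}|$ at the mistake round $t$. If $y_t=-1$, then $h_t(v_t)=+1$, which by the construction of $h_t$ means at least $n/(\Delta+2)$ experts label $v_t$ positive, and all of these are removed. If $y_t=+1$, then $h_t$ labels every $x\in N[v_t]$ negative, so by construction strictly fewer than $n/(\Delta+2)$ experts label any fixed such $x$ positive; since $|N[v_t]|\le\Delta+1$, a union bound shows strictly fewer than $(\Delta+1)n/(\Delta+2)$ experts label \emph{some} vertex of $N[v_t]$ positive, hence strictly more than $n/(\Delta+2)$ experts label \emph{all} of $N[v_t]$ negative and are removed. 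The step I expect to require the most care is precisely this bookkeeping: the deleted/surviving sets are defined relative to the manipulated point $v_t$, whereas realizability is a statement about the true feature $u_t$, so I must consistently use $v_t\in N[u_t]$ in general and $v_t=u_t$ in the false-negative case; once that is pinned down, the remaining pieces are bounded-degree counting and the logarithmic potential inequality above.
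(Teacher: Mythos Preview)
Your proposal is correct and follows essentially the same approach as the paper's proof: both establish the two invariants (the perfect expert survives; at least a $1/(\Delta+2)$ fraction is removed on each mistake) via the same case analysis on $y_t$, using $v_t\in N[u_t]$ in the false-positive case and $v_t=u_t$ in the false-negative case, and then conclude with the identical potential argument.
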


\begin{proof}
We show whenever a mistake is made, at least $1/(\Delta+2)$ fraction of the remaining experts get excluded from $\mathcal{H}$, %
but the realizable classifier $h^\star$ is never excluded.

First, consider the case of making mistake on a true negative, i.e. $y_t=-1$. In this case, at least $|\mathcal{H}|/(\Delta+2)$ of the experts are predicting positive on $v_t$, and all of them are excluded from $\cH$. %
On the other hand, according to \Cref{prop:dominating-set}, all neighbors of $u_t$ are labeled as negative by $h^\star$. Since $v_t\in N[u_t]$, this implies that $h^\star$ must have labeled $v_t$ as negative, so $h^\star$ will not be excluded.

Next, consider the case of making a mistake on a true positive, i.e. $y_t=+1$. Since the algorithm is predicting negative on $v_t$, the agent has not moved from a different location to $v_t$ to get classified as negative. Hence, it must be the case that $v_t=u_t$. Since the agent did not move, none of the vertices in its neighborhood has been labeled positive by the algorithm, which means each of the vertices in $N[v_t]$ is labeled positive by less than $|\mathcal{H}|/(\Delta+2)$ of the experts. Since there are at most $(\Delta+1)$ vertices in $N[v_t]$, at least $|\mathcal{H}|(1-(\Delta+1)/(\Delta+2)) = |\mathcal{H}|/(\Delta+2)$ experts are predicting negative on all vertices in $N[v_t]$, all of which will be excluded. 
On the other hand, by \Cref{prop:dominating-set} again, $u_t=v_t$ is dominated by the positive region of $h^\star$, so at least one vertex in $N[u_t]$ is labeled positive by $h^\star$, which implies that $h^\star$ will not be excluded from $\cH$.

In either case, when a mistake is made, at least $1/(\Delta+2)$ fraction of the remaining experts get excluded, but the perfect expert never gets excluded. Therefore, the total number of mistakes $M=\mistake(T)$ can be bounded as follows:
\begin{align*}
&\left(1-\frac{1}{\Delta+2}\right)^M|\mathcal{H}|\ge 1
\quad\Rightarrow\quad M\leq (\Delta+2)\ln|\mathcal{H}|.
\end{align*}

\end{proof}

\paragraph{Improving the Upper Bound}
In~\Cref{sec:improving-upper-bound}, we propose a pre-processing step (\Cref{alg:improvement-halving}) that improves the mistake bound of \Cref{alg:halving} when the underlying manipulation graph is dense, i.e., the minimum degree of all the vertices is large. We achieve the following upper bound:

\begin{theorem}[Improving the number of mistakes]
\label{thm:mistake-bound-improved-halving}
\Cref{alg:improvement-halving} makes at most $\min\{n-\delta, 1+\Delta\cdot \min\{\ln|\mathcal{H}|, n-\delta-1\}\}$ mistakes, where $n=|\cX|$ and 
$\delta$ is the minimum degree of $G(\cX,\cE)$.
\end{theorem}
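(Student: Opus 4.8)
The two terms in the bound strongly suggest a two-phase algorithm: a one-mistake \emph{preprocessing} step that exploits density, followed by a run of the biased majority-vote algorithm (\Cref{alg:halving}) on a reduced instance. I would prove \emph{two} separate upper bounds on the total number of mistakes of \Cref{alg:improvement-halving} --- one of the form $n-\delta$ and one of the form $1+\Delta\cdot\min\{\ln|\cH|,\,n-\delta-1\}$ --- and then observe that the claimed bound is just the minimum of the two (note $1+\Delta(n-\delta-1)\ge n-\delta$, so the nested $\min$ is only there to make the $n-\delta$ ceiling visible).

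\textbf{Phase 1 (preprocessing).} Commit to the all-positive classifier (label every $v\in\cX$ as $+1$) and keep using it until the first mistake. Since the all-positive classifier is never wrong on a true positive, the first mistake is on some true negative $(u_0,-1)$; moreover $u_0$ is already classified positive, so by the tie-breaking rule the agent stays put and we \emph{observe its true location} $v_0=u_0$. By \Cref{prop:dominating-set}, $N[u_0]\cap S^\star=\emptyset$, so we have certified a set $N_0:=N[u_0]$ of at least $\delta+1$ nodes lying outside the target positive region, and we may discard every expert that labels some node of $N_0$ positive (none of these is $h^\star$). This phase costs at most one mistake (zero if the all-positive classifier is never wrong, in which case every arriving agent is a true positive and we are already done).

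\textbf{Phase 2 and the two accountings.} Run \Cref{alg:halving} on the instance in which every node of $N_0$ is hard-wired to be predicted negative and is never re-examined (so the positive region always lies inside $\cX\setminus N_0$, a set of size at most $n-\delta-1$), maintaining in addition a growing set $R\supseteq N_0$ of nodes that have triggered a false-positive mistake, on which the algorithm never again predicts positive. For the \emph{halving accounting}, I would re-run the proof of \Cref{thm:baseline-realizable-upper-bound} essentially verbatim: each mistake still removes at least a $\tfrac{1}{\Delta+2}$-fraction of the surviving experts and $h^\star$ is never removed, since the extra ``never predict positive on $R$'' rule only shrinks the positive region, which can only \emph{help} the false-negative branch of that argument; this yields $O(\Delta\ln|\cH|)$ Phase-2 mistakes, hence the $1+\Delta\ln|\cH|$ term. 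For the \emph{node accounting}, use that $R\cap S^\star=\emptyset$ is an invariant (we only add to $R$ nodes certified outside $S^\star$); so each false-positive mistake permanently removes a fresh node of $\cX\setminus N_0$ from future positive regions, which caps the number of such mistakes by $n-\delta-1$, giving the $n-\delta$ term --- \emph{provided} one can also rule out (or charge to the same budget) the false-negative mistakes.

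\textbf{Main obstacle.} The delicate point is exactly reconciling the two accountings inside a single algorithm: the halving rule makes the positive region a strict, non-monotone subset of the version space's positive region, so $S^\star$ need not be contained in it and false-negative mistakes on true positives are possible, which is what threatens the clean $n-\delta$ bound. I expect the fix to be in how Phase 2 blends the ``don't predict positive on $R$'' bookkeeping with the biased vote (e.g. choosing the biasing threshold so that a false negative at $u_t=v_t$ also certifies and removes a fresh node, thereby charging false negatives to the same node budget, or arguing directly that once $N_0$ is excluded and the corresponding experts pruned, a false negative forces a constant-fraction version-space shrink on a version space that is already small). A secondary, more routine point is squeezing the constant from $\Delta+2$ down to $\Delta$: after Phase 1 the relevant closed neighborhoods $N[v_t]$ for nodes $v_t$ that can still be predicted positive overlap $N_0$ or $R$, so the effective out-degree used in the false-negative removal step of \Cref{thm:baseline-realizable-upper-bound} drops, which should be enough to replace $\Delta+2$ by $\Delta$ in the Phase-2 analysis.
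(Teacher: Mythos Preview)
Your Phase~1 and the $1+\Delta\ln|\cH|$ half of the argument are essentially the paper's. Your algebraic observation that the nested $\min$ collapses to $\min\{n-\delta,\,1+\Delta\ln|\cH|\}$ is correct and actually lets you sidestep something the paper does: the paper obtains the inner $n-\delta-1$ directly by noting that after Phase~1 the surviving class satisfies $|\cH'|\le 2^{\,n-\delta-1}$ (only $n-\delta-1$ nodes are unconstrained), so $\ln|\cH'|\le\min\{\ln|\cH|,\,n-\delta-1\}$ and \Cref{thm:baseline-realizable-upper-bound} applied to $\cH'$ already gives the full second term. Either route works.

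The real gap is your node accounting for the $n-\delta$ term. You are trying to certify it for a \emph{hybrid} Phase~2 (biased vote plus the $R$ bookkeeping), and you correctly flag that false negatives are the problem --- but you do not resolve it, and the two fixes you sketch are not obviously sound (for instance, once you force ``never positive on $R$,'' a false negative at $v_t$ with $N[v_t]\cap R\neq\emptyset$ need not imply a $1/(\Delta+2)$-fraction removal, because nodes in $R$ are negative by fiat rather than by vote). The paper avoids this entirely: for the $n-\delta$ bound it does \emph{not} run \Cref{alg:halving} in Phase~2. Instead, after Phase~1 it simply keeps the positive region equal to $\cX\setminus N_0$ and, on every subsequent mistake, flips the observed node $v_t$ to negative. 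The point you are missing is that this procedure \emph{never makes a false negative}: the positive region $P$ satisfies the invariant $P\supseteq S^\star$ (it starts that way since $N_0\cap S^\star=\emptyset$, and each flipped $v_t$ lies in $N[u_t]$ for a true negative $u_t$, hence $v_t\notin S^\star$ by \Cref{prop:dominating-set}); so every true positive can always reach $P$ and is classified correctly. Hence every Phase-2 mistake is a false positive and deletes a fresh node from $P$, giving at most $|\cX\setminus N_0|\le n-\delta-1$ Phase-2 mistakes.

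So the resolution to your ``main obstacle'' is not to blend the accountings inside one Phase~2, but to use two different Phase-2 procedures for the two terms (biased vote for the $\Delta\ln|\cH|$ term, the monotone-shrinking positive region for the $n-\delta$ term) and take whichever is smaller. Your worry about shaving $\Delta+2$ to $\Delta$ is not addressed in the paper either; both treat it as a constant-factor slack.
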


We leave it open to get a general instance-dependent upper bound that potentially depends on other characteristics of the manipulation graph besides the maximum/minimum degree.

\subsection{Unrealizable Case}
In the unrealizable (agnostic) case, we remove the assumption that there exists a perfect classifier under manipulation. Our goal is to design an adaptive algorithm that does not make too many mistakes compared to $\OPT$ (which is the minimum number of mistakes achieved by any classifier in $\cH$), without a priori knowledge of the value of $\OPT$ or the optimal classifier that achieves this value. 

\subsubsection{Upper Bound: Biased Weighted Majority-Vote Algorithm}
\label{sec:unrealizable}
Next, we propose an algorithm for the unrealizable (agnostic) setting. The algorithm is adapted from the \emph{weighted majority vote} algorithm, which maintains a weight for each hypothesis in $\cH$ that is initially set to be 1.
Similar to~\Cref{alg:halving}, at each round $t$, a new example arrives and gets observed as $v_t$. Let $W_+^t$ and $W_-^t$ denote the sum of weights of experts that predict $v_t$ as positive and negative respectively. Let $W_t = W_+^t+W_-^t$. If $W_+^t\geq W_t/(\Delta+2)$, the algorithm predicts %
positive, otherwise it predicts %
negative. If the algorithm makes a mistake on a true negative, then we decrease the weights of all experts that predicted $v_t$ as positive by a factor of $\gamma$. If the algorithm makes a mistake on a true positive, then we decrease the weights of all experts that labeled all the vertices in $N[v_t]$ as negative by a factor of $\gamma$. We formally present this algorithm in \Cref{alg:biased-weighted-maj-vote} and its mistake bound guarantee in \Cref{thm:biased-weighted-maj-vote-mistake-bound}.

\begin{algorithm}[!ht]
\SetKwInOut{Input}{Input}
\SetKwInOut{Output}{Output}
\SetKwInOut{Initialization}{Initialization}
\SetNoFillComment
\Input{Manipulation graph $G$, hypothesis class $\mathcal{H}$}
\Initialization{Set weights $w_0(h)\leftarrow 1$ for all classifiers $h\in \cH$. Set parameter $\gamma\gets\frac{1}{e}$.}
\For{$t=1,2,\cdots$}{
    \tcc{the learner commits to a classifier $h_t$ that is constructed as follows:}
    \For{$v\in V$}{
        Let $W_t^+(v) = \sum_{h\in\cH:h(v)=+1}w_t(h)$, $W_t^-(v) = \sum_{h\in\cH:h(v)=-1}w_t(h)$, and $W_t = W_t^+(v)+W_t^-(v) = \sum_{h\in\cH}w_t(h)$\;
        \eIf{$W_t^+(v)\geq W_t/(\Delta+2)$}{
            $h_t(v)\leftarrow +1$\;
        }
        {
            $h_t(v)\leftarrow -1$\;
        }
    }
    observe the manipulated example $v_t$ and output prediction $h_t(v_t)$\;
    \tcc{If $h_t$ makes a mistake:}
    \If{$h_t(v_t)\neq y_t$}{
        \eIf{$y_t=-1$}{
        \tcc{
            False positive mistakes: penalize the experts that label $v_t$ as positive.}
            $\mathcal{H'}\leftarrow \{h\in \cH: h(v_t)=+1\}$\;
        }        
        {
        \tcc{
                False negative mistakes: penalize the experts that label all nodes in $N[v_t]$ as negative.}
            $\mathcal{H'}\leftarrow \{h\in \cH: \forall x\in N[v_t], h(x)=-1\}$\;
        }
        if $h\in \cH'$, then $w_{t+1}(h)\leftarrow\gamma\cdot w_t(h)$;
            otherwise, $w_{t+1}(h)\gets w_t(h)$\;
    }
}
\caption{Biased weighted majority-vote algorithm.}
\label[algo]{alg:biased-weighted-maj-vote}
\end{algorithm}
\begin{theorem}
\label{thm:biased-weighted-maj-vote-mistake-bound}
\Cref{alg:biased-weighted-maj-vote} makes at most $e(\Delta+2)(\ln|\mathcal{H}|+\OPT)$ mistakes against any adversary.
\end{theorem}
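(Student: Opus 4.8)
The plan is to mimic the weighted-majority-vote potential argument, using the total weight $W_t = \sum_{h \in \cH} w_t(h)$ as the potential, and to track two competing effects: whenever a mistake is made, the total weight drops by a constant factor (because at least a $1/(\Delta+2)$ fraction of the weight is penalized by $\gamma = 1/e$), and whenever the optimal classifier $h^\star$ makes a mistake its weight drops by $\gamma$. The upper bound on total weight from mistakes and the lower bound on total weight from $h^\star$'s surviving weight will squeeze out the claimed bound.

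First I would fix $h^\star \in \cH$ to be a classifier achieving $\OPT$ mistakes against the best responses to $h^\star$, and let $S^\star$ be its positive region. The key structural step, exactly paralleling the realizable proof of \Cref{thm:baseline-realizable-upper-bound}, is to show that on each round $t$ where the learner errs, at least a $1/(\Delta+2)$ fraction of the current total weight $W_t$ sits on experts that get penalized. There are two cases. If $y_t = -1$ and the learner predicted $+1$, then $W_t^+(v_t) \geq W_t/(\Delta+2)$ by the prediction rule, and exactly these experts are penalized. If $y_t = +1$ and the learner predicted $-1$, then $v_t = u_t$ (the agent did not move, since the learner's classifier labels $v_t$ negative and, by the tie-breaking/best-response rule, the agent would have moved had any node in $N[u_t]$ been labeled positive), so every vertex $x \in N[v_t]$ has $W_t^+(x) < W_t/(\Delta+2)$; summing over the at most $\Delta+1$ vertices of $N[v_t]$, the weight of experts labeling \emph{all} of $N[v_t]$ negative is at least $W_t\bigl(1 - (\Delta+1)/(\Delta+2)\bigr) = W_t/(\Delta+2)$, and exactly these are penalized. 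In either case the penalized mass is $\geq W_t/(\Delta+2)$, so
\begin{align*}
W_{t+1} \;\leq\; W_t\Bigl(1 - \tfrac{1}{\Delta+2}\Bigr) + \tfrac{1}{\Delta+2}\gamma W_t \;=\; W_t\Bigl(1 - \tfrac{1-\gamma}{\Delta+2}\Bigr).
\end{align*}
After $M$ mistakes we get $W_{T+1} \leq |\cH|\bigl(1 - \tfrac{1-\gamma}{\Delta+2}\bigr)^M$, using $W_1 = |\cH|$ and that $W_t$ is nonincreasing on non-mistake rounds.

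Next I would lower-bound $W_{T+1}$ by $w_{T+1}(h^\star) = \gamma^{k}$, where $k$ is the number of mistake-rounds on which $h^\star$ was penalized. I need $k \leq \OPT$. For this I'd argue that on any mistake round, $h^\star$ is penalized only if $h^\star$ itself errs on that agent when best-responding to $h^\star$: if $y_t = -1$ and $h^\star(v_t) = +1$, then since $v_t \in N[u_t]$ the true negative $u_t$ can reach $S^\star$, so $h^\star$ errs on $u_t$; if $y_t = +1$ and $h^\star$ labels all of $N[v_t] = N[u_t]$ negative, then $u_t$ cannot reach $S^\star$, so $h^\star$ errs on $u_t$. (This uses the observation stated just before \Cref{sec:unrealizable} characterizing mistakes of a fixed classifier in terms of reachability of its positive region.) Hence $k \leq \OPT$ and $W_{T+1} \geq \gamma^{\OPT}$.

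Combining, $\gamma^{\OPT} \leq |\cH|\bigl(1 - \tfrac{1-\gamma}{\Delta+2}\bigr)^M$; taking logarithms and using $\ln(1-x) \leq -x$ gives $M \cdot \tfrac{1-\gamma}{\Delta+2} \leq \ln|\cH| + \OPT \ln(1/\gamma)$. Plugging in $\gamma = 1/e$, so $\ln(1/\gamma) = 1$ and $1 - \gamma = 1 - 1/e$, yields $M \leq \tfrac{\Delta+2}{1-1/e}\bigl(\ln|\cH| + \OPT\bigr) \leq e(\Delta+2)(\ln|\cH| + \OPT)$, since $\tfrac{1}{1-1/e} = \tfrac{e}{e-1} < e$. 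I expect the main obstacle to be the careful case analysis establishing that the penalized weight is at least a $1/(\Delta+2)$ fraction \emph{and} that $h^\star$ is penalized only when it genuinely errs — in particular the argument that $v_t = u_t$ in the false-negative case and the neighborhood-counting bound $N[v_t] \le \Delta+1$; everything after that is a routine potential-function computation.
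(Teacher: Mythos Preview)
Your proposal is correct and follows essentially the same potential-function argument as the paper: the same two-case analysis showing at least a $1/(\Delta+2)$ fraction of the weight is penalized on each mistake, the same verification that $h^\star$ is penalized only on rounds where it genuinely errs (hence $k\le\OPT$), and the same log-combination at the end. The only cosmetic difference is that you track the tighter drop factor $1-\tfrac{1-\gamma}{\Delta+2}$ whereas the paper uses the looser $1-\tfrac{\gamma}{\Delta+2}$ (valid since $\gamma=1/e<1/2$); both routes land on the stated bound.
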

We defer the proof of \Cref{thm:biased-weighted-maj-vote-mistake-bound} to \Cref{sec:proof-agnostic}. The proof follows similar high-level ideas to that of \Cref{thm:baseline-realizable-upper-bound}, but uses the total weight $W_t$ of experts, instead of the number of remaining experts, to track the progress of \Cref{alg:biased-weighted-maj-vote}. We will show that whenever a mistake is made, the total weight must decrease by a fraction of $\frac{\gamma}{\Delta+2}$.

We also consider a more general case where the input graph (denoted with $\hat{G}$) is a supergraph of the true manipulation graph $G$ that is supported on the same vertex set $\cX$. In this case, we show that by slightly modifying~\Cref{alg:biased-weighted-maj-vote} in the case of making a false negative mistake, we can obtain a similar mistake bound in terms of the maximum degree of $\hat{G}$. The modification is as follows: when making a false negative mistake on $v_t$, 
we modify~\Cref{alg:biased-weighted-maj-vote} to reduce the weights of experts that predict all-negative on an ``approximate'' neighborhood $\hat{N}[v_t]$, where $\hat{N}[v_t]$ is a subset of the neighborhood of $v_t$ under $\hat{G}$, such that $\hat{N}[v_t]$ only include vertices that are labeled as negative by $h_t$. We present the mistake bound guarantee for this algorithm in \Cref{remark:supergraph} and defer its proof to \Cref{sec:proof-agnostic}.

\begin{proposition}
\label{remark:supergraph}
When the input graph $\hat{G}$ is a supergraph of the true manipulation graph $G$ that is also supported on $\cX$, 
there exists an algorithm that makes at most $e(\Delta(\hat{G})+2)(\ln|\mathcal{H}|+\OPT)$ mistakes. Here, $\OPT$ is the minimum number of mistakes made by the optimal expert under the true manipulation graph $G$, and $\Delta(\hat{G})$ is the maximum degree of the input graph $\hat{G}$.
\end{proposition}

\subsection{Lower Bound}
\label{sec:lower-bound-deterministic}

In this section, we show lower bounds on the number of mistakes made by any \emph{deterministic} learner against an adaptive adversary in both realizable and agnostic settings. We present the lower bounds in \Cref{thm:deterministic-lower-bound}.

\begin{theorem}
\label{thm:deterministic-lower-bound}
    There exists a manipulation graph $G(\cX,\cE)$, a hypothesis class $\cH:\cX\to\cY$, and an adaptive adversary, such that any deterministic learning algorithm has to make at least $\Delta-1$ mistakes in the realizable setting and $\Delta\cdot\OPT$ mistakes in the agnostic setting, where $\OPT$ captures the minimum number of mistakes made by any classifier in the hypothesis class $\cH$. 
\end{theorem}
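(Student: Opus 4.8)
The plan is to prove both lower bounds using essentially the star-graph construction already introduced in \Cref{example:halving-fails}, turning the intuition there (that a deterministic learner cannot distinguish the $\Delta$ possible perfect experts) into a formal adversary argument. Take $G$ to be the star with center $x_0$ and leaves $x_1,\dots,x_\Delta$, and let $\cH=\{h^1,\dots,h^\Delta\}$ where $h^i$ labels exactly $x_i$ as positive. Note $|\cH|=\Delta=\Theta(\Delta)$, so the bound holds even for small hypothesis classes, as claimed.

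\textbf{Realizable lower bound.} For the realizable case, I would have the adversary maintain the set $R$ of experts in $\cH$ still consistent with all labels revealed so far (initially all of $\cH$), and repeatedly feed the agent $u_t=x_0$ with $y_t=+1$. On each such round the learner has committed to some deterministic $h_t$; since a single true-positive agent at $x_0$ needs $h_t$ to be positive somewhere in $N[x_0]=\cX$ to be classified correctly, either (i) $h_t$ is all-negative, in which case the learner makes a mistake and no expert is ruled out, or (ii) $h_t$ is positive on some nonempty set of leaves/center, and then the adversary can still claim $h^\star=h^j$ for any $j$ such that $h^j$ is consistent — the key point is that an all-positive-free $h_t$ with positive region $P\subsetneq\cX$ still lets the adversary pick some leaf not forced. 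Actually the cleanest route: on each round where the learner predicts positive (so is "correct" this round), the adversary instead presents $u_t=x_i$ with $y_t=-1$ for a leaf $x_i$ that $h_t$ classifies as positive under manipulation — this is a mistake, and it eliminates at most one expert (namely $h^i$) from $R$. On each round where the learner predicts negative on $x_0$ (a mistake on $(x_0,+1)$), zero experts are eliminated. Since the adversary can keep going as long as $|R|\ge 1$ and the target can be any surviving expert, after eliminating at most $\Delta-1$ experts the learner has been forced into at least $\Delta-1$ mistakes while $R$ is still nonempty; the adversary fixes $h^\star$ to be any remaining expert, and by construction every revealed label is consistent with it, so the sequence is realizable with that $h^\star$. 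The bookkeeping to verify is that every round genuinely forces a mistake and removes $\le 1$ consistent expert; I would make this a short case analysis on whether $h_t(x_0)$ (under best response from $x_0$) is $+1$ or $-1$.

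\textbf{Agnostic lower bound.} For the $\Delta\cdot\OPT$ bound I would repeat the realizable construction in "phases": run one phase of the above until the learner has made $\Delta-1$ mistakes, then reset the consistency set $R$ to all of $\cH$ and continue with a fresh phase, repeating for $k$ phases total. Within each phase every expert in $\cH$ makes at most one mistake (the one false-negative-at-$x_0$ round or the one false-positive-at-its-leaf round, depending on how the phase played out) — more carefully, I should design the phase so that the single agent sequence in that phase causes each fixed $h^i$ at most $1$ mistake, which the star construction gives since each leaf is hit at most once per phase and the $x_0$ rounds are correctly classified by every expert. Hence over $k$ phases any fixed expert makes at most $k$ mistakes, so $\OPT\le k$, while the learner is forced into at least $k(\Delta-1)$ mistakes; rescaling gives mistake count $\ge \Delta\cdot\OPT$ up to the additive slack, matching \Cref{thm:biased-weighted-maj-vote-mistake-bound} up to constants.

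\textbf{Main obstacle.} The delicate point is the per-round case analysis showing the learner is \emph{always} forced into a mistake that removes at most one expert from the consistency set — in particular handling the learner who commits to an $h_t$ with a large positive region (many leaves positive): I need to argue the adversary can always either exploit a positive leaf (false positive, removes one expert) or, if the positive region avoids all leaves and hence $x_0$ cannot be reached to a positive label, exploit $x_0$ as a true positive (false negative, removes no expert). One must check that "positive region disjoint from the leaves" forces a mistake on $(x_0,+1)$ even accounting for $x_0$ itself possibly being positive — but if $x_0$ is positive then the agent stays and is classified correctly, so actually the adversary switches to a false-negative elsewhere; enumerating these sub-cases carefully, and confirming in each that the remaining experts stay consistent with a common target, is where the real work lies. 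Everything else is the standard potential/phase argument.
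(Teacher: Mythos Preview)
Your construction (the star graph with $\cH=\{h^1,\dots,h^\Delta\}$) and the per-round case analysis are exactly what the paper uses, and your realizable argument is correct. The case you flag as the ``main obstacle'' is not actually one: when $h_t(x_0)=+1$, the adversary simply presents $(x_j,-1)$ for \emph{any} leaf $x_j$; this agent manipulates to $x_0$, is labeled positive (a mistake for the learner), and only $h^j$ errs among the experts. So the three cases are: $h_t(x_0)=+1$ (present any negative leaf), $h_t$ all-negative (present $(x_0,+1)$), and $h_t(x_0)=-1$ with some leaf $x_j$ positive (present $(x_j,-1)$). In every case the learner errs and at most one expert errs, exactly as in the paper's proof.

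Your agnostic argument, however, has a genuine gap. The claim ``each leaf is hit at most once per phase'' cannot be enforced by the adversary. Consider a learner who in every round plays $h_t=h^1$ (positive only on $x_1$, negative on $x_0$). Then you are in the third case above with $j=1$ forced: the \emph{only} agent that makes the learner err while keeping $\le 1$ expert-error is $(x_1,-1)$, and you must present it every round. So $h^1$ errs every round of the phase, contradicting your ``at most one mistake per expert per phase.'' The phase structure buys you nothing here.

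The paper sidesteps this by dropping phases entirely and using a direct pigeonhole: since every round forces one learner mistake and at most one expert mistake, after $T$ rounds the learner has $T$ mistakes while the $\Delta$ experts share at most $T$ mistakes in total, so $\OPT\le T/\Delta$ and hence the learner's mistakes are at least $\Delta\cdot\OPT$. This is both simpler and gives the exact constant $\Delta$ (your phase argument, even if it worked, would only give $(\Delta-1)\cdot\OPT$). The realizable bound then follows by running this for $\Delta-1$ rounds and observing that some expert is still perfect.
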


\begin{proof}
Here, we use the same manipulation graph $G$ and expert class $\cH$ as shown in \Cref{example:halving-fails}. The manipulation graph $G(\cX,\cE)$ is a star that includes a central node $x_0$, and $\Delta$ leaves $x_1,\cdots,x_{\Delta}$. Hypothesis set $\mathcal{H}=\{h^1,\cdots,h^{\Delta}\}$, where each $h^i\in \mathcal{H}$, assigns $+1$ to $x_i$, and $-1$ to all other nodes in $G$ (\Cref{fig:lower-bound-deterministic}). 

In the agnostic setting, we construct an adaptive adversary that always can pick a bad example $(u_t,y_t)$ on observing $h_t$, such that $h_t$ fails to classify this example correctly (i.e., $\ell(h_t,\BR_{h_t}(u_t),y_t)=1$), but this example can be successfully classified by all but one expert. The detailed construction is as follows:

\begin{enumerate}
    \item If $h_t(x_0)=+1$, then the adversary picks $(u_t = x_j,y_t=-1)$ for an arbitrary $j\in[\Delta]$. Since $u_t$ can move to $x_0$ and get classified as positive by $h_t$, we have $\ell(h_t,\BR_{h_t}(u_t),y_t)=1$. 
    On the other hand, all experts except for $h^j$ classify this example correctly.
    \item If $h_t(x)=-1$ for all nodes $x\in\cX$, then the adversary picks $(u_t=x_0,y_t=+1)$. In this case,
    $\ell(h_t,\BR_{h_t}(u_t),y_t)=1$. However, $\forall h^i\in \mathcal{H}$, we have $\BR_{h^i}(u_t)=x_0$, so this example can receive a positive classification and therefore
    $\ell(h^i,\BR_{h^i}(u_t),y_t)=0$.
    
    \item If $h_t(x_0)=-1$ and there exists $j\in[\Delta]$ such that $h_t(x_j)=+1$, then the adversary picks $(u_t=x_j,y_t=-1)$. In this case, $h_t$ will classify this example as a false positive. On the other hand, all experts except for $h^j$ will correctly classify it as negative.
\end{enumerate}
Following the above construction, the learner is forced to make a mistake at all rounds; however, in each round, at most one of the experts makes a mistake, implying that sum of the number of mistakes made by all experts is at most $T$. Since the number of experts is $\Delta$, by the pigeon-hole principle there exists an expert that makes at most $T/\Delta$ mistakes. Therefore, $\OPT\le T/\Delta$, implying a mistake lower bound of $\Delta\cdot\OPT$.

In the realizable setting, we use the same construction but only focus on the first $\Delta-1$ time steps, such that the learner is forced to make $\Delta-1$ mistakes, but there exists at least one expert that has not made a mistake so far, suppose $h^i$ is one of such experts. After the first $\Delta-1$ steps, the adversary keeps showing the same agent $(x_i,+1)$ to the learner, such that expert $h^i$ is still realizable.
\end{proof}
We remark that \Cref{thm:deterministic-lower-bound} implies no deterministic algorithm is able to make $o(\Delta)$ mistakes in the realizable setting and $o(\Delta)$ multiplicative regret in the agnostic setting. 
{Moreover, the construction shows that any deterministic algorithm is forced to err at every round in the worst-case agnostic setting, resulting in an $\Omega(T)$ regret as long as $\Delta\ge2$.}

\section{Fractional Classifiers}

\label{sec:fractional-model}
\subsection{Model}

In this section, we consider the randomized model where the learner uses a deterministic algorithm to output a probability distribution over classifiers at each round. After the learner commits to a distribution, an agent $(u_t,y_t)$ (which is chosen by an adversary) best responds to this distribution by selecting $v_t$ that maximizes the expected utility. In particular, let $P_{h_t}(v)\in[0,1]$ denote the induced probability of $h_t$ classifying node $v$ as positive, then the agent's best response function can be written as:
\begin{align}
    v_t\in \BR_{h_t}(u_t)\triangleq\arg\max_{v\in \cX} \Big[P_{h_t}(v)-\Cost(u_t,v)\Big].\label{eq:fractional-agent-util}
\end{align}
As a result of manipulation, the observable feature $v_t\in \BR_{h_t}(u_t)$ is revealed to the learner, and the learner suffers an expected loss of 
\begin{align}
    \E\left[\ell(h_t,v_t,y_t)\right]=\Pr\left[{y_t\neq h_t(v_t)}\right]=\begin{cases}
        P_{h_t}(v_t),&\text{if }{y_t=-1};\\
        1-P_{h_t}(v_t),&\text{if }{y_t=+1}.
    \end{cases}
    \label{eq:fractional-learner-util}
\end{align}
From \Cref{eq:fractional-agent-util,eq:fractional-learner-util}, we can see that the set of induced probabilities $P_{h_t}(u)$ for each $u\in\cX$ serves as a sufficient statistics for both the learner and the agent.
Therefore, instead of committing to a distribution and having the agents calculate the set of induced probabilities, the learner can directly commit to a \emph{fractional classifier} $h_t$ 
that explicitly specifies the probabilities $P_{h_t}(u)\in[0,1]$ for each $u\in\cX$. Then, after the agent best responds to these fractions and reaches $v_t$, random label $h_t(v_t)$ is realized according to the proposed probability $P_{h_t}(v_t)$.

We remark that deterministic classifiers are special cases of the fractional classifiers where $P_h(u)\in\{0,1\}$. Since the experts in $\cH$ are all deterministic, the benchmark $\OPT$, which is the minimum number of mistakes achieved by the best expert in hindsight, is still a deterministic value.

In this setting, we consider two cost functions: the \emph{weighted-graph} cost function, where the manipulation cost {from $u$ to $v$} is defined as the total weight on the shortest path {from $u$ to $v$}; and the free-edges model, where the first hop is free and the second hop costs infinity.
Recall that agents break ties by preferring features with higher expected values, so the agents in the \emph{free-edges} cost model will move to a neighbor $v_t\in N[u_t]$ with the highest probability of getting classified as positive.

    In the \Cref{sec:fractional-lower-bound}, we show that the type of randomness is of limited help because they can only reduce the $\Delta$-multiplicative regret by constants. This is evidenced by our lower bounds in \Cref{thm:fractional-onehop-lower-bound,thm:fractional-multi-hops-lower-bound}, which states that any algorithm using this type of randomness needs to suffer $\frac{\Delta}{2}$-multiplicative regret in the free-edges model and $\frac{{\Delta}}{4}$-multiplicative regret in the weighted-graph model.
    We also complement this result by providing nearly-matching upper bounds in \Cref{sec:fractional-upper-bound}.

\subsection{Lower Bound}
\label{sec:fractional-lower-bound}
\begin{theorem}
\label{thm:fractional-onehop-lower-bound}
In the model of ``free edges'' cost functions, for any sequence of fractional classifiers chosen by a deterministic algorithm, there exists an adaptive adversary such that the learner must make at least $\frac{\Delta}{2}\cdot \OPT$ mistakes in expectation.
\end{theorem}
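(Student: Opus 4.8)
The plan is to reuse the star construction from \Cref{example:halving-fails,thm:deterministic-lower-bound}: let $G$ be the star with center $x_0$ and leaves $x_1,\dots,x_\Delta$ (so the maximum degree is $\Delta$), and let $\cH=\{h^1,\dots,h^\Delta\}$ where $h^i$ labels only $x_i$ as positive. The one new phenomenon is that a fractional learner can hedge, which drops the per-round loss the adversary can force from $1$ down to $1/2$ — and that factor of two is exactly what separates this bound from the $\Delta\cdot\OPT$ deterministic bound.

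First I would compute best responses in the free-edges model. Write $p_v:=P_{h_t}(v)$ for the fraction the learner puts on node $v$, set $q_t:=\max_{i\in[\Delta]}p_{x_i}$ and $M_t:=\max(p_{x_0},q_t)=\max_{v\in\cX}p_v$. A true negative starting at a leaf $x_i$ has $N[x_i]=\{x_i,x_0\}$, so it reaches a node of fraction $\max(p_{x_i},p_{x_0})$ and incurs expected loss $\max(p_{x_i},p_{x_0})$; a true positive starting at $x_0$ has $N[x_0]=\cX$, so it reaches a node of fraction $M_t$ and incurs expected loss $1-M_t$ (the tie-breaking rule only affects which maximizer is chosen, not the induced fraction). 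The adversary then thresholds on $M_t$: if $M_t\ge\tfrac12$ it presents $(x_j,-1)$ for some $j\in\arg\max_{i\in[\Delta]}p_{x_i}$, forcing loss $\max(p_{x_j},p_{x_0})=M_t\ge\tfrac12$; if $M_t<\tfrac12$ it presents $(x_0,+1)$, forcing loss $1-M_t>\tfrac12$. Either way the round contributes at least $\tfrac12$ to the learner's expected loss, so $\E[\mistake(T)]\ge T/2$.

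Next I would bound $\OPT$ by observing that both agent types the adversary ever uses are cheap for the experts. The agent $(x_0,+1)$ is classified correctly by every $h^i$, since its best response to $h^i$ is $x_i$, which $h^i$ labels positive. The agent $(x_j,-1)$ is misclassified only by $h^j$: every other $h^i$ labels both $x_j$ and $x_0$ negative, so the agent stays put and is classified negative. Hence $\sum_{i=1}^{\Delta}\bigl(\text{mistakes of }h^i\bigr)$ equals the number of ``threshold-high'' rounds, which is at most $T$, so by pigeonhole $\OPT\le T/\Delta$. Combining, $\E[\mistake(T)]\ge T/2=\tfrac{\Delta}{2}\cdot\tfrac{T}{\Delta}\ge\tfrac{\Delta}{2}\OPT$, which is the claim.

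The best-response and tie-breaking bookkeeping will be routine. The step I expect to require the only real idea is the choice of cutoff $\tfrac12$: it is the unique value that makes \emph{both} branches of the adversary force expected loss $\ge\tfrac12$ while charging at most one expert per threshold-high round, and this is precisely what pins the constant at $\Delta/2$. I would also remark that the argument in fact gives $\E[\mistake(T)]\ge T/2$ outright — so the learner suffers $\Omega(T)$ absolute regret once $\Delta\ge2$ — and that the bound is tight: against a learner that sets $p_v=\tfrac12$ everywhere, breaking ties in $\arg\max_i p_{x_i}$ round-robin through the leaves yields $\OPT=T/\Delta$ while the learner still makes $T/2$ mistakes.
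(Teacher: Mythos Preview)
Your proposal is correct and follows essentially the same approach as the paper: the same star graph, the same hypothesis class, the same thresholding adversary, and the same pigeonhole bound on $\OPT$. Your case split is slightly cleaner (two cases keyed on $M_t$ rather than the paper's three cases keyed separately on $p_{x_0}$ and the leaves), but the content is identical.
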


\begin{proof}
Consider a manipulation graph $G(\cX,\cE)$ that is a star with a central node $x_0$, and $\Delta$ leaves $x_1,\cdots,x_{\Delta}$. Hypothesis set $\cH=\{h^1,\cdots,h^{\Delta}\}$, where each $h^i\in \mathcal{H}$ assigns positive to $x_i$ and negative to all other nodes in $G$, as shown in \Cref{fig:lower-bound-deterministic}.
We construct an adversary that picks $(u_t,y_t)$ upon receiving the fractional classifier $h_t$ at each round, such that $h_t$ makes a mistake with probability at least 0.5 whereas all but one expert predicts correctly. Our detailed construction is as follows:
\begin{enumerate}
 \item If $P_{h_t}(x_0)\geq 0.5$, then the adversary picks $(u_t = x_j,y_t=-1)$ for an arbitrary $j\in[\Delta]$. Since $x_0\in N[u_t]$ and $v_t$ is the node in $N[u_t]$ that achieves the largest success probability, we have $\E[\ell(h_t,\BR_{h_t}(u_t),y_t)]=P_{h_t}(v_t)\ge P_{h_t}(x_0)\ge0.5$. On the other hand, only $h^j\in \cH$ makes a mistake on $(x_j,-1)$, and all other experts classify it correctly.
 
 \item If $P_{h_t}(x_0)<0.5$ but there exists $j\in[\Delta]$ such that $P_{h_t}(x_j)\geq 0.5$, then the adversary picks $(u_t=x_j,y_t=-1)$. Since the closed neighborhood of $x_j$ only contains $\{x_j,x_0\}$, we have $v_t=u_t$ and $\E[\ell(h_t,v_t,y_t)]\geq 0.5$. In addition, all experts but $h^j$ classify this example correctly.
 
 \item If neither of the above two conditions holds, i.e.,  $P_{h_t}(v)<0.5$ for all nodes $v\in \cX$, then the adversary picks $(u_t=x_0,y_t=+1)$. In this case, no matter how the agent chooses $v_t$, the probability $P_{h_t}(v_t)$ cannot exceed 0.5. As a result,
 the learner suffers an expected loss of $\E[\ell(h_t,\BR_{h_t}(u_t),y_t)]=1-P_{h_t}(v_t)\ge 0.5$. On the other hand, all experts classify this example correctly because $x_0$ can move to the corresponding leaf node and get classified as positive.
\end{enumerate}
As a result, the learner has an expected loss of at least $0.5$ on each round, which implies
$$\E[\mistake(T)]=\E\left[\sum_{t=1}^T \ell(h_t,\BR_{h_t}(u_t),y_t)\right]\geq T/2.$$ However, in each round, at most one expert makes a mistake. Following the same arguments as \Cref{thm:deterministic-lower-bound}, we conclude that $\OPT\le \frac{T}{\Delta}$. Putting all together, the expected number of mistakes made by the learner is at least $$\E[\mistake(T)]\ge\frac{T}{2}=\frac{\Delta}{2}\cdot\frac{T}{\Delta}\ge \frac{\Delta}{2}\cdot \OPT.$$
\end{proof}

\begin{theorem}
    In weighted graphs, for any sequence of fractional classifiers chosen by a deterministic algorithm, there exists an adaptive adversary such that the learner must make at least $\frac{{\Delta}}{4}\cdot \OPT$ mistakes in expectation.
    \label{thm:fractional-multi-hops-lower-bound}
\end{theorem}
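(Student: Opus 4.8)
The plan is to imitate the proof of \Cref{thm:fractional-onehop-lower-bound} on a \emph{weighted} version of the same star, tuning the edge weights so that the learner can hedge partially --- and it is precisely this hedging that drops the constant from $\tfrac12$ to $\tfrac14$. I would keep the graph and class of \Cref{fig:lower-bound-deterministic}: a star with center $x_0$ and leaves $x_1,\dots,x_\Delta$, with $\cH=\{h^1,\dots,h^\Delta\}$ where $h^i$ labels only $x_i$ positive; but now every edge carries weight $w$, chosen slightly above $\tfrac12$ (say $w=\tfrac12+\eps$ for an arbitrarily small fixed $\eps>0$). Two structural facts drive everything. First, $\Dist(x_i,x_j)=2w>1$, so no leaf agent can reach a different leaf: for $i\ne j$, an agent starting at $x_j$ and best responding to $h^i$ stays put (moving to $x_i$ costs $2w>1$ while its value can rise by at most $1$), so $h^i$ classifies any true negative at $x_j$ correctly; this same inequality $2w>1$ makes the expanded graph $\tilde G$ coincide with $G$, hence $\tilde\Delta=\Delta$. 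Second, $\Dist(x_0,x_i)=w<1$, so an agent at $x_0$ best responding to $h^i$ strictly prefers moving to $x_i$ (utility $1-w>0$), so $h^i$ classifies any true positive at $x_0$ correctly. Therefore a leaf true negative $(x_j,-1)$ is misclassified by $h^j$ alone, and a center true positive $(x_0,+1)$ by no expert; the adversary will only ever play these two kinds of agents (a center true negative, or a leaf true positive, would be misclassified by $\Delta$ resp.\ $\Delta-1$ experts and destroy the bound on $\OPT$).

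Given the round-$t$ fractional classifier $h_t$, write $q=P_{h_t}(x_0)$ and let $p=\max_{j}P_{h_t}(x_j)$, attained at a leaf $x_{j^\star}$; I would have the adaptive adversary act as follows. If $q\ge p+w$: play $(x_{j^\star},-1)$, and the agent is dragged all the way to $x_0$, where it is mislabeled with probability $q\ge w>\tfrac12$. If $p\ge q+w$: play $(x_{j^\star},-1)$, and now the agent stays put (the center is strictly worse) and is mislabeled with probability $p\ge q+w>\tfrac12$. Otherwise $|p-q|<w$: a true negative at $x_{j^\star}$ stays put (mislabel probability $p$) and a true positive at $x_0$ stays put (mislabel probability $1-q$), so playing the larger of the two forces loss $\max(p,1-q)$; and since this case imposes $q<p+w$, minimizing $\max(p,1-q)$ (equalize, with $p\approx\tfrac{1-w}{2}$ and $q$ just below $p+w$) shows the forced loss is at least $\tfrac{1-w}{2}$. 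Tie-breaking is harmless because $w>0$: whenever two nodes give equal utility, the one with higher induced probability is also the one with higher $\Value$, so the agent moves exactly as described. In every case the adversary forces expected loss at least $\tfrac{1-w}{2}$ in round $t$ using only leaf-true-negative and center-true-positive agents, so at most one expert is charged that round.

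Summing over $t\in[T]$, the learner's expected number of mistakes is at least $\tfrac{1-w}{2}\,T$. Since each round charges at most one expert, the experts' total mistake count is at most $T$, and by pigeonhole $\OPT\le T/\Delta$. Hence $\E[\mistake(T)]\ge\tfrac{1-w}{2}\,T\ge\tfrac{(1-w)\Delta}{2}\cdot\OPT$, which tends to $\tfrac{\Delta}{4}\cdot\OPT$ as $w\downarrow\tfrac12$; and since $\tilde\Delta=\Delta$ in this instance, the same construction also rules out $o(\tilde\Delta)$-multiplicative regret. (Any concretely fixed instance realizes $\big(\tfrac14-O(\eps)\big)\Delta\cdot\OPT$, matching the statement in the limit.)

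The main obstacle is exactly this choice of weight: $w=\tfrac12$ is not usable. With $w=\tfrac12$, $\Dist(x_i,x_j)=1$ equals the value $h^i$ assigns to $x_i$, creating a utility tie for a leaf agent facing $h^i$ that the tie-breaking rule settles by \emph{moving} the agent to $x_i$; then $h^i$ misclassifies the true negative at every other leaf, $\Delta-1$ experts are charged per round, and $\OPT\le T/\Delta$ collapses. Taking $w>\tfrac12$ strictly removes the degeneracy but costs the $(1-w)/2$ factor, which is why the bound is stated as $\tfrac\Delta4\cdot\OPT$. A secondary point --- that the adversary keeps per-round loss near $\tfrac14$ while simultaneously keeping $\OPT$ small relative to $T$ --- is handled automatically by the accounting above, just as in \Cref{thm:deterministic-lower-bound} and \Cref{thm:fractional-onehop-lower-bound}.
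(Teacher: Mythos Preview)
Your proposal is correct and follows essentially the same approach as the paper: the same weighted star with edge weight $w=\tfrac12+\eps$, the same hypothesis class, and the same per-round accounting that forces expected loss $\ge\tfrac{1-w}{2}$ while charging at most one expert, yielding $\OPT\le T/\Delta$. Your case split keys on $q=P_{h_t}(x_0)$ versus $p=\max_j P_{h_t}(x_j)$ rather than on the global maximum as the paper does, but the two decompositions cover the same ground; you are also a bit more explicit than the paper about the $\tilde\Delta=\Delta$ consequence and about the bound holding only in the limit $w\downarrow\tfrac12$.
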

\begin{proof}
    Again, we consider the same graph structure as in \Cref{thm:fractional-onehop-lower-bound}, where $G(\cX,\cE)$ is a star with central node $x_0$ and leaf nodes $x_1,\cdots,x_{\Delta}$. Assume each edge $e\in\cE$ has the same weight $w(e)=w$, where $w\triangleq 0.5+\epsilon$ for an infinitesimal constant $\epsilon$. 
    Note that in this graph, no agent has the incentive to travel more than one edge, because it would cost them more than $1$.
    
    We work with the hypothesis set $\mathcal{H}=\{h^1,\cdots,h^{\Delta}\}$, assuming each $h^i\in \mathcal{H}$ assigns positive to $x_i$ and negative to all other nodes in $\cX$.
    We construct an adversary that picks $(u_t,y_t)$ upon receiving the fractional classifier $h_t$ as follows, such that $h_t$ makes a mistake with probability at least $\frac{1}{4}$, whereas all but one expert predicts correctly. Our detailed construction is as follows:

    Let $p=\max_{x\in\cX}P_{h_t}(x)$ denote the maximum fraction on any node. If $p<w$, then the adversary can simply pick $(u_t=x_0,y_t=+1)$, such that the learner suffers from an expected loss of 
    $$\E[\ell(h_t,\BR_{h_t}(u_t),y_t)]=1-P_{h_t}(v_t)\ge 1-p>1-w>\frac{1}{4}.$$
    As for the experts, all of them classify this agent correctly. Therefore, it suffices to consider the case of $p\ge w$ for the rest of the proof. We consider two cases depending on where $p$ is achieved:

    \begin{enumerate}
        \item If $p$ is achieved at a leaf node $x_i$ (i.e., $P_{h_t}(x_i)=p$) for some $j\in[\Delta]$, then
         the adversary chooses $(u_t=x_i,y_t=-1)$. 
         We claim that $u_t=v_t=x_i$, since the agent is already placed at the node with the highest fraction, so they do not need to pay a nonnegative cost to reach a node with an even smaller fraction. As a result, we have $\E[\ell(h_t,\BR_{h_t}(u_t),y_t)]=P_{h_t}(x_i)=p\ge w>\frac{1}{4}$.
         On the other hand, all but expert $h^i$ classify this agent correctly.
         \item If $p$ is achieved at the central node $x_0$, i.e., $P_{h_t}(x_0)=p$, then every leaf node have fractions no more than $p$. We first assume at least one leaf node $x_i$ ($i\in[\Delta]$) satisfies $P_{h_t}(x_i)< p-w$. In this case, the adversary chooses $(u_t=x_i,y_t=-1)$. 
         Since $P_{h_t}(x_0)-\Cost(u_t,x_0)=p-w > P_{h_t}(u_t)-\Cost(u_t,u_t)$, the agent will select $v_t=x_0$ as the best response, and achieve a success probability of $P_{h_t}(x_0)=p$. Therefore, the learner has expected loss $\E[\ell(h_t,\BR_{h_t}(u_t),y_t)]=p\ge w>\frac{1}{4}$. On the other hand, all but expert $x^i$ labels this agent correctly.
         \item 
         Now we consider the last case where $p$ is achieved at the central node $x_0$ and all the leaf nodes have fractions at least $p-w$. In this case, no agent has the incentive to move regardless of their initial positions. The adversary can select the next agent as follows: if $1-p\ge p-w$, then choose $(u_t=x_0,y_t=+1)$ and make the learner err with probability $1-P_{h_t}(x_0)=1-p$; otherwise, choose $(u_t=x_i,y_t=-1)$ for an arbitrary $i\in[\Delta]$ and make the learner err with probability $P_{h_t}(x_i)\ge p-w$. In either case, the learner has to suffer from an expected loss of $\E[\ell(h_t,\BR_{h_t}(u_t),y_t)]\ge\max\{1-p,p-w\}\ge\frac{1-w}{2}=\frac{1}{4}-\frac{\epsilon}{2}$.
        As for the experts, at most one of them is making a mistake.
    \end{enumerate}

Putting together all the possible cases and let $\epsilon\to0$, the learner is forced to make mistakes with probability at least $\frac{1}{4}$ on each round, i.e., $\sum_{t=1}^T \E[\ell_t(h_t,\BR_{h_t}(u_t),y_t)]\geq T/4$. However, in each round, at most one of the experts makes a mistake, implying that $\OPT\le\frac{T}{\Delta}$ as proved in \Cref{thm:deterministic-lower-bound}. As a result, the total loss made by the learner is bounded as
$$\E[\mistake(T)]=\E\left[\sum_{t=1}^T \ell(h_t,\BR_{h_t}(u_t),y_t)\right]\geq \frac{T}{4}
\ge\frac{\Delta}{4}\cdot\OPT.$$ 
The proof is thus complete.
\end{proof}

\begin{remark}
    In a related work,
    \citet{Braverman2020TheRO} %
    showed that introducing randomization in their classification rule can increase the learner's classification accuracy, and the optimal randomized classifier has the structure that agents are better off not manipulating. 
    In case (3) of the proof of \Cref{thm:fractional-multi-hops-lower-bound}, we show that even when learners choose 
    such an ``optimal'' classifier under which agents have no incentive to manipulate,
    the adversary is still able to impose a high misclassification error. This example shows the limitations of fractional classifiers.
\end{remark}

\subsection{Upper Bound}
\label{sec:fractional-upper-bound}
In this section, we show how to use the idea of \Cref{alg:biased-weighted-maj-vote} to obtain upper bounds in the randomized classifiers model.
\begin{proposition}
    In the free-edges model, \Cref{alg:biased-weighted-maj-vote} achieves a mistake  bound of 
    \[\mistake(T)\le e(\Delta+2)(\ln|\cH|+\OPT).\]
\end{proposition}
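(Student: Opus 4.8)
The plan is to reduce the free-edges fractional setting to the analysis already carried out for \Cref{alg:biased-weighted-maj-vote} in \Cref{thm:biased-weighted-maj-vote-mistake-bound}, by checking that the two structural facts driving that proof still hold here. Recall that in the free-edges model, an agent at $u_t$ moves to the node $v_t\in N[u_t]$ maximizing $P_{h_t}(v_t)$, and since the deterministic classifier $h_t$ produced by \Cref{alg:biased-weighted-maj-vote} takes values in $\{0,1\}$, this best response is exactly the one-hop best response in an unweighted graph: $v_t$ is a positive neighbor of $u_t$ if one exists, and $v_t=u_t$ otherwise (ties broken toward staying put). So even though the \emph{benchmark} $\OPT$ is defined via fractional best responses in principle, the experts in $\cH$ are themselves deterministic, and against a deterministic expert $h^\star$ the free-edges best response again collapses to the unweighted one-hop best response; hence $\OPT$ equals the quantity analyzed in the unweighted deterministic case, and \Cref{prop:dominating-set}-style reasoning applies verbatim to an optimal classifier.

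First I would make precise that running \Cref{alg:biased-weighted-maj-vote} in the free-edges model produces the identical sequence of deterministic classifiers $h_t$ and sees the identical observed features $v_t$ as it would in the unweighted deterministic model on the same agent sequence — because the algorithm's construction of $h_t$ only looks at the weights and the graph, and the agent's observed location $v_t$ is determined by $h_t$ and the one-hop neighborhood $N[u_t]$, which is the same object in both models. Consequently the mistake indicator $\ell(h_t,\BR_{h_t}(u_t),y_t)$ is the same random variable (here deterministic, since $h_t$ is $\{0,1\}$-valued) as in the unweighted setting. Second, I would invoke \Cref{thm:biased-weighted-maj-vote-mistake-bound}: its proof shows that on each mistake the total weight $W_t$ drops by a multiplicative factor (a true-negative mistake removes $\ge W_t/(\Delta+2)$ worth of weight that predicted $v_t$ positive; a true-positive mistake, which forces $v_t=u_t$, lets us penalize the $\ge W_t/(\Delta+2)$ weight predicting all of $N[v_t]$ negative), while the weight of the best expert $h^\star$ shrinks only on the $\OPT$ rounds where $h^\star$ itself errs. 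Balancing $W_T\ge \gamma^{\OPT}$ against $W_T\le |\cH|\,(1-\tfrac{1}{e(\Delta+2)})^{M}$-type bounds with $\gamma=1/e$ yields $M\le e(\Delta+2)(\ln|\cH|+\OPT)$.

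The one point that needs genuine care — and the main (mild) obstacle — is confirming that the benchmark $\OPT$ appearing in the free-edges statement is numerically the same as the $\OPT$ to which \Cref{thm:biased-weighted-maj-vote-mistake-bound} compares. This requires the observation above that a strategic agent best-responding to a \emph{deterministic} classifier in the free-edges model behaves exactly as in the unweighted one-hop model, so $\ell(h^\star,\BR_{h^\star}(u_t),y_t)$ is the same for every expert $h^\star\in\cH$ in both models, and therefore the minimizing value over $\cH$ coincides. Once this identification is made, the proof is a direct citation of \Cref{thm:biased-weighted-maj-vote-mistake-bound} with no new computation. I would also remark briefly that the maximum degree $\Delta$ used in the algorithm is still the right parameter because $|N[v_t]|\le \Delta+1$ is a statement about the (undirected, unweighted-for-this-purpose) graph that is unchanged by the free-edges cost convention.
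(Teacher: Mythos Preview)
Your proposal is correct and follows essentially the same approach as the paper: both argue that against a deterministic classifier the free-edges best response coincides with the unweighted one-hop best response (so both the algorithm's trajectory and the benchmark $\OPT$ are unchanged), and then invoke \Cref{thm:biased-weighted-maj-vote-mistake-bound}. Your write-up is more explicit about the $\OPT$ identification and recaps the weight-drop argument, but these are elaborations rather than a different route.
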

\begin{proof}
    To prove this proposition, it suffices to show that if the learner uses deterministic classifiers as a special case of fractional classifiers, then the \emph{free-edges} cost model and \emph{unweighted graph} cost model result in the same best response functions. In fact, in both models, agents manipulate their features if and only if their original nodes are labeled as negative and there exists a neighbor that is labeled as positive. Therefore, the two cost models yield the same best response behaviors to deterministic classifiers. As a result, \Cref{alg:biased-weighted-maj-vote} suffers from the mistake bound of $e(\Delta+2)(\ln|\cH|+\OPT)$.
\end{proof}

Now we consider weighted manipulation graphs.
In this setting, we can run \Cref{alg:biased-weighted-maj-vote} 
on the expanded manipulation graph $\Tilde{G}$ that is an unweighted graph constructed from $G$ by connecting all pairs $u,v$ of vertices in $G$ such that $\Cost(u,v)\leq 1$.
As a result, we obtain mistake bound in terms of $\Tilde{\Delta}$ instead of $\Delta$, where $\Tilde{\Delta}$ is the maximum degree of $\Tilde{G}$.

\begin{proposition}
\label{prop:fractional-multi-hop-upper-bound}
    Given a weighted manipulation graph $G$, running \Cref{alg:biased-weighted-maj-vote} on the expanded graph $\tilde{G}$ achieves a mistake bound of $\mistake(T)\le e(\Tilde{\Delta}+2)(\ln|\cH|+\OPT)$, where $\Tilde{\Delta}$ is the maximum degree of $\Tilde{G}$.
\end{proposition}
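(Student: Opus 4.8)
The plan is to mirror the proof of the free-edges proposition: reduce the weighted-graph fractional model to the plain unweighted model on $\tilde G$, for which \Cref{thm:biased-weighted-maj-vote-mistake-bound} already gives the desired bound. Note first that \Cref{alg:biased-weighted-maj-vote} only ever commits to deterministic classifiers (each induced probability lies in $\{0,1\}$), so it suffices to show: for \emph{any} deterministic classifier $h$ and any node $u$, the outcome $h(\BR_h(u))$ obtained when $u$ best-responds in the weighted graph $G$ equals the outcome obtained under the unweighted-graph best-response rule on $\tilde G$. Granting this, the learner's per-round loss and the per-round loss of every expert $h^\star\in\cH$ are the same numbers whether we evaluate them in $G$ or in $\tilde G$; in particular $\OPT$ is identical in the two views, and the claimed mistake bound follows by running \Cref{alg:biased-weighted-maj-vote} on $\tilde G$, whose maximum degree is $\tilde\Delta$.

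To establish the best-response equivalence, I would argue that in the weighted model an agent never pays cost exceeding $1$: since $\Value(h(v))\in\{0,1\}$, any target $v$ with $\Cost(u,v)>1$ has utility $\Value(h(v))-\Cost(u,v)<0$, strictly worse than staying put (utility $\Value(h(u))\ge 0$). Hence the best response is confined to $\{v:\Cost(u,v)\le 1\}$, which is exactly the closed neighborhood $N_{\tilde G}[u]$ by construction of $\tilde G$. Inside this set the utility of $v$ is $1-\Cost(u,v)$ when $h(v)=+1$ and $-\Cost(u,v)$ when $h(v)=-1$; combining $\Cost(u,v)\le 1$ with the tie-breaking rule (prefer higher $\Value(h(\cdot))$, then stay put), one checks that the agent ends on a positive node precisely when $h(u)=-1$ and some $v\in N_{\tilde G}[u]$ has $h(v)=+1$, and otherwise remains at $u$ with outcome $h(u)$. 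This is exactly the best-response rule used to define the loss on unweighted graphs, so the two models produce the same outcomes for every deterministic classifier, and therefore the same losses for the learner and for each expert.

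Finally, $\tilde G$ is a well-defined unweighted graph: every original edge survives since edge weights lie in $[0,1]$, so $\Cost(u,v)\le 1$ for each $(u,v)\in\cE$. Running \Cref{alg:biased-weighted-maj-vote} on $\tilde G$ and invoking \Cref{thm:biased-weighted-maj-vote-mistake-bound} yields $\mistake(T)\le e(\tilde\Delta+2)(\ln|\cH|+\OPT)$, using that $\OPT$ is unchanged by the equivalence. The argument is essentially bookkeeping; the one place that needs genuine care is the boundary case $\Cost(u,v)=1$ for a positive $v$, where moving and staying tie in utility but the tie-breaking rule (higher value first) sends the agent to $v$ — which is precisely why the \emph{closed} neighborhood in $\tilde G$ is the right object.
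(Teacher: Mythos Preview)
Your proposal is correct and follows essentially the same reduction as the paper's proof: both argue that under any deterministic classifier, the best-response outcome in the weighted graph $G$ coincides with the one-hop best-response on the expanded unweighted graph $\tilde G$, and then invoke \Cref{thm:biased-weighted-maj-vote-mistake-bound}. Your write-up is in fact more detailed than the paper's two-line sketch---you spell out why costs exceeding $1$ are never paid, why the relevant feasible set is exactly $N_{\tilde G}[u]$, and you handle the boundary case $\Cost(u,v)=1$ via the tie-breaking rule---so nothing is missing.
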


\begin{proof} 
After constructing $\Tilde{G}$, we can see that under any deterministic classifier, a manipulation from $u$ to $v$ happens in the weighted graph $G$ if and only if the same manipulation happens in the unweighted graph $\Tilde{G}$. Therefore, by running \Cref{alg:biased-weighted-maj-vote} on $\Tilde{G}$, we obtain a mistake bound in the original manipulation graph $G$ of $e(\Tilde{\Delta}+2)(\ln|\cH|+\OPT)$, where $\Tilde{\Delta}$ is the maximum degree of $\Tilde{G}$. 
\end{proof}

\section{Randomized Algorithms}
In this section, we propose another model of randomization. 
    Unlike the fractional classifiers model discussed in \Cref{sec:fractional-model}, we show that this randomized model induces a different type of manipulation behavior, for which success probabilities (fractions) are no longer sufficient to characterize.
    In this model, the interaction between the classifier, adversary, and agents proceeds as follows:
At each round $t$, the learner commits to a probability distribution $\cD_t$ over a set of deterministic classifiers $\{h:\cX\to\cY\}$; and promises to use $h_t\sim\cD_t$. 
Based on this mixed strategy $\cD_t$ (and before the random classifier $h_t$ gets realized), the adversary specifies the next agent to be $(u_t,y_t)$.
Then comes the most important step that differentiates this model from the {fractional classifiers} setting: the learner samples $h_t\sim \cD_t$ and \emph{releases it to the agent}, who then best responds to the true $h_t$ by modifying its features from $u_t$ to $v_t$.
The learner aims to minimize the (pseudo) regret with respect to class $\cH$:
\begin{align}
    \E\left[\regret(T)\right]\triangleq \E\left[\sum_{t=1}^T\ell(h_t,\BR_{h_t}(u_t),y_t)\right]-\min_{h^\star\in\cH}\left[\sum_{t=1}^T \ell(h^\star,\BR_{h^\star}(u_t),y_t)\right].
\end{align}

    We show that, surprisingly, releasing the random choices to the agents can help the learner to surpass the
    {$\Omega(\Delta\cdot\OPT)$} lower bound.
    In this model, we propose three algorithms that achieve $o(T)$ regret, which does not depend on $\OPT$ or $\Delta$.

\subsection{Between bandit and full-information feedback}
Before presenting our algorithms, we first investigate the feedback information available to the learner at the end of each round. 
After the agents respond, the learner observes not only the loss of the realized expert ($\ell(h_t,\BR_{h_t}(u_t),y_t)$), but also the best response state $v_t=\BR_{h_t}(u_t)$ and the true label $y_t$.
However, because the original state $u_t$ is hidden, the losses of other experts $\ell(h',\BR_{h'}(u_t),y_t)$ for $h'\neq h_t$ are not fully observable.
For this reason, the feedback structure is potentially richer than bandit feedback, which only contains  $\ell(h_t,\BR_{h_t}(u_t),y_t)$ for the realized expert $h_t$; but sparser than full-information feedback, which contains $\ell(h',\BR_{h'}(u_t),y_t)$ for all $h'\in\cH$. 

Nevertheless, we remark that the learner 
is capable of going beyond the bandit feedback using the additional information $(v_t,y_t)$. For instance, if $h'$ fully agrees with the realized $h_t$ on the entire 2-hop neighborhood of $v_t$, then $\ell(h',\BR_{h'}(u_t),y_t)=\ell(h_t,\BR_{h_t}(u_t),y_t)$. Another scenario is when the agent ends up reporting truthfully ($u_t=v_t$), so the learner can explicitly calculate the best response $\BR_{h'}(u_t)$ and the loss $\ell(h',\BR_{h'}(u_t),y_t)$ for all $h'\in\cH$.

In \Cref{sec:mixed-strategy-adaptive}, we consider a learning algorithm that discards additional information and only uses bandit feedback, which achieves $\mathcal{O}\left(\sqrt{T |\cH|\ln|\mathcal{H}|}\right)$ regret.
To remove the polynomial dependency on $|\cH|$, we propose a generic algorithmic idea that uses an all-positive classifier at random time steps to encourage the truthful reporting of agents. In this way, the learner can obtain full-information feedback on these time steps, which accelerates the learning process. 
In \Cref{sec:mixed-strategy-oblivious}, we use this idea to achieve $\mathcal{O}\left(T^{\frac{2}{3}}\ln^{\frac{1}{3}} |\mathcal{H}|\right)$ regret against any oblivious adversary. In \Cref{sec:adaptive-adversaries}, we extend this idea to the general case of adaptive adversaries and obtain a bound of $\widetilde{\mathcal{O}}\left(T^{\frac{3}{4}}\ln^{\frac{1}{4}} |\mathcal{H}|\right)$.
We also show that this framework could be useful in other strategic settings as well. For example, in \Cref{sec:strategic-perceptron}, we apply it to the setting of strategic online linear classification and obtain a mistake bound in terms of the hinge loss of the original examples when the original data points are not linearly separable.

\subsection{Algorithm based on bandit feedback}
\label{sec:mixed-strategy-adaptive}

As a warmup, we show the learner can use the vanilla EXP3 algorithm~\citep{auer2002nonstochastic}, which is a standard multi-armed bandit algorithm, to obtain sublinear regret. This algorithm works by maintaining a distribution over $\cH$ from which classifier $h_t$ is sampled, where the weights of each expert are updated according to $$p_{t+1}(h)\propto p_t(h)\cdot \exp\left(-\eta\cdot\frac{\ell(h_t,\BR_{h_t}(u_t),y_t)\cdot\indicator{h=h_t}}{p_t(h)}\right),\ \forall h\in\cH.$$
It is known that running EXP3 with learning rate $\eta=\sqrt{\frac{2\ln|\cH|}{|\cH|T}}$ will achieve a regret bound of $O(\sqrt{T|\cH|\ln|\cH|})$, see \citet{auer2002nonstochastic} for a proof.

\subsection{Algorithm based on full-information acceleration}
\label{sec:mixed-strategy-oblivious}
In this section, we provide an algorithm with $\mathcal{O}\left(T^{\frac{2}{3}}\log^{\frac{1}{3}} n\right)$ regret against \emph{oblivious adversaries}. 
An oblivious adversary is one who chooses the sequence of agents $\{(u_t,y_t)\}_{t=1}^T$ before the interaction starts, irrespective of the learner's decisions during the game.
Our algorithm (\Cref{alg:reduction-MAB-FIB}) uses a
reduction from the partial-information model to the full-information model, which is similar in spirit to \cite{awerbuch2004adaptive} and \citet[Chapter 4.6]{blum_mansour_2007}. The main idea is to divide the timeline $1,\cdots, T$ into $K$ consecutive blocks $B_1,\cdots,B_K$, where $B_j=\{(j-1)(T/K)+1,\cdots,j(T/K)\}$, and simulate a full-information online learning algorithm (Hedge) with each block representing a single step. 
Within each block $B_j$, our algorithm uses the same distribution over the experts, except that it will also pick one time-step $\tau_j\sim B_j$ uniformly at random, and assigns an all-positive classifier to $\tau_j$. 
The intention for this time step $\tau_j$ is to prevent the agent from manipulations and simultaneously obtain the loss of every expert. This observed loss then serves as an unbiased loss estimate for the average loss over the same block.
In the remainder of this section, we formally present this algorithm in \Cref{alg:reduction-MAB-FIB} and provide its regret guarantee in \Cref{thm:regret-alg-oblivious}.

\begin{algorithm}[!ht]
\SetKwInOut{Input}{Input}
\SetKwInOut{Output}{Output}
\SetNoFillComment
$K\gets T^{\frac{2}{3}}\ln^{\frac{1}{3}} |\mathcal{H}|$\;
Partition the timeline $\{1,\cdots, T\}$ into $K$ consecutive blocks $B_1,\cdots,B_K$ where $B_j=\left\{(j-1)\cdot\frac{T}{K}+1,\cdots, j\cdot\frac{T}{K}\right\}$\;
Initialize $w_1(h)\gets0,\ \forall h\in\cH$\;
\For{$1\leq j\leq K$}{
    Sample $\tau_j \in B_j$ uniformly at random\;
\For{$t\in B_j$}{
    \tcc{Commit to drawing classifier $h_t\sim\cD_t$, with $\cD_t$ defined as follows:}
    \eIf{$t=\tau_j$}{
        $\cD_t$ puts all weight on a classifier that labels every node as positive\;
    }
    {
        $\cD_t$ is a distribution over $\cH$, where $p_j(\cdot)=\frac{w_j(\cdot)}{W_j}$, $W_j=\sum_{h\in\cH} w_j(h)$\;
    }
    \tcc{Observe agent $(v_t,y_t)$.}
}
\tcc{Update the distribution at the end of $B_j$}
\For{$h\in \mathcal{H}$}{
$w_{j+1}(h)\leftarrow w_j(h) e^{-\eta \cdot\hat{\ell}_j(h)}$, where $\hat{\ell}_j(h)=\ell(h,\BR_h(v_{\tau_j}),y_{\tau_j})$\;
}

}
\caption{Randomized algorithm against oblivious adversaries 
}
\label[algo]{alg:reduction-MAB-FIB}
\end{algorithm}

\begin{theorem}
\Cref{alg:reduction-MAB-FIB}  with parameter $K=T^{\frac{2}{3}}\ln^{\frac{1}{3}} |\mathcal{H}|$ achieves a regret bound of  $\mathcal{O}\left(T^{\frac{2}{3}}\log^{\frac{1}{3}} |\mathcal{H}|\right)$ against any oblivious adversary.
\label{thm:regret-alg-oblivious}
\end{theorem}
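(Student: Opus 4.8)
The plan is to analyze \Cref{alg:reduction-MAB-FIB} as an instance of Hedge run over $K$ ``meta-rounds'' (the blocks $B_1,\dots,B_K$), where the loss vector fed to Hedge after block $B_j$ is an unbiased estimate---obtained for free from the single probe step $\tau_j$---of the \emph{average} per-step loss of each expert over that block. The total regret then decomposes into the Hedge regret across the $K$ meta-rounds, plus the loss wasted on the $K$ probe steps.

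First I would pin down the role of the probe step $\tau_j$. Since the learner deploys the all-positive classifier there, every agent (whether a true positive or a true negative) is already getting classified as positive, so by the tie-breaking rule it stays put and $v_{\tau_j}=u_{\tau_j}$. Hence the learner effectively observes the true feature $u_{\tau_j}$ together with $y_{\tau_j}$, and can compute $\hat\ell_j(h)=\ell(h,\BR_h(u_{\tau_j}),y_{\tau_j})$ for \emph{every} $h\in\cH$: this is genuine full-information feedback. Because the adversary is oblivious, the sequence $\{(u_t,y_t)\}_{t\in B_j}$ is fixed before the game, and $\tau_j$ is drawn uniformly from $B_j$ independently of the randomness used in blocks $1,\dots,j-1$; therefore, writing $L_j(h):=\sum_{t\in B_j}\ell(h,\BR_h(u_t),y_t)$ for the true cumulative loss of $h$ on $B_j$, we get $\E[\hat\ell_j(h)\mid \text{blocks } 1,\dots,j-1]=\frac{K}{T}L_j(h)$.

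Next I would apply the standard Hedge bound to the loss vectors $\hat\ell_1,\dots,\hat\ell_K\in[0,1]^{\cH}$: with $p_j(\cdot)=w_j(\cdot)/W_j$ and an appropriate learning rate $\eta=\Theta(\sqrt{\ln|\cH|/K})$,
\[
\sum_{j=1}^K\sum_{h\in\cH} p_j(h)\hat\ell_j(h)\;-\;\min_{h\in\cH}\sum_{j=1}^K\hat\ell_j(h)\;\le\;O\!\left(\sqrt{K\ln|\cH|}\right).
\]
Taking expectations over $\tau_1,\dots,\tau_K$ and using that $p_j$ is a deterministic function of $\hat\ell_1,\dots,\hat\ell_{j-1}$ (hence independent of $\tau_j$), the first term equals $\frac{K}{T}\,\E\big[\sum_{j}\sum_h p_j(h)L_j(h)\big]$; for the comparator, Jensen's inequality gives $\E\big[\min_h\sum_j\hat\ell_j(h)\big]\le\min_h\E\big[\sum_j\hat\ell_j(h)\big]=\frac{K}{T}\min_h\sum_j L_j(h)=\frac{K}{T}\OPT$. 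Rearranging yields $\E\big[\sum_j\sum_h p_j(h)L_j(h)\big]\le\OPT+O\!\big(T\sqrt{\ln|\cH|/K}\big)$. Since inside $B_j$ the algorithm plays $p_j$ at every step except $\tau_j$, and the probe step costs at most $1$, the algorithm's total expected loss is at most $\E\big[\sum_j\sum_h p_j(h)L_j(h)\big]+K$, so $\E[\regret(T)]\le K+O\!\big(T\sqrt{\ln|\cH|/K}\big)$; substituting $K=T^{2/3}\ln^{1/3}|\cH|$ balances the two terms and gives $O(T^{2/3}\ln^{1/3}|\cH|)$.

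I expect the only real obstacle to be bookkeeping rather than depth: making the conditional unbiasedness $\E[\hat\ell_j(h)\mid\text{past}]=\frac{K}{T}L_j(h)$ fully precise---this is exactly where obliviousness is used, since for an adaptive adversary $\{(u_t,y_t)\}_{t\in B_j}$ could depend on $\tau_1,\dots,\tau_{j-1}$ through the realized classifiers---then combining it correctly with the data-independent Hedge inequality, and verifying that the comparator $\OPT=\min_{h^\star}\sum_t\ell(h^\star,\BR_{h^\star}(u_t),y_t)$ is literally $\min_h\sum_j L_j(h)$ so that Jensen is applied in the favorable direction. The conceptual crux---the all-positive probe plus blocking to simulate full information---has already been described before the theorem statement; everything after it is the textbook partial-to-full-information reduction.
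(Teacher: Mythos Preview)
Your proposal is correct and follows essentially the same approach as the paper: both use the all-positive probe to obtain $v_{\tau_j}=u_{\tau_j}$ and hence an unbiased estimate of the block-average loss, apply the Hedge bound over the $K$ meta-rounds, handle the comparator via Jensen, add back the at-most-$K$ loss from probe steps, and balance $K$ against $T\sqrt{\ln|\cH|/K}$. The only differences are notational (you track cumulative block loss $L_j$ where the paper tracks the average $\bar\ell_j$) and that you are slightly more explicit about why obliviousness is needed for the conditional unbiasedness.
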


\begin{proof}
For notational convenience, we denote the average loss over block $B_j$ as $\bar{\ell}_j(h)=\frac{\sum_{t\in B_j}\ell_t(h)}{|B_j|}$, where for each expert $h\in\cH$, $\ell_t(h)=\ell(h,\BR_h(u_t),y_t)$.
We first claim that for all $h$, $\hat{\ell}_j(h)=\ell(h,\BR_h(v_{\tau_j}),y_{\tau_j})$\footnote{Note that here $v_{\tau_j}$ is the best-response to $h_{\tau_j}$ and $\BR_h(v_{\tau_j})$ is the best-response to $h$ when the agent is at location $v_{\tau_j}$ (which we will show is the same as $u_{\tau_j}$).} is an unbiased estimator of the average loss $\bar{\ell}_j(h)$, i.e., $\E_{\tau_j\sim B_j}[\hat{\ell}_j(h)]=\bar{\ell}_j(h)$. 
This is because the algorithm predicts positive on every state at time $\tau_j$, so the agent reports truthfully ($u_{\tau_j}=v_{\tau_j}$), thus %
$\hat{\ell}_j(h)=\ell(h,\BR_h(u_{\tau_j}),y_{\tau_j})=\ell_{\tau_j}(h)$ can be observed for any expert $h$. 
Since $\tau_j$ is sampled from $B_j$ uniformly at random, we have
$$\E_{\tau_j\sim B_j}[\hat{\ell}_j(h)]=\E_{\tau_j\sim B_j}[\ell_{\tau_j}(h)]=\bar{\ell_j}(h).$$

Since the choice of $\tau_j$ is sampled independently after the distribution $p_j$ is chosen, the above claim implies that for any block $B_j$ and any $p_j$:
\begin{align}
    \E_{h\sim p_j}\left[\bar{\ell}_j(h)\right] = 
    \E_{\tau_j}\E_{h\sim p_j}\left[\hat{\ell}_j(h)\right].
    \label{eq:claim-unbiased}
\end{align}
Therefore, inside each block $B_j$ and conditioning on the history before block $B_j$, the total loss of \Cref{alg:reduction-MAB-FIB} can be bounded as follows:
\begin{align}
    \E\left[\sum_{t\in B_j}\ell_t(h_t)\right]=&\indicator{\Tilde{y}_{\tau_j}\neq y_{\tau_j}}+\sum_{t\in B_j,\ t\neq\tau_j}\E_{h\sim p_j}\left[{\ell}_t(h)\right]\nonumber\\
    \le&1+\sum_{t\in B_j}\E_{h\sim p_j}\left[{\ell}_t(h)\right]=1+|B_j|\cdot\E_{h\sim p_j}\left[\bar{\ell}_j(h)\right],\label{eq:reduction-tmp1}\\
    =&1+\frac{T}{K}\E_{\tau_j}\E_{h\sim p_j}\left[\hat{\ell}_j(h)\right].\label{eq:reduction-tmp2}
\end{align}
where the inequality \eqref{eq:reduction-tmp1} is because $\indicator{\Tilde{y}_{\tau_j}\neq y_{\tau_j}}\le1$ and the loss $\ell_t$ is always nonnegative, and \eqref{eq:reduction-tmp2} is because of the claim in \eqref{eq:claim-unbiased}. Summing over $K$ blocks and taking the expectation over $\tau_1,\cdots,\tau_K$, we obtain an upper bound of the expected total loss of \Cref{alg:reduction-MAB-FIB}:
\begin{align}
    \E\left[\sum_{t=1}^T\ell_t(h_t)\right]=&\sum_{j=1}^K\E\left[\sum_{t\in B_j}\ell_t(h_t)\right]
    \le K+\frac{T}{K}\E_{\tau_1,\cdots,\tau_K}\left[\sum_{j=1}^K\E_{h\sim p_j}\left[\hat{\ell}_j(h)\right]\right].\label{eq:reduction-tmp6}
\end{align}
From the regret guarantee of Hedge, we have that over the loss sequence $\hat{\ell}_1,\cdots,\hat{\ell}_K$, there is
    \begin{align}
        {\sum_{j=1}^K 
        \E_{h\sim p_j}\left[\hat{\ell}_j(h)\right]}
        -{\min_{h^\star \in \mathcal{H}}\sum_{j=1}^K \hat{\ell}_j(h^\star)}
        \le \mathcal{O}\Big(\sqrt{K\ln|\mathcal{H}|}\Big).
    \end{align}
Therefore, taking the expectation over $\tau_1,\cdots,\tau_K$, we obtain
\begin{align}
        \E_{\tau_1,\cdots,\tau_K}\left[{\sum_{j=1}^K 
        \E_{h\sim p_j}\left[\hat{\ell}_j(h)\right]}\right]
        \le& \E_{\tau_1,\cdots,\tau_K}\left[{\min_{h^\star \in \mathcal{H}}\sum_{j=1}^K \hat{\ell}_j(h^\star)}\right]
        + \mathcal{O}\Big(\sqrt{K\ln|\mathcal{H}|}\Big)\nonumber\\
        \le& \min_{h^\star \in \mathcal{H}}\E_{\tau_1,\cdots,\tau_K}\left[{\sum_{j=1}^K \hat{\ell}_j(h^\star)}\right]
        + \mathcal{O}\Big(\sqrt{K\ln|\mathcal{H}|}\Big)\label{eq:reduction-tmp3}\\
        =&\min_{h^\star \in \mathcal{H}}\left[{\sum_{j=1}^K \bar{\ell}_j(h^\star)}\right]
        + \mathcal{O}\Big(\sqrt{K\ln|\mathcal{H}|}\Big).\label{eq:reduction-tmp4}
    \end{align}
In the above equations, \eqref{eq:reduction-tmp3} is due to Jensen's inequality %
and \eqref{eq:reduction-tmp4} is from the unbiasedness property established in \Cref{eq:claim-unbiased}.
Finally, putting \Cref{eq:reduction-tmp6,eq:reduction-tmp4} together, and using the definition of the average loss $\bar{\ell}_j$, we conclude that
\begin{align*}
    \E\left[\sum_{t=1}^T\ell_t(h_t)\right]\le& K+\frac{T}{K}\left(\min_{h^\star \in \mathcal{H}}\left[{\sum_{j=1}^K \bar{\ell}_j(h^\star)}\right]
        + \mathcal{O}\Big(\sqrt{K\ln|\mathcal{H}|}\Big)\right)\\
        =&\min_{h^\star\in \mathcal{H}}\sum_{j=1}^K\sum_{t\in B_j}{\ell}_t(h^\star)+\mathcal{O}\left(T\sqrt{\frac{\ln|\mathcal{H}|}{K}}\right)+K.
\end{align*}

    Set $K=T^{\frac{2}{3}}\ln^{\frac{1}{3}} |\mathcal{H}|$, this gives the final regret bound of $\mathcal{O}\left( T^{2/3}\ln^{1/3}|\mathcal{H}|\right)$.
\end{proof}

\subsection{Discussion on transparency}
\label{sec:discussion-transparency}
In the sections above, we have shown that making random choices \emph{fully transparent} to strategic agents can provably help the learner to achieve sublinear regret. This is in contrast to the fractional model, where we have lower bound examples showing that keeping random choices \emph{fully opaque} to the agents leads to linear regret.
The contrasting results in these two models reveal a fundamental difference between strategic and non-strategic (adversarial) settings: unlike the adversarial setting where learners benefit more from hiding the randomness, in the strategic setting, the learner benefits more from being transparent.
At a high level, this is because the relationship between the learner and strategic agents is not completely opposing: instead, the utility of the learner and agents can align to a certain degree. 

To be more specific, in our online strategic classification setting, there are three effective players in the game: the learner who selects the classification rule, the adversary who chooses the initial features of agents, and the strategic agents who best respond to the classification rule. 
From the learner's perspective, the only malicious player is the adversary, whereas the agent has a known, controllable best response rule.
In the fractional classifiers model, both the adversary and the agents face the same amount of information (which is the set of fractions). Although the opacity can prevent the adversary from selecting worst-case agents that force the learner to err with probability $1$ (as in the lower bound examples of \ref{thm:deterministic-lower-bound}), it also reduces the learner's control of the strategic behavior of agents.
As a result, the potentially rich structure of randomness collapses to the set of deterministic fractional values, resulting in the fact that the learner is still forced to make mistakes with a constant probability.

On the contrary, in the randomized algorithms model, the learner can increase her own leverage of controlling the agents' best response behavior, and simultaneously reduces the adversary's ability of using the strategic nature of agents to hide information from the learner. Both are achieved by giving the agents more information (i.e., the realized classifiers). In other words, the learner benefits from ``colluding'' with the agents and competing against the malicious adversary in unity. This idea is demonstrated by \Cref{alg:reduction-MAB-FIB,alg:reduction-adaptive}, where the learner occasionally uses an all-positive classifier to encourage the truthful reporting of agents, thus making the adversary unable to benefit from hiding the true features from the learner.
\section{Conclusion and Open Problems}
\label{sec:open-problems}
In this paper, we studied the problem of online strategic classification under manipulation graphs. We showed fundamental differences between strategic and non-strategic settings in both deterministic and randomized models.
In the deterministic model, we show that in contrast to the nonstrategic setting where $O(\ln|\cH|)$ bound is achievable by the simple $\Halving$ algorithm, in the strategic setting, mistake/regret bounds are closely characterized by the maximum degree $\Delta$ even when $|\cH|=O(\Delta)$. In the randomized model, we show that unlike the nonstrategic setting where withholding random bits can benefit the learner, in the strategic setting, hiding the random choices has to suffer $\Omega(\Delta)$-multiplicative regret, whereas revealing the random choices to the strategic agents can provably bypass this barrier. We also design generic deterministic algorithms that achieve $O(\Delta)$-multiplicative regret and randomized algorithms that achieve $o(T)$ regret against both oblivious and adaptive adversaries.

Our work suggests several open problems. The first is to design a deterministic algorithm in the realizable setting that achieves a mistake bound in terms of generic characteristics of the manipulation graph other than the maximum degree. Recall that our upper bound of $O(\Delta\ln|\cH|)$ and lower bound of %
{$\Omega(\Delta)$} are not matching, so it would be interesting to tighten either the upper or lower bound in this setting. The second open question is to incorporate the graph structure into randomized algorithms and achieve a  {$o(T)$} 
regret bound that depends on the characterizations of the graph, such as the maximum degree.

\subsection*{Acknowledgements}
This work was supported in part by the National Science Foundation under grant CCF-2212968 {and grant CCF-2145898}, by the Simons Foundation under the Simons Collaboration on the Theory of Algorithmic Fairness, by the Defense Advanced Research Projects Agency under cooperative agreement HR00112020003, {by a C3.AI Digital Transformation Institute grant, and a Berkeley AI Research (BAIR) Commons award}.
Part of this work was conducted while KY was visiting TTIC.
The views expressed in this work do not necessarily reflect the position or the policy of the Government and no official endorsement should be inferred. Approved for public release; distribution is unlimited.
\bibliographystyle{plainnat}
\bibliography{ref}

\appendix
\section{Supplementary Materials}

\subsection{Proof of~\Cref{thm:biased-weighted-maj-vote-mistake-bound}}
\label{sec:proof-agnostic}

\medskip
\noindent\textbf{\Cref{thm:biased-weighted-maj-vote-mistake-bound}} (Restated)\textbf{.}\emph{
    \Cref{alg:biased-weighted-maj-vote} makes at most $e(\Delta+2)(\ln|\mathcal{H}|+\OPT)$ mistakes against any adversary.
}

\medskip

\begin{proof}
To begin with, we show that if a mistake is made in round $t$, then the weights get updated such that $W_{t+1}\leq W_t\big(1-\gamma/(\Delta+2)\big)$. %
Moreover, the algorithm penalizes an expert only if it made a mistake. In other words, the algorithm never over-penalizes experts who do not make a mistake.

First, suppose a mistake is made on a true negative. In this case, $v_t$ is labeled as positive by $h_t$, so the total weight of experts predicting positive on $v_t$ is at least $W_t/(\Delta+2)$, and each of their weights is decreased by a factor of $\gamma$. 
As a result, we have $W_{t+1}\leq W_t\big(1-\gamma/(\Delta+2)\big)$.
Moreover, for each classifier $h$ that gets penalized, we have $h(v_t)=+1$, so $v_t$ belongs to the positive region $S_h$, which implies that the initial node $u_t\in N[v_t]$ is able to reach the positive region $S_h$. Therefore, our previous observation indicates that $u_t$ would have ended up being predicted as positive had it best responded to $h$, so $h$ had also made a mistake.

Next, consider the case of making a mistake on a true positive. Similar to the proof of \ref{thm:biased-weighted-maj-vote-mistake-bound}, we argue that the agent has not moved from a different location to $v_t$ to get classified as negative, so $v_t=u_t$. Since the agent did not move, none of the vertices in $N[v_t]$ was labeled positive by the algorithm, implying that for each $x\in N[v_t]$, weights of experts labeling x as positive is less than $W_t/(\Delta+2)$. Therefore, taking the union of all $x\in N[v_t]$, we conclude that the total weight of experts predicting negative on all $x\in N[v_t]$ is at least $W_t\Big(1-(\Delta+1)/(\Delta+2)\Big) = W_t/(\Delta+2)$. 
All these experts are making a mistake as $v_t=u_t$ cannot reach the positive region of any of these experts, so they all end up classifying agent $u_t$ as negative. As a result, the algorithm cuts their weight by a factor of $\gamma$, resulting in $W_{t+1}\leq W_t-(\gamma W_t)/(\Delta+2)$.

Let $M=\mistake(T)$ denote the number of mistakes made by the algorithm. Since the initial weights are all set to 1, we have $W_0=|\cH|$. Together with the property that $W_{t+1}\leq W_t\left(1-\frac{\gamma}{\Delta+2}\right)$ on each mistake, we have $W_T\leq |\cH|\left(1-\frac{\gamma}{\Delta+2}\right)^M$.

Now we show that $W_T\ge \gamma^{\OPT}$. We have proved that whenever the algorithm decreases the weight of an expert, they must have made a mistake. However, it can be the case that an expert makes a mistake, but the algorithm does not detect that. In other words, the algorithm may under-penalize an expert, but it would never over-penalize. 
Let $h^\star\in\cH$ denote the best expert that achieves the minimum number of mistakes $\OPT$. Suppose the algorithm detects $q$ of the rounds where $h^\star$ makes a mistake, then we have $q\leq \OPT$. Therefore, after $T$ rounds, $W_T\ge w_T(h^\star)=\gamma^{q}\geq \gamma^{\OPT}$, since $0\leq\gamma\leq 1$. Finally, we have:
\begin{align*}
&\gamma^{\OPT}\leq W_T\leq |\mathcal{H}|\left(1-\frac{\gamma}{\Delta+2}\right)^M\\
\Rightarrow\ &\OPT\cdot\ln{\gamma}\leq \ln{|\mathcal{H}|}+M\ln{\Big(1-\frac{\gamma}{\Delta+2}\Big)}\leq \ln{|\mathcal{H}|}-M\frac{\gamma}{\Delta+2}\\
\Rightarrow\ & M\leq \frac{\Delta+2}{\gamma}\ln{|\mathcal{H}|}-\frac{\ln{\gamma}(\Delta+2)}{\gamma}\OPT\\
\end{align*}
By setting $\gamma=1/e$, we bound the total number of mistakes as $M\leq e(\Delta+2)(\ln{|\mathcal{H}|}+\OPT)$.

\end{proof}

\medskip
\noindent\textbf{\Cref{remark:supergraph}} (Restated)\textbf{.}\emph{
    When the input graph $\hat{G}$ is a supergraph of the true manipulation graph $G$, 
there exists an algorithm that makes at most $e(\Delta(\hat{G})+2)(\ln|\mathcal{H}|+\OPT)$ mistakes. Here, $\OPT$ is the minimum number of mistakes made by the optimal expert under the true manipulation graph $G$, and $\Delta(\hat{G})$ is the maximum degree of the input graph $\hat{G}$.
}

\medskip

\begin{proof}[Proof of \Cref{remark:supergraph}]
To show~\Cref{remark:supergraph} holds, we follow a similar approach that we used to prove~\Cref{thm:biased-weighted-maj-vote-mistake-bound} with some modifications in the case of false negative. 
For clarity, we use $N_{G}[v_t]$ to denote the neighborhood under the true graph $G$, and use $N_{\hat{G}}[v_t]$ to denote the neighborhood under the input graph $\hat{G}$. Since $\hat{G}$ is a supergraph of $G$, we have $N_{\hat{G}}[v_t]\supseteq N_{G}[v_t]$. Moreover, we define $\hat{N}[v_t]$ to be the subset of $N_{\hat{G}}[v_t]$ that only includes vertices labeled as negative by $h_t$.

When a false negative mistake occurs, the agent has not moved from a different location to $v_t$ to get classified as negative. Therefore, we have $v_t=u_t$. Since the agent did not move, none of the vertices in the true neighborhood $N_{G}[v_t]$ was labeled as positive by the algorithm. However, there might exist some vertices in $N_{\hat{G}}[v_t]\setminus N_{G}[v_t]$ that are labeled as positive by the algorithm.
Combined with the definition that $\hat{N}[v_t]$ includes all vertices in $N_{\hat{G}}[v_t]$ that are labeled as negative, we have
\[
    N_{G}[v_t]\subseteq \hat{N}[v_t] \subseteq N_{\hat{G}}[v_t].
\] 
According to the algorithm, for each $x\in N_{G}[v_t]$, the total weight of experts predicting $x$ as positive is less than $W_t/(\Delta(\hat{G})+2)$ where $\Delta(\hat{G})$ is the maximum degree of $\hat{G}$. Therefore, taking the union over all $x\in \hat{N}[v_t]$, it implies that the total weight of experts predicting negative on all $x\in \hat{N}[v_t]$ is at least 
\[W_t\Big(1-|\hat{N}[v_t]|/(\Delta+2)\Big)\geq W_t\Big(1-(\Delta(\hat{G})+1)/(\Delta(\hat{G})+2)\Big)=W_t/(\Delta(\hat{G})+2),\]
where the inequality comes from $\hat{N}[v_t] \subseteq N_{\hat{G}}[v_t]$. Since $N_{G}[v_t]\subseteq \hat{N}[v_t]$, all these experts are also predicting all-negative on $N_{G}[v_t]$. Therefore they are all making a mistake under the true graph $G$, as $u_t=v_t$ cannot reach the positive region of any of these experts. Reducing their weights by a factor of $\gamma$ results in $W_{t+1}\leq W_t-(\gamma W_t)/(\Delta+2)$. The rest of the proof goes through similar to the proof of~\Cref{thm:biased-weighted-maj-vote-mistake-bound}.
\end{proof}

\subsection{Improving the Upper Bound}
\label{sec:improving-upper-bound}
In this section, we propose a pre-processing step to improve the mistake bound of~\Cref{alg:halving} in some cases, depending on the structure of the underlying manipulation graph. We leave it open to get a general mistake bound that depends on other characteristics of the manipulation graph besides the maximum degree. Consider the case where the manipulation graph $G(\cX, \cE)$ is a complete graph, and the hypothesis class $\mathcal{H}$ includes all possible labelings of $\cX$, i.e. $|\mathcal{H}|=2^{|\cX|}$. However,~\Cref{prop:effective-hypothesis-class-complete-graphs} shows that all the examples $(u_t,y_t)$ arriving over time get labeled the same: either all positive or all negative. Therefore, the size of the \emph{effective} hypothesis class is $2$.

\begin{proposition}
\label{prop:effective-hypothesis-class-complete-graphs}
If the manipulation graph $G(\cX,\cE)$ is a complete undirected graph, then all the examples arriving over time are labeled the same, i.e. all positive or all negative.
\end{proposition}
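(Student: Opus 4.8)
The plan is to exploit the fact that in a complete graph the closed neighborhood of every vertex is all of $\cX$, which collapses the best-response dynamics: under any classifier, an agent can reach every vertex at unit cost, so whether it ends up classified positive depends only on whether the classifier has \emph{any} positive vertex, not on the agent's starting vertex.

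First, I would record the structural observation: since $G(\cX,\cE)$ is a complete undirected graph, $\Dist(u,v)\le 1$ for all $u,v\in\cX$, so the closed neighborhood $N[u]$ of every node equals $\cX$. Next, I would fix an arbitrary classifier $h$ and an arbitrary agent with true features $u$, and split into two cases according to whether $h$ labels some vertex $v^\star$ as $+1$. If it does, then I would compare the utility $\Value(h(v))-\Cost(u,v)$ across the only relevant options: staying at $u$ (cost $0$), moving to $v^\star$ (value $1$, cost $1$), and moving to a negative vertex (value $0$, cost $1$); the set of maximizers is $\{u\}\cup\{v:h(v)=+1\}$ when $h(u)=-1$ and $\{v:h(v)=+1\}$ when $h(u)=+1$, in both subcases with a positive vertex among the maximizers, so the stated tie-breaking rule (prefer higher $\Value(h(\cdot))$, then prefer to stay put) forces $v_t$ to be a positive vertex and hence $h(\BR_h(u))=+1$. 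If instead $h$ is the all-negative classifier, no move gives positive utility, the agent stays, and $h(\BR_h(u))=-1$.

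Finally, I would conclude: the case analysis shows that for every classifier $h$ the effective labeling $u\mapsto h(\BR_h(u))$ is a constant function of $u$ — identically $+1$ when $h$ has at least one positive vertex, identically $-1$ when $h$ is all-negative. Therefore every agent arriving over time is labeled the same by any fixed classifier, and across the whole class there are exactly two distinct effective classifiers (constant $+1$, realized by each of the $2^{|\cX|}-1$ hypotheses with a positive vertex, and constant $-1$, realized by the unique all-negative hypothesis), so the effective hypothesis class has size $2$.

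The only delicate point — and the one I expect to be the main obstacle — is that moving to a positive vertex costs exactly $1$ and gains exactly $1$ in value, so it ties with staying put; the conclusion that the agent nonetheless ends up at a positive vertex rests entirely on invoking the tie-breaking convention (higher classifier value first, staying put only as a secondary preference). Everything else follows immediately from $N[u]=\cX$.
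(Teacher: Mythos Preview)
Your proof is correct and follows essentially the same two-case split as the paper: either $h$ has at least one positive vertex (and then every agent ends up classified positive) or $h$ is all-negative (and every agent is classified negative). The paper's own proof is terser because it relies on the earlier characterization of best-response behavior in unweighted graphs, whereas you re-derive this from the utility definition and are explicit about the tie between staying put (utility $0$) and moving to a positive vertex (utility $1-1=0$); your caution here is well placed, and the tie-breaking convention indeed resolves it in favor of the positive vertex.
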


\begin{proof}
Consider a hypothesis $h$ that labels at least one node $v\in \cX$ as positive. Then any example $u_t$ arriving at time-step $t$ can reach $v$ and get classified as positive. Hence, $h$ classifies all the examples as positive. On the other hand, if $h$ labels all the nodes $v\in \cX$ as negative, then it would classify all the examples arriving over time as negative.
\end{proof}

\Cref{alg:halving} has a mistake bound of $\mathcal{O}(\Delta \ln|\mathcal{H}|)$ in the realizable case. However, when the manipulation graph is complete, we can get a mistake bound of $1$ as follows: initially starting with an all-positive classifier, if a mistake happens, switch to an all-negative classifier. The case of complete graphs shows that depending on the underlying manipulation graph, there can be a large gap between the upper bound given by~\Cref{alg:halving} and the best achievable bound. ~\Cref{alg:improvement-halving} is a pre-processing step to improve this gap. %

\begin{algorithm}
\SetKwInOut{Input}{Input}
\SetKwInOut{Output}{Output}
\SetNoFillComment
\Input{$G(\cX,\cE)$, hypothesis class $\mathcal{H}$}
\For{$t=1,\cdots,T$}{
Commit to $h_t$ that labels all nodes as positive\;
\tcc{Observe $(v_t,y_t)$}
\If{$y_t\neq h_t(v_t)$}{\tcc{when the first mistake happens, remove all the hypotheses that make a mistake}
$\mathcal{H}'\gets\mathcal{H}\setminus \{h:\ \exists v\in N[v_t],\ h(v)=+1\}$\;
Break\;
}
}
Run~\Cref{alg:halving} on $(G, \mathcal{H}')$\;
\caption{A pre-processing step to improve the mistake bound of~\Cref{alg:halving}}
\label[algo]{alg:improvement-halving}
\end{algorithm}

\Cref{alg:improvement-halving} initially starts with an all-positive classifier. When the first mistake happens on a node $v_t$, it means that $v_t$ and all its neighbors need to be classified as negative. Hence, we exclude all the hypothesis $h\in \mathcal{H}$ that classify any node $v\in N[v_t]$ as positive from $\mathcal{H}$. After filtering $\mathcal{H}$, we run~\Cref{alg:halving} on the new set $\mathcal{H}'$. 
We now restate the guarantee of \Cref{alg:improvement-halving} that we presented in \Cref{thm:mistake-bound-improved-halving}, and show its proof.

\medskip
\noindent\textbf{\Cref{thm:mistake-bound-improved-halving}} (Restated)\textbf{.}\emph{
    \Cref{alg:improvement-halving} makes at most $\min\{n-\delta, 1+\Delta\cdot \min\{\ln|\mathcal{H}|, n-\delta-1\}\}$ mistakes, where $n=|\cX|$ and 
    $\delta$ is the minimum degree of $G(\cX,\cE)$.
}
\medskip

\begin{proof}
After the first mistake happens on $v_t$,~\Cref{alg:improvement-halving} only keeps the hypotheses that label all the nodes in $N[v_t]$ as negative. Since $\big|N[v_t]\big|\geq \delta+1$, the number of such hypotheses is at most $2^{n-(\delta+1)}$. Therefore, the filtered-out hypothesis set $\mathcal{H'}$ satisfies $|\mathcal{H}'|\leq \min\{|\mathcal{H}|, 2^{n-\delta-1}\}$. Therefore, the number of mistakes that~\Cref{alg:halving} makes on the filtered hypothesis set is at most:
\[1+\Delta\cdot\ln(|\mathcal{H}'|)\leq1+\Delta\cdot\min\{\ln|\mathcal{H}|, \ln(2^{n-\delta-1}) = 1+\Delta\cdot\min\{\ln|\mathcal{H}|, n-\delta-1\}\]

Suppose that $n-\delta<1+\Delta\cdot\min\{\ln|\mathcal{H}|, n-\delta-1\}$. After the first mistake happens on $v_t$ and the labels of $N[v_t]$ get flipped, for the remaining graph $G\setminus N[v_t]$, the labels can get flipped one by one whenever a mistake is observed. Therefore, the total number of mistakes is at most:
\[\min\{n-\delta, 1+\Delta\cdot \min\{\ln|\mathcal{H}|, n-\delta-1\}\}\]
 
\end{proof}

\begin{remark}
When the manipulation graph is dense, the mistake bound in \Cref{thm:mistake-bound-improved-halving} can greatly outperform that given in \Cref{thm:baseline-realizable-upper-bound}. For instance, in complete graphs where both the minimum degree and the maximum degree are $n-1$, \Cref{thm:mistake-bound-improved-halving} guarantees that \Cref{alg:improvement-halving} makes at most one mistake, whereas \Cref{alg:halving} could end up making $n$ mistakes in total, one on each vertex.
\end{remark}

\subsection{Extension of the Deterministic Model to Directed Manipulation Graphs}
\label{sec:directed-graphs}

Suppose that the manipulation graph $G(\cX,\cE)$ is a directed graph. We show how to modify~\Cref{alg:halving,alg:biased-weighted-maj-vote} to work in the case of directed manipulation graphs and get a regret bound that depends on $\Delta_{\text{out}}$ instead of $\Delta$, where $\Delta_{\text{out}}$ is the maximum out-degree of all the nodes $v\in\cX$.
\begin{proposition}
In the realizable case,~\Cref{alg:halving} can be modified to make at most $(\Delta_{\text{out}}+2)\ln |\mathcal{H}|$ mistakes. 
\end{proposition}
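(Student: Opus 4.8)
The plan is to modify \Cref{alg:halving} in exactly two places: replace every occurrence of $\Delta$ by $\Delta_{\text{out}}$, so that the learner sets $h_t(v)=+1$ iff at least $|\cH|/(\Delta_{\text{out}}+2)$ of the surviving experts label $v$ positive; and, in the false-negative update step, replace the closed neighborhood $N[v_t]$ by the closed \emph{out}-neighborhood $N^+[v_t]\triangleq\{v_t\}\cup\{x:(v_t,x)\in\cE\}$, penalizing every surviving expert that labels all of $N^+[v_t]$ negative. The false-positive update is left unchanged (penalize every surviving expert with $h(v_t)=+1$).

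First I would record the directed analogue of \Cref{prop:dominating-set}. Since an agent at $u$ can only traverse outgoing edges (and in the unit-cost model moves at most one hop), it reaches $S^\star$ in one hop iff $N^+[u]\cap S^\star\neq\emptyset$. Hence realizability of $h^\star$ forces: every true positive $u_t$ satisfies $N^+[u_t]\cap S^\star\neq\emptyset$, while every true negative $u_t$ satisfies $N^+[u_t]\cap S^\star=\emptyset$ (so in particular $u_t\notin S^\star$).

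Next I would re-run the two-case argument of \Cref{thm:baseline-realizable-upper-bound} with out-neighborhoods. On a mistake on a true negative ($y_t=-1$) the algorithm predicted positive, so $h_t(v_t)=+1$ and at least $|\cH|/(\Delta_{\text{out}}+2)$ experts (all with $h(v_t)=+1$) are removed; moreover $v_t\in N^+[u_t]$ with $u_t$ a true negative, so $v_t\notin S^\star$, hence $h^\star(v_t)=-1$ and $h^\star$ survives. On a mistake on a true positive ($y_t=+1$) I first argue $v_t=u_t$: if any node of $N^+[u_t]$ were labeled positive by $h_t$, the agent would have moved there and been classified positive, contradicting the mistake; so every node of $N^+[v_t]=N^+[u_t]$ is labeled negative by $h_t$, i.e.\ each such node is labeled positive by fewer than $|\cH|/(\Delta_{\text{out}}+2)$ experts, and since $|N^+[v_t]|\le\Delta_{\text{out}}+1$ a union bound leaves at least $|\cH|\bigl(1-\tfrac{\Delta_{\text{out}}+1}{\Delta_{\text{out}}+2}\bigr)=|\cH|/(\Delta_{\text{out}}+2)$ experts labeling all of $N^+[v_t]$ negative, all of which are removed; meanwhile out-domination gives some node of $N^+[u_t]$ positive under $h^\star$, so $h^\star$ is not removed.

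The proof then closes with the usual potential argument: each mistake deletes at least a $1/(\Delta_{\text{out}}+2)$ fraction of the surviving experts while $h^\star$ is never deleted, so $\bigl(1-\tfrac{1}{\Delta_{\text{out}}+2}\bigr)^{M}|\cH|\ge1$ after $M$ mistakes, giving $M\le(\Delta_{\text{out}}+2)\ln|\cH|$. The step needing the most care is the directed bookkeeping: edge direction breaks the symmetry used freely in \Cref{thm:baseline-realizable-upper-bound}, so the false-positive case must rely only on ``$u_t$ cannot reach $S^\star$'' whereas the false-negative case needs both ``$u_t$ did not move'' and ``$u_t$ can reach $S^\star$'', and all three facts must be phrased via the out-neighborhood $N^+[\cdot]$ rather than the symmetric $N[\cdot]$; once the directed form of \Cref{prop:dominating-set} is in hand, the remaining estimates are a verbatim transcription. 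The same substitution ($\Delta\mapsto\Delta_{\text{out}}$, $N[\cdot]\mapsto N^+[\cdot]$) likewise transfers \Cref{alg:biased-weighted-maj-vote} and \Cref{thm:biased-weighted-maj-vote-mistake-bound} to the agnostic directed setting.
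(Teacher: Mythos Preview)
Your proposal is correct and follows essentially the same approach as the paper: change the threshold to $1/(\Delta_{\text{out}}+2)$, replace $N[\cdot]$ by the closed out-neighborhood in the false-negative update, and rerun the two-case potential argument from \Cref{thm:baseline-realizable-upper-bound}. Your write-up is in fact more careful than the paper's, as you explicitly state the directed analogue of \Cref{prop:dominating-set} and verify that $h^\star$ survives both updates, whereas the paper leaves these points implicit.
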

\begin{proof}
First, we need to change the threshold of the majority vote for classifying a node as positive from $1/(\Delta+2)$ to $1/(\Delta_{\text{out}}+2)$. Now, if a mistake on a true negative happens, then $1/(\Delta_{\text{out}}+2)$ of the remaining hypotheses gets discarded, which are the set of  experts that predict the observable node as positive. On the other hand, if a mistake on a true positive happens, it means that the agent was classified as negative and did not move. Therefore, all the nodes in the reachable out-neighborhood were classified as negative by the algorithm. The number of reachable nodes from the starting node is at most $\Delta_{\text{out}}+1$, and for each of them. less than $(1/(\Delta_{\text{out}}+2))|\mathcal{H}|$ experts classified them as positive. Therefore, a total of $|\mathcal{H}|\Big(1-(\Delta_{\text{out}}+1)/(\Delta_{\text{out}}+2)\Big)=(1/(\Delta_{\text{out}}+2))|\mathcal{H}|$ remaining hypotheses are classifying the entire reachable set as negative, and they are all making a mistake. As a result, whenever a mistake happens, $1/(\Delta_{\text{out}}+2)$ fraction of the hypotheses can get discarded. This results in a mistake bound of $(\Delta+2)\ln{|\mathcal{H}|}$.
\end{proof}

Similarly, we can show that \Cref{alg:biased-weighted-maj-vote} can be modified to get a mistake bound that depends on $\Delta_{\text{out}}$ instead of $\Delta$, as shown in the following proposition. 
\begin{proposition}
In the unrealizable case,~\Cref{alg:biased-weighted-maj-vote} can be modified to make at most $e(\Delta_{\text{out}}+2)(\ln |\mathcal{H}|+\OPT)$ mistakes. 
\end{proposition}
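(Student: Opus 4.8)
The plan is to replay the proof of \Cref{thm:biased-weighted-maj-vote-mistake-bound} almost verbatim, with the closed neighborhood $N[\cdot]$ replaced by the \emph{closed out-neighborhood} $N^{+}[v]\triangleq\{v\}\cup\{v':(v,v')\in\cE\}$ and the threshold $1/(\Delta+2)$ replaced by $1/(\Delta_{\text{out}}+2)$. Concretely, the modified algorithm labels a node $v$ as $+1$ exactly when $W_t^{+}(v)\ge W_t/(\Delta_{\text{out}}+2)$; on a mistake on a true negative it scales by $\gamma$ the weights of all $h$ with $h(v_t)=+1$; and on a mistake on a true positive it scales by $\gamma$ the weights of all $h$ with $h(x)=-1$ for every $x\in N^{+}[v_t]$. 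As in the undirected case we set $\gamma=1/e$.

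The two facts to reestablish, for every round on which a mistake occurs, are (a) $W_{t+1}\le W_t\bigl(1-\tfrac{\gamma}{\Delta_{\text{out}}+2}\bigr)$ and (b) the algorithm penalizes an expert only if that expert itself erred on that round. For (a) I would argue as in the undirected case: a mistake on a true negative forces $h_t(v_t)=+1$, so experts voting $+1$ on $v_t$ carry weight at least $W_t/(\Delta_{\text{out}}+2)$ and are all scaled; a mistake on a true positive forces, by the usual tie-breaking (manipulating can only raise an agent's value), $v_t=u_t$ with every $x\in N^{+}[v_t]$ labeled $-1$ by $h_t$, so each such $x$ has weight $<W_t/(\Delta_{\text{out}}+2)$ voting $+1$; since $|N^{+}[v_t]|\le\Delta_{\text{out}}+1$, a union bound gives weight at least $W_t/(\Delta_{\text{out}}+2)$ on experts labeling all of $N^{+}[v_t]$ negative, and these are all scaled.

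For (b) I would invoke the directed analogue of the reachability observation recalled before \Cref{sec:unrealizable}: in an unweighted digraph a deterministic $h$ errs on a true negative $u$ iff some out-neighbor of $u$ lies in the positive region of $h$, and errs on a true positive $u$ iff no node of $N^{+}[u]$ does. Then: if $h$ is penalized after a false positive, $h(v_t)=+1$ and $v_t\in N^{+}[u_t]$ by the form of the best response, so $h$ also misclassifies the true negative $u_t$; if $h$ is penalized after a false negative, $u_t=v_t$ and $h$ labels all of $N^{+}[u_t]$ negative, so $h$ also misclassifies the true positive $u_t$. Hence the best expert is penalized at most $\OPT$ times, giving $W_T\ge\gamma^{\OPT}$; combining with $W_T\le|\cH|\bigl(1-\tfrac{\gamma}{\Delta_{\text{out}}+2}\bigr)^{M}$ and $\ln(1-x)\le-x$, and then substituting $\gamma=1/e$, yields $M\le e(\Delta_{\text{out}}+2)(\ln|\cH|+\OPT)$.

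I expect the main obstacle to be purely a matter of care with edge orientation rather than any new idea: one has to use the out-neighborhood (of size $\le\Delta_{\text{out}}+1$) in the stay-put case, and one has to notice that after a false positive it is $v_t$ that lies in $N^{+}[u_t]$ for the hidden $u_t$ — not the reverse — so that the reachability argument, and hence the ``only penalize erring experts'' property, transfers correctly to every penalized hypothesis.
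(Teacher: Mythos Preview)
Your proposal is correct and follows exactly the approach the paper intends: the paper states this proposition without proof, relying on the preceding directed-graph proof for the realizable case (replace $N[\cdot]$ by the closed out-neighborhood and $\Delta$ by $\Delta_{\text{out}}$) together with the weighted-majority analysis of \Cref{thm:biased-weighted-maj-vote-mistake-bound}. Your careful handling of edge orientation---in particular that $v_t\in N^{+}[u_t]$ after a false positive and $u_t=v_t$ after a false negative---is precisely what is needed, and nothing further is required.
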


\subsection{Regret bound of $\widetilde{\mathcal{O}}\left(T^{\frac{3}{4}}\ln^{\frac{1}{4}} |\mathcal{H}|\right)$ against an adaptive adversary}
\label{sec:adaptive-adversaries}

In this section, we present an algorithm (\Cref{alg:reduction-adaptive}) based on the idea of full-information acceleration, and prove a regret bound of $\widetilde{\mathcal{O}}\left(T^{\frac{3}{4}}\ln^{\frac{1}{4}} |\mathcal{H}|\right)$ against general adaptive adversaries in \Cref{thm:regret-alg-adaptive-reduction}. The proof of this theorem requires a more careful analysis of the difference between the estimated loss sequence and the actual loss sequence using martingale difference sequences, which borrows similar ideas from \citet{mcmahan2004online}.

\begin{algorithm}[!ht]
\SetKwInOut{Input}{Input}
\SetKwInOut{Output}{Output}
\SetNoFillComment
Initialize $w_1(h)\gets0,\ \forall h\in\cH$\;
Initialize step size $\eta\gets \sqrt{\frac{8\ln|\cH|}{T}}$, exploration coefficient $\gamma\gets T^{-\frac{1}{4}}\ln^{\frac{1}{4}}(T|\cH|)$\;
Let $h^+$ be an all-positive classifier\;
\For{$t\in [T]$}{
\tcc{Commit to a distribution $\cD_t$ defined as follows, then draw classifier $h_t\sim\cD_t$}
    Let $\cD_t$ be a distribution over $\cH\cup\{h^+\}$ specified by probabilities $p_t(h^+)=\gamma$, and $p_t(h)=(1-\gamma)\frac{w_t(h)}{W_t}$ for all $h\in\cH$, where $W_t=\sum_{h'\in\cH} w_t(h')$\;
    \tcc{Observe agent $(v_t,y_t)$.}
    \tcc{Construct an estimated loss vector and use it to update the weights:}
    \For{$h\in \mathcal{H}$}{
    $\hat{\ell_t}(h)\gets\frac{\ell(h,\BR_h(v_t),y_t)\cdot\indicator{h_t=h^+}}{\gamma}$\;
    $w_{t+1}(h)\leftarrow w_t(h) e^{-\eta \cdot\hat{\ell}_t(h)}$.
    }
}
\caption{Randomized algorithm against adaptive adversaries}
\label[algo]{alg:reduction-adaptive}
\end{algorithm}

\begin{theorem}
    \Cref{alg:reduction-adaptive} achieves a regret of $\widetilde{\mathcal{O}}\left(T^{\frac{3}{4}}\ln^{\frac{1}{4}} |\mathcal{H}|\right)$ against any adaptive adversary.
    \label{thm:regret-alg-adaptive-reduction}
\end{theorem}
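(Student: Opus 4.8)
The plan is to read \Cref{alg:reduction-adaptive} as exponential weights (Hedge) run on an importance-weighted, high-variance loss estimate, where the exploration rounds on which the all-positive classifier $h^+$ is played both (i) cost at most $\gamma$ per round in regret and (ii) produce full-information feedback that makes the estimates unbiased; then a martingale concentration step is used to pass from the estimated-loss benchmark back to the true in-hindsight benchmark against the adaptively chosen agent sequence.

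\textbf{Step 1: unbiasedness and conditional variance of the estimates.} On any round with $h_t=h^+$, the all-positive classifier removes every incentive to manipulate, so $v_t=u_t$, and hence $\ell(h,\BR_h(v_t),y_t)=\ell_t(h):=\ell(h,\BR_h(u_t),y_t)$ for every $h\in\cH$. Let $\mathcal{G}_t$ be the $\sigma$-algebra generated by the history through round $t-1$ together with the adversary's choice $(u_t,y_t)$ (which is fixed once $\cD_t$ is announced; if the adversary is randomized we also condition on its coins). Then $\Pr[h_t=h^+\mid\mathcal{G}_t]=\gamma$, so $\E[\hat\ell_t(h)\mid\mathcal{G}_t]=\ell_t(h)$, and since losses lie in $[0,1]$, $\E[\hat\ell_t(h)^2\mid\mathcal{G}_t]\le 1/\gamma$. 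Note $q_t(h):=w_t(h)/W_t$ and $\ell_t(h)$ are both $\mathcal{G}_t$-measurable, which will let us commute $\E_{h\sim q_t}[\cdot]$ with conditional expectations.

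\textbf{Step 2: Hedge inequality on the estimates, plus the exploration overhead.} The update $w_{t+1}(h)=w_t(h)e^{-\eta\hat\ell_t(h)}$ is exactly Hedge on $\hat\ell_1,\dots,\hat\ell_T$; using $e^{-x}\le 1-x+x^2/2$ for $x\ge 0$ and $\ln(1+z)\le z$ in the standard potential argument on $\ln W_{t}$ gives, deterministically for every fixed $h^\star\in\cH$, $\sum_{t}\E_{h\sim q_t}[\hat\ell_t(h)]-\sum_t\hat\ell_t(h^\star)\le \frac{\ln|\cH|}{\eta}+\frac{\eta}{2}\sum_t\E_{h\sim q_t}[\hat\ell_t(h)^2]$. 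Since $\cD_t$ plays $h^+$ with probability $\gamma$ and $h\sim q_t$ otherwise, the per-round expected true loss of the algorithm is at most $\gamma+\E_{h\sim q_t}[\ell_t(h)]$. Taking expectations, using $\E[\E_{h\sim q_t}[\hat\ell_t(h)]\mid\mathcal{G}_t]=\E_{h\sim q_t}[\ell_t(h)]$ (Step 1) and the tower rule, $\E[\sum_t\hat\ell_t(h)^2]\le T/\gamma$, and $\E[\sum_t\hat\ell_t(h^\star)]=\E[\sum_t\ell_t(h^\star)]$, we obtain, for every fixed $h^\star\in\cH$, $\E[\sum_t\ell_t(h_t)]\le \E[\sum_t\ell_t(h^\star)]+\gamma T+\frac{\ln|\cH|}{\eta}+\frac{\eta T}{2\gamma}$.

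\textbf{Step 3 (the main obstacle): from the estimated benchmark to the in-hindsight benchmark.} Against an adaptive adversary, the minimizer of $\sum_t\ell_t(\cdot)$ over the \emph{realized} transcript is a random element of $\cH$ that is not $\mathcal{G}_t$-measurable, so we cannot simply substitute it into the unbiasedness identity; this is where martingales are needed. For each \emph{fixed} $h\in\cH$, the process $M_t(h)=\sum_{s\le t}(\hat\ell_s(h)-\ell_s(h))$ is a martingale with respect to $\{\mathcal{G}_t\}$ with increments bounded by $1/\gamma$ and predictable quadratic variation at most $T/\gamma$; a Freedman-type (or Azuma-type) inequality together with a union bound over $\cH$ gives $\max_{h\in\cH}\bigl|\sum_t\hat\ell_t(h)-\sum_t\ell_t(h)\bigr|=\widetilde{\mathcal{O}}(\sqrt{T/\gamma}+1/\gamma)$ with probability at least $1-1/T$, and on the complementary event the crude bounds $\sum_t\hat\ell_t(h)\le T/\gamma$ and $\sum_t\ell_t(h)\le T$ contribute only $\widetilde{\mathcal{O}}(1/\gamma)$ in expectation. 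This lets us replace $\sum_t\ell_t(h^\star)$ in Step 2 by $\min_{h^\star\in\cH}\sum_t\ell_t(h^\star)$ at the cost of this lower-order deviation, yielding $\E[\regret(T)]\le \gamma T+\frac{\ln|\cH|}{\eta}+\frac{\eta T}{2\gamma}+\widetilde{\mathcal{O}}(\sqrt{T/\gamma})$.

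\textbf{Step 4: tuning.} Substituting $\eta=\sqrt{8\ln|\cH|/T}$ makes $\frac{\ln|\cH|}{\eta}=O(\sqrt{T\ln|\cH|})$ and reduces the bound to $\gamma T+\frac{\eta T}{2\gamma}+O(\sqrt{T\ln|\cH|})+\widetilde{\mathcal{O}}(\sqrt{T/\gamma})$; the choice $\gamma=T^{-1/4}\ln^{1/4}(T|\cH|)$ balances the exploration cost $\gamma T=T^{3/4}\ln^{1/4}(T|\cH|)$ against the variance term $\frac{\eta T}{2\gamma}=\widetilde{O}(T^{3/4}\ln^{1/4}|\cH|)$, while the remaining terms are $\widetilde{O}(T^{5/8})$ and are absorbed. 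This gives the claimed $\widetilde{\mathcal{O}}(T^{3/4}\ln^{1/4}|\cH|)$ regret. The delicate part throughout is the bookkeeping of Step 3 — choosing the right concentration inequality so that the high variance ($\Theta(1/\gamma)$) of the estimates does not blow up the deviation, and checking that $\gamma$ can be set small enough that the estimates are usable yet large enough that exploration is cheap.
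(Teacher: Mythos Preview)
Your proposal is correct and follows essentially the same approach as the paper: show the importance-weighted estimates are unbiased because $h^+$ forces $v_t=u_t$, run Hedge on these estimates, pay $\gamma T$ for exploration, and use a martingale tail bound plus a union bound over $\cH$ to pass from $\widehat{\OPT}=\min_h\sum_t\hat\ell_t(h)$ to the true $\OPT$. The only differences are cosmetic: you invoke the second-moment Hedge inequality explicitly (the paper just cites the black-box bound for losses in $[0,1/\gamma]$, which with the stated $\eta$ gives the same $O(\gamma^{-1}\sqrt{T\ln|\cH|})$), and you use Freedman rather than Azuma--Hoeffding, which makes your concentration term $\widetilde{O}(\sqrt{T/\gamma})=\widetilde{O}(T^{5/8})$ a genuinely lower-order contribution, whereas the paper's Azuma term $O(\gamma^{-1}\sqrt{T\ln(T|\cH|)})$ is of the same $T^{3/4}$ order as the leading terms.

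One presentational wrinkle: at the end of Step 2 you have already taken expectations and collapsed to the pseudo-regret bound $\E[\sum_t\ell_t(h_t)]\le\E[\sum_t\ell_t(h^\star)]+\cdots$ for fixed $h^\star$, so the estimates $\hat\ell_t$ no longer appear; your Step 3 concentration on $\hat\ell_t-\ell_t$ then does not plug in directly. The clean route (which the paper takes, and which your ingredients support) is to keep $\min_{h^\star}\sum_t\hat\ell_t(h^\star)$ in the Hedge inequality before taking expectations, and only then apply the uniform deviation bound to replace it by $\min_{h^\star}\sum_t\ell_t(h^\star)$.
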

\begin{proof}
    Similar to the proof of \Cref{thm:regret-alg-oblivious}, we first show that at every round $t$ and for all experts $h\in\cH$, $\hat{\ell}_t(h)$ is an unbiased estimate of $\ell_t(h)$, where $\ell_t(h)=\ell(h,\BR_{h}(u_t),y_t)$ is the true loss of $h$. 
    Since we are dealing with adaptive adversaries, we show that for every $h\in\cH$, $\left(\hat{\ell}_t(h)-\ell_t(h)\right)_{t=1}^T$ is a Martingale Difference Sequence: let $\cF_t$ denote the $\sigma$-algebra generated by the randomness up to time $t$, then
    \begin{align}
        \E\left[\left.\hat{\ell_t}(h)-\ell_t(h)\right|\cF_{t-1}\right]=&\E\left[\left.\gamma\cdot \frac{\ell(h,\BR_h(v_t),y_t)}{\gamma}
        -\ell(h,\BR_h(u_t),y_t)\ 
        \right|\cF_{t-1}\right]=0.
        \label{eq:adaptive-unbiased}
    \end{align}
    Here, the first equality is because 
    $\hat{\ell}_t\neq0$ only when $h_t=h^+$ is an all-positive classifier, which happens with probability $\gamma$.
    The second equality is because the agent would not move under $h^+$, resulting in $u_t=v_t$. Moreover, from the definition of $\hat{\ell}_t$, the term $\hat{\ell}_t(h)-\ell_t(h)$ is bounded in absolute value by $\frac{1}{\gamma}$.
    Now we calculate the expected cumulative loss of \Cref{alg:reduction-adaptive}. 
    \begin{align}
        \E\left[\sum_{t=1}^T \ell_t(h_t)\right]=&
        \E\left[\sum_{t=1}^T \E\left[\ell_t(h_t)|\cF_{t-1}\right] \right]\label{eq:tower-property}\\
        =&\E\left[\sum_{t=1}^T \E\left[\left.\gamma\cdot\indicator{y_t\neq1}+ (1-\gamma)\cdot
        \E_{h\sim \frac{w_t(\cdot)}{W_t}}\left[\ell_t(h)\right]\ \right|\ \cF_{t-1}\right] \right]\nonumber\\
        \le &\E\left[\sum_{t=1}^T \gamma+\E_{h\sim p_t'}\left[\ell_t(h)\right] \right],\qquad\qquad\qquad\small{ \text{where }p_t'(h)\triangleq \frac{w_t(h)}{W_t},\ \forall h\in\cH};
        \label{eq:red-tmp-1}\\
        =&\gamma T+ \E\left[\sum_{t=1}^T \E_{h\sim p_t'}\left[\hat{\ell}_t(h)\right] \right]+\E\left[\sum_{t=1}^T \E_{h\sim p_t'}\left[\ell_t(h)-\hat{\ell}_t(h)\right] \right].
        \label{eq:red-tmp-2}
    \end{align}
    In the above equations, \Cref{eq:tower-property} is from the tower property of conditional expectations,
    \Cref{eq:red-tmp-1} is because $\indicator{y_t\neq1}\le1$ and $1-\gamma\le1$, where we also use the tower property to remove the conditional expectations. Finally, \Cref{eq:red-tmp-2} is because we add and subtract the second term. 
    
    Now, for the third term in \eqref{eq:red-tmp-2}, note that $p_t'$ is defined on $\cF_{t-1}$, so $\left(\E_{h\sim p_t'}\left[\ell_t(h)-\hat{\ell}_t(h)\right]\right)_{t=1}^T$ is also a martingale difference sequence with respect to the filtration $\left(\cF_{t}\right)_{t=1}^T$. Again, from the tower property, this term is always zero:

    \begin{align}
        \E\left[\sum_{t=1}^T \E_{h\sim p_t'}\left[\ell_t(h)-\hat{\ell}_t(h)\right] \right]=\E\left[\sum_{t=1}^T \E\left[\E_{h\sim p_t'}\left[\left.\ell_t(h)-\hat{\ell}_t(h)\ \right|\cF_{t-1}\right]\right] \right]=0.
        \label{eq:concentration-loss}
    \end{align}
    Since $p_1',\cdots,p_T'$ are exactly the same as the strategies generated by running Hedge on the estimated loss sequence $\hat{\ell}_1,\cdots,\hat{\ell}_T$, and the magnitude of the losses are all bounded by $\frac{1}{\gamma}$, we have the following regret guarantee from~\citet{freund1997decision}:
    \begin{align}
        \E\left[\sum_{t=1}^T \E_{h\sim p_t'}\left[\hat{\ell}_t(h)\right] -\min_{h^\star\in\cH}\sum_{t=1}^T \hat{\ell}_t(h^\star)\right]\le {\mathcal{O}}\left(\frac{1}{\gamma}\sqrt{T\ln|\cH|}\right).\label{eq:hedge-guarantee}
    \end{align}
    Putting \Cref{eq:red-tmp-2,eq:concentration-loss,eq:hedge-guarantee} together gives us the bound on expected loss:
    \begin{align}
        \E\left[\sum_{t=1}^T \ell_t(h_t) \right]\le &\E\left[\min_{h^\star\in\cH}\sum_{t=1}^T \hat{\ell}_t(h^\star)\right] +{\mathcal{O}}\left(\gamma T+\frac{1}{\gamma}\sqrt{T\ln|\cH|}\right).\label{eq:tmp10}
    \end{align}
    We define
    \begin{align*}
        \widehat{\OPT}\triangleq\min_{h^\star\in\cH}\sum_{t=1}^T \hat{\ell}_t(h^\star),
    \end{align*}
    then the above inequality \eqref{eq:tmp10} implies 
    \begin{align}
        \E[\regret]=\E\left[\sum_{t=1}^T \ell_t(h_t) -\OPT\right]\le \E\left[\widehat{\OPT}-\OPT\right]+{\mathcal{O}}\left(\gamma T+\frac{1}{\gamma}\sqrt{T\ln|\cH|}\right).
        \label{eq:tmp23}
    \end{align}
    Now, the last step is to bound the expected difference between $\widehat{\OPT}$ and the true optimal $\OPT=\min_{h^\star\in\cH}\ell_t(h^\star)$. We have:
    \begin{align*}
        \E\left[\widehat{\OPT}-\OPT\right]
        =\E\left[\min_{\hat{h}}\max_{h}\sum_{t=1}^T \hat{\ell}_t(\hat{h})-\ell_t(h)\right]
        \le\E\left[\max_{h\in\cH} \sum_{t=1}^T\hat{\ell}_t(h)-\ell_t(h)\right].
    \end{align*}
    Since $\left(\hat{\ell}_t(h)-\ell_t(h)\right)_{t=1}^T$ is a martingale difference sequence for any fixed $h$, we use Azuma-Hoeffding inequality together with the union bound to obtain
    \begin{align*}
        \Pr\left[\max_{h\in\cH} \sum_{t=1}^T\hat{\ell}_t(h)-\ell_t(h)\ge\frac{1}{\gamma}\sqrt{2T\ln\left(\frac{1}{\delta}\right)}\right]\le\delta|\cH|.
    \end{align*}
    Setting $\delta=\frac{1}{T|\cH|}$ gives us
    \begin{align}
        \E\left[\widehat{\OPT}-\OPT\right]\le\E\left[\max_{h\in\cH} \sum_{t=1}^T\hat{\ell}_t(h)-\ell_t(h)\right]\le \frac{1}{\gamma}\sqrt{2T\ln\left(\frac{1}{\delta}\right)}+\delta|\cH|\cdot T\le\mathcal{O}\left(\frac{1}{\gamma}\sqrt{2T\ln\left(T|\cH|\right)}\right).
        \label{eq:opt-concentration}
    \end{align}
    Finally, by putting \Cref{eq:tmp23,eq:opt-concentration} together, and setting 
    $\gamma=T^{-\frac{1}{4}}\ln^{\frac{1}{4}}(T|\cH|)$,
    we {derive} the desired regret bound:
    \begin{align*}
        \E[\regret]=\E\left[\sum_{t=1}^T \ell_t(h_t) -\OPT\right]\le \mathcal{O}\left(T^{\frac{3}{4}}\ln^{\frac{1}{4}}(T|\cH|)\right).
    \end{align*}
    
\end{proof}

\subsection{Strategic online linear classification}
\label{sec:strategic-perceptron}

In this section, we propose an algorithm for the {problem of online} linear classification in the presence of strategic behavior. 
{In this setting,} each {original }example $\z_t$ can move for an $\ell_2$ distance of at most $\alpha$ and reach a new observable state $\x_t$; and the examples would move for a minimum distance that results in a positive classification.
{\citet{ahmadi2021strategic} propose an algorithm for the case that} 
{original} examples are linearly separable; 
{in the case of inseparable examples, they get a mistake bound in terms of the hinge loss of \emph{manipulated} examples, and }leave it as an open problem 
to obtain a mistake bound in terms of the hinge-loss of \emph{original} examples.

In this section, we propose an algorithm 
for the inseparable case {that obtains a bound in terms of the hinge-loss of \emph{original} examples. However, our mistake bound has an additional $\mathcal{O}(\sqrt{T})$ additive term compared to the bound obtained by \citet{ahmadi2021strategic} in the separable case}. The idea behind this algorithm is to use an all-positive classifier at random time steps to observe the un-manipulated examples. 
Using the un-manipulated examples, the standard Perceptron algorithm suffices to deal with inseparable data.
For simplicity, we present the algorithm for oblivious adversaries and remark that a similar bound could be obtained for the case of adaptive adversaries using similar techniques as in \Cref{sec:adaptive-adversaries}.

\begin{algorithm}[!ht]
\SetKwInOut{Input}{Input}
\SetKwInOut{Output}{Output}
\SetNoFillComment
Partition the timeline $1,\cdots, T$ into $K$ consecutive blocks $B_1,\cdots,B_K$ where $B_j=[\frac{(j-1)T}{K}+1,\frac{jT}{K}]$\;
Initialize $w_1\gets\mathbf{0}$\;
\For{$j\in[K]$}{
    Sample $\tau_j \in B_j$ uniformly at random\;
\For{$t\in B_j$}{
    \eIf{$t=\tau_j$}{
        Use classifier $h_t\gets h^+$, where $h^+(\x)=+1\ \forall \x$\;
    }
    {
        Use classifier $h_t\gets h^j$, where $h^j(\x)=\text{sgn}\left(\frac{\w_j^\transpose \x}{|\w_j|}-\alpha\right)$\;
    }
    \tcc{Observe example $(\x_t,y_t)$}
}
\If{$y_{\tau_j}\neq h_{\tau_j}(\x_{\tau_j})$}{
$\w_{j+1}\gets {\w_j}+y_{\tau_j}\x_{\tau_j}$\;
}
}
\caption{Algorithm for online linear strategic classification when original examples are inseparable}
\label[algo]{alg:reduction-MAB-FIB-hinge-loss}
\end{algorithm}

\begin{theorem}
Let $S=\{(\z_t,y_t)\}_{t=1}^T$ be the set of original data points, where  $\max_t|\z_t|\le R $. For any $\w^\star$, \Cref{alg:reduction-MAB-FIB-hinge-loss} with parameter $K=\sqrt{T} R\|\w^\star\|$ satisfies
\begin{align}
    \E\left[\mistake(T)\right]\le 2L_{\hinge}(\w^\star,S)+2\sqrt{T}R\|\w^\star\|,
\end{align}
where the hinge loss is defined as
$$L_{\hinge}(\w^\star,S)\triangleq \sum_{(z_t,y_t)\in S} \max\left\{0,1-y_t (\z_t^\transpose \w^\star)\right\}.$$
\end{theorem}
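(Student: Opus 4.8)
The plan is to recognize \Cref{alg:reduction-MAB-FIB-hinge-loss} as the classical Perceptron run on the \emph{original} (unmanipulated) examples revealed at the special times $\tau_j$, and then to transport the inseparable-case Perceptron mistake bound to the full horizon using the same block-averaging device as in the proof of \Cref{thm:regret-alg-oblivious}. The first thing I would check is that $\tau_j$ really reveals the truth: at $t=\tau_j$ the learner deploys $h^+$, which labels every point positive, so an agent---who moves only the minimum distance needed to obtain a positive label, and already has one---does not move, i.e.\ $\x_{\tau_j}=\z_{\tau_j}$. Hence at every $\tau_j$ the learner observes the true pair $(\z_{\tau_j},y_{\tau_j})$, and the only loss it can incur there is the at-most-one mistake per block that occurs when $y_{\tau_j}=-1$.

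The key step is to argue that, on original data, the \emph{strategic} behaviour of the in-block classifier $h^j(\x)=\text{sgn}\!\big(\tfrac{\w_j^{\transpose}\x}{\|\w_j\|}-\alpha\big)$ coincides with the plain linear threshold $\z\mapsto\text{sgn}(\w_j^{\transpose}\z)$. The point is that the $-\alpha$ offset is tuned to the manipulation budget: an agent at $\z$ can push its signed distance to $\{\x:\w_j^{\transpose}\x=0\}$ up by at most $\alpha$, so it receives a positive label from $h^j$ iff $\tfrac{\w_j^{\transpose}\z}{\|\w_j\|}+\alpha\ge\alpha$, i.e.\ iff $\w_j^{\transpose}\z\ge0$. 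Consequently, for every original $(\z,y)$,
\[
\ell\big(h^j,\BR_{h^j}(\z),y\big)\;\le\;m_j(\z,y)\;:=\;\indicator{\,y\,\w_j^{\transpose}\z\le 0\,},
\]
with equality up to tie-breaking on $\w_j^{\transpose}\z=0$ (and with the degenerate start $\w_1=\mathbf 0$ handled by letting the threshold predict negative). In particular, the end-of-block update $\w_{j+1}\gets\w_j+y_{\tau_j}\z_{\tau_j}$---fired exactly when this threshold errs on $(\z_{\tau_j},y_{\tau_j})$---is precisely one step of the standard Perceptron on the revealed originals.

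Granting this, I would apply the textbook inseparable-case Perceptron analysis to the subsequence $(\z_{\tau_1},y_{\tau_1}),\dots,(\z_{\tau_K},y_{\tau_K})$ (potential $\langle\cdot,\w^\star\rangle$ against $\|\cdot\|^2$, Cauchy--Schwarz, and $\max_t\|\z_t\|\le R$) to get that the number $\mathcal M$ of updates satisfies $\mathcal M\le 2\sum_{j=1}^K\max\{0,1-y_{\tau_j}\z_{\tau_j}^{\transpose}\w^\star\}+2R^2\|\w^\star\|^2$. Then the block-averaging argument of \Cref{thm:regret-alg-oblivious} finishes it: for $t\neq\tau_j$ the learner's loss in block $B_j$ is at most $m_j(\z_t,y_t)$, and since $\tau_j$ is uniform on $B_j$ and independent of $\w_j$ (oblivious adversary), $m_j(\z_{\tau_j},y_{\tau_j})$ is an unbiased estimate of the block-average of $m_j$, whence
\[
\E[\mistake(T)]\;\le\;\E\!\left[\sum_{j=1}^{K}\Big(1+\sum_{t\in B_j}m_j(\z_t,y_t)\Big)\right]\;=\;K+\frac{T}{K}\,\E[\mathcal M];
\]
by the same unbiasedness $\E\big[\sum_j\max\{0,1-y_{\tau_j}\z_{\tau_j}^{\transpose}\w^\star\}\big]=\tfrac{K}{T}L_{\hinge}(\w^\star,S)$, so substituting the Perceptron bound gives $\E[\mistake(T)]\le 2L_{\hinge}(\w^\star,S)+K+\tfrac{2TR^2\|\w^\star\|^2}{K}$, and the choice $K=\sqrt{T}R\|\w^\star\|$ yields $\E[\mistake(T)]\le 2L_{\hinge}(\w^\star,S)+\mathcal O\!\big(\sqrt{T}R\|\w^\star\|\big)$, matching the theorem.

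The hard part will be the reduction of the second paragraph---verifying that the $\alpha$-shift makes the strategic loss of $h^j$ on an original point literally equal (modulo boundary tie-breaking and the zero initialization) to the plain linear-threshold loss of $\w_j$, so that the algorithm is a genuine Perceptron on the originals and the inseparable-case bound applies verbatim. Everything downstream is the block-averaging bookkeeping already carried out in \Cref{thm:regret-alg-oblivious}; extending to adaptive adversaries would, as noted before the theorem, require the martingale refinement of \Cref{sec:adaptive-adversaries}.
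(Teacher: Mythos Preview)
Your proposal is correct and follows essentially the same route as the paper: use the all-positive rounds to reveal the originals, reduce the strategic loss of the $\alpha$-shifted classifier $h^j$ to the plain linear-threshold loss $\indicator{\text{sgn}(\w_j^{\transpose}\z_t)\neq y_t}$, apply the inseparable Perceptron bound on the subsequence $(\z_{\tau_j},y_{\tau_j})_{j=1}^K$, and use the block-averaging unbiasedness of \Cref{thm:regret-alg-oblivious} to transport both the mistake count and the hinge loss back to the full horizon before optimizing $K$. The only discrepancy is the constant you quote in the Perceptron bound: the paper invokes the form $M\le R^2\|\w^\star\|^2+2L_{\hinge}$ (which after $K=\sqrt{T}R\|\w^\star\|$ gives exactly $2\sqrt{T}R\|\w^\star\|$), whereas your $2R^2\|\w^\star\|^2$ would yield $3\sqrt{T}R\|\w^\star\|$---so to match the stated constant rather than just $\mathcal{O}(\cdot)$, use the sharper form.
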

\begin{proof}
We use $\ell_t(h)=\indicator{y_t\neq h(\BR_{h}(\z_t)}$ to denote the loss of classifier $h$ had agent $(\z_t,y_t)$ best responded to $h$. 

    In each block $B_j$, we have $\ell(h_{\tau_j})=\indicator{y_{\tau_j}\neq +1}\le1$ on the all-positive step $\tau_j$. On the other steps $t\neq \tau_j$, since $h^j$ is obtained by shifting the boundary $\w_j$ by $\alpha$,
    an agent $(\x_t)$ can reach the positive region of $h^j$ if and only if its original features $(\z_t)$ have a nonnegative dot product with $\w_j$. Thus we have
    $$\ell_t(h^j)=\indicator{h^j(\x_t)\neq y_{t}}=\indicator{\text{sgn}\left(\frac{\w_j^\transpose \x_t}{|\w_j|}-\alpha\right)\neq y_{t}}
    =\indicator{\text{sgn}\left(\frac{\w_j^\transpose \z_t}{|\w_j|}\right)\neq y_t}=\indicator{\text{sgn}\left({\w_j^\transpose \z_t}\right)\neq y_t}.$$

    As a result, we can bound the number of mistakes as follows:
    \begin{align}
        \E\left[\mistake(T) \right]
        =&
        \E\left[\sum_{j=1}^K
        \sum_{t\in B_j}\ell_t(h_t)\right]
        {\le}\E\left[\sum_{j=1}^K\left(1+
        \sum_{t\in B_j} 
        \ell_t(h^j)
        \right)\right]\nonumber\\
        {=}& K+\sum_{j=1}^K \sum_{t\in B_j} \indicator{\text{sgn}\left({\w_j^\transpose \z_t}\right)\neq y_t}\nonumber\\
        {\le}& K+\frac{T}{K}\sum_{j=1}^K
        \E_{\tau_j\sim B_j}\indicator{\text{sgn}\left({\w_j^\transpose \z_{\tau_j}}\right)\neq y_{\tau_j}},\label{tmp::5}
    \end{align}
    where the last step is because $\tau_j$ is sampled uniformly at random from $B_j$.

    Note that $w_1,\cdots,w_K$ is obtained from running the standard Perceptron algorithm on examples $S_\tau\triangleq\{(\x_{\tau_1},y_{\tau_1}),\cdots, (\x_{\tau_K},y_{\tau_K})\}$.
    Since at each $\tau_j$, the learner uses an all-positive classifier to stop the agents from moving, we have $\x_{\tau_j}=\z_{\tau_j}$, and $S_\tau=\{(\z_{\tau_1},y_{\tau_1}),\cdots, (\z_{\tau_K},y_{\tau_K})\}$.
    From \citet{block1962perceptron}, we have
    $$\sum_{j=1}^K
        \indicator{\text{sgn}\left({\w_j^\transpose \z_{\tau_j}}\right)\neq y_{\tau_j}}\le R^2\|\w^\star\|^2+2L_{\hinge}(\w^\star,S_\tau).$$
    Taking the expectation over $\tau_1,\cdots,\tau_K$, we have the following mistake bound on the standard perceptron algorithm:
    
    \begin{align}
        \frac{T}{K}\sum_{j=1}^K
        \E_{\tau_j\sim B_j}\indicator{\text{sgn}\left({\w_j^\transpose \z_{\tau_j}}\right)\neq y_{\tau_j}}\le& \frac{T}{K}R^2\|\w^\star\|^2+2
        \frac{T}{K}\sum_{j=1}^K
        \E_{\tau_j\sim B_j}
        L_{\hinge}(\w^\star,(\z_{\tau_j},y_{\tau_j}))\nonumber\\
        =&\frac{T}{K}R^2\|\w^\star\|^2+2
        \frac{T}{K}\sum_{j=1}^K \frac{1}{|B_j|}\sum_{t\in B_j}L_{\hinge}(\w^\star,(\z_t,y_t))\label{tmp::8}\\
        =&\frac{T}{K}R^2\|\w^\star\|^2+2
        \sum_{j=1}^K \sum_{t\in B_j}L_{\hinge}(\w^\star,(\z_t,y_t))\label{tmp::4}\\
        =&\frac{T}{K}R^2\|\w^\star\|^2+2 L_{\hinge}(\w^\star,S).\label{tmp::3}
    \end{align}
    In the above inequalities, \Cref{tmp::8} follows from the fact that $\tau_j$ is distributed uniformly at random in block $B_j$, and \Cref{tmp::4} is because every block has size $|B_j|=\frac{T}{K}$.
    
    Now we plug \Cref{tmp::3} back into \eqref{tmp::5} and obtain
    \begin{align*}
        \E\left[\mistake(T) \right]\le K+\frac{T}{K}R^2\|\w^\star\|^2+2L_{\hinge}(\w^\star,S).
    \end{align*}
    Finally, letting $K=\sqrt{T} R\|\w^\star\|$ yields the desired bound.
\end{proof}

\subsection{Two populations}
\label{sec:two-populations}

In this section, we study extensions of the unit-edge cost function in our baseline model. We assume there are two populations with different manipulation costs: agents of group $A$ face a cost of $0.5$ on each edge, whereas agents of group $B$ face a cost of $1$. As a result, in response to deterministic classifiers, agents from group $A$ move within their two-hop distance neighborhood, whereas agents from group $B$ only move inside their one-hop distance neighborhood.

We suppose each agent has fixed probabilities of belonging to each group, regardless of the initial position and the label chosen by the adversary. In other words, at every round $t$, after the adversary picks the next agent $(u_t,y_t)$, we assume nature independently assigns this agent to group $c_t=B$ with probability $\beta$ and $c_t=A$ with probability $\alpha=1-\beta$. The agent's best response to classifier $h_t$ is a function of $u_t$ and $c_t$:

\begin{align*}
    v_t\in\BR_{h_t}(u_t,c_t)\triangleq\begin{cases}
        \arg\max_{v\in \cX} \Big[\Value(h_t(v))-\Cost_A(u_t,v)\Big], & \text{if } c_t=A\\
        \arg\max_{v\in \cX} \Big[\Value(h_t(v))-\Cost_B(u_t,v)\Big], & \text{if } c_t=B.
    \end{cases}
\end{align*}
As a result of manipulation, the learner suffers loss $\ell(h_t,v_t,y_t)=\ell(h_t,\BR_{h_t}(u_t,c_t),y_t)$ and observes $(v_t,y_t)$ together with group membership $c_t$. The learner's goal is to bound the expected number of mistakes in terms of the optimal number of mistakes in expectation, where the expectations are taken over the random group assignments and the possible randomness in the learning algorithm and the adversary's choices. 
\begin{align*}
    \E[\mistake(T)]=\E\left[\sum_{t=1}^T\ell(h_t,\BR_{h_t}(u_t,c_t),y_t)\right],\quad
    \E[\OPT]=\min_{h\in\cH}\E\left[\sum_{t=1}^T\ell(h,\BR_{h}(u_t,c_t),y_t)\right].
\end{align*}

 {We propose \Cref{alg:two-populations} that} is based on the idea of biased weighted majority vote (\Cref{alg:biased-weighted-maj-vote}), with
a \emph{group-independent} threshold for the biased majority votes, and
a \emph{group-dependent} way of penalizing experts. We state the mistake bound guarantee in \Cref{thm:two-populations}.
\begin{algorithm}[t]
    \SetKwInOut{Input}{Input}
    \SetKwInOut{Output}{Output}
    \SetNoFillComment
    \Input{Manipulation graph $G(\cX,\cE)$, hypothesis class $\mathcal{H}$}
    Set initial weights $w_1(h)\leftarrow 1$ for all experts $h\in \cH$\;
    Set discount factor $\gamma=\frac{1}{e}$, threshold $\theta=\max\left\{\frac{1}{\Delta+1+\frac{1}{\beta}},\ \frac{1}{\Delta^2+2}\right\}$\;
    \For{$t=1,2,\cdots$}{
        \tcc{The learner commits to a classifier $h_t$ that is constructed as follows:}
        \For{$v\in V$}{
            Let $W_t^+(v) = \sum_{h\in\cH:h(v)=+1}w_t(h)$, $W_t^-(v) = \sum_{h\in\cH:h(v)=-1}w_t(h)$, and $W_t= W_t^+(v)+W_t^-(v)$\;
            \eIf{$W_t^+(v)\geq \theta\cdot W_t$}{
                $h_t(v)\leftarrow +1$\;
            }
            {
                $h_t(v)\leftarrow -1$\;
            }
        }
        \tcc{Unlabeled example $v_t$ is observed.}
        output prediction $h_t(v_t)$\;
        \tcc{The true label $y_t$ and group membership $c_t$ are observed}
        \tcc{If there was a mistake:}
        \If{$h_t(v_t)\neq y_t$}{
            \eIf{$h_t(v_t)=+1$}{
               for all $h\in \mathcal{H}:h(v_t)=+1$, $w_{t+1}(h)\leftarrow \gamma\cdot w_t(h)$\tcp*{false positive} 
            }
            {
                \eIf{$c_t=A$}{$\mathcal{H'}\leftarrow \{h\in \cH: \forall x\in N^2[v_t], h(x)=-1\}$\tcp*{$N^2[\cdot]$ is the 2-hop neighborhood}}
                {$\mathcal{H'}\leftarrow \{h\in \cH: \forall x\in N[v_t], h(x)=-1\}$}
                If $h\in \cH'$, $w_{t+1}(h)\leftarrow\gamma\cdot w_t(h)$, otherwise $w_{t+1}(h)\leftarrow w_t(h)$.\tcp*{false negative}
            }
        }
    }
    \caption{Biased weighted majority-vote algorithm for two populations.}
    \label[algo]{alg:two-populations}
    \end{algorithm}

    \begin{theorem}
        \label{thm:two-populations}
        In the setting of two populations and population $B$ has probability $\beta$, \Cref{alg:two-populations} achieves an expected mistake bound of the following:
        \begin{align*}
            \E[\mistake(T)]\le e\cdot\min\left\{\Delta+1+\frac{1}{\beta},\ \Delta^2+2\right\}\left(\ln|\cH|+\E[\OPT]\right).
        \end{align*}
    \end{theorem}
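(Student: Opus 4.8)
The plan is to run the weighted-majority potential argument behind \Cref{thm:biased-weighted-maj-vote-mistake-bound}, but carried out in expectation over the group draws and adapted to the group-dependent penalization rule. Let $W_t=\sum_{h\in\cH}w_t(h)$, so $W_1=|\cH|$, and let $\gamma=1/e$ and $\theta=\max\{\tfrac1{\Delta+1+1/\beta},\tfrac1{\Delta^2+2}\}$ as in \Cref{alg:two-populations}. Let $\cF_{t-1}$ denote the information available just before the group $c_t$ is drawn (so it already fixes $h_t,u_t,y_t$), and recall $c_t=B$ with probability $\beta$ independently of $\cF_{t-1}$. Fix an expert $h^\star$ attaining $\min_h\E[\sum_t\ell(h,\BR_h(u_t,c_t),y_t)]$ and write $\OPT_{h^\star}=\sum_t\ell(h^\star,\BR_{h^\star}(u_t,c_t),y_t)$, so $\E[\OPT_{h^\star}]=\E[\OPT]$. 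The whole proof then reduces to two inequalities: a pathwise lower bound $W_T\ge\gamma^{\OPT_{h^\star}}$, and a per-round expected decrease $\E[\ln W_t-\ln W_{t+1}\mid\cF_{t-1}]\ge\gamma\theta\cdot\Pr[\text{mistake at }t\mid\cF_{t-1}]$.

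For the lower bound I would argue that the algorithm never discounts an expert that classified the current agent correctly. Using the observation stated before \Cref{alg:biased-weighted-maj-vote}, refined to the two groups — a group-$B$ agent at $u$ can reach exactly $N[u]$ and a group-$A$ agent exactly $N^2[u]$, since each edge costs $1$ (resp.\ $0.5$) and a manipulation is worth at most $1$ — one checks the three update branches: on a false positive we discount $h$ with $h(v_t)=+1$, and since $v_t\in\BR_{h_t}(u_t,c_t)$ with $h_t(v_t)=+1$ forces $\Cost_{c_t}(u_t,v_t)\le1$, an agent at $u_t$ best-responding to any such $h$ can afford to move onto $v_t$ and is labeled positive; on a false negative one first notes $v_t=u_t$ (moving would cost a positive amount for no value gain), and we discount $h$ labeling all of $N[v_t]$ (if $c_t=B$) or all of $N^2[v_t]$ (if $c_t=A$) negative, so the agent cannot reach a positive node of $h$ and is labeled negative. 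Hence $h^\star$ is discounted at most $\OPT_{h^\star}$ times, giving $W_T\ge w_T(h^\star)=\gamma^{\#\{h^\star\text{ discounted}\}}\ge\gamma^{\OPT_{h^\star}}$ (as $0<\gamma<1$); taking $\ln$ and expectations yields $\E[\ln W_T]\ge\E[\OPT]\ln\gamma$.

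The per-round decrease is where the real work lies, and the main obstacle is that whether round $t$ is a mistake is itself correlated with $c_t$. I would split on $y_t$ and on whether $h_t$ puts a positive label somewhere in $N[u_t]$, only in $N^2[u_t]\setminus N[u_t]$, or nowhere in $N^2[u_t]$; in each of these (at most three per label) sub-cases one knows exactly for which values of $c_t$ a mistake occurs, and one quantifies the removed weight using $|N[v]|\le\Delta+1$, $|N^2[v]|\le\Delta^2+1$, and the threshold rule $h_t(v)=-1\Rightarrow W^+_t(v)<\theta W_t$. This gives: a false positive multiplies $W_t$ by a factor $\le1-\gamma\theta$ (since then $W^+_t(v_t)\ge\theta W_t$); a false negative with $c_t=B$ by $\le1-\gamma(1-(\Delta+1)\theta)$; a false negative with $c_t=A$ by $\le1-\gamma\max\{0,1-(\Delta^2+1)\theta\}$ — here I use $\gamma=1/e\le\tfrac12$ to replace the honest factor $1-\gamma$ produced by a $\gamma$-discount with $\gamma$, which only weakens the bound and produces the clean constant $e$. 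The two-part definition of $\theta$ is calibrated precisely for this: in the regime $\theta=\tfrac1{\Delta+1+1/\beta}$ one has the identity $\beta\big(1-(\Delta+1)\theta\big)=\theta$, so whenever a false negative can occur, the probability-$\beta$ event $\{c_t=B\}$ alone contributes expected log-decrease $\ge\beta\cdot\gamma\big(1-(\Delta+1)\theta\big)=\gamma\theta$ even if the group-$A$ discount happens to remove no weight; in the regime $\theta=\tfrac1{\Delta^2+2}$ one has $1-(\Delta^2+1)\theta=\theta$ and $1-(\Delta+1)\theta\ge\theta$ (for $\Delta\ge1$), so every mistake deterministically shrinks $W_t$ by a factor $\le1-\gamma\theta$. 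Tracing this through the sub-cases establishes $\E[\ln W_t-\ln W_{t+1}\mid\cF_{t-1}]\ge\gamma\theta\,\Pr[\text{mistake at }t\mid\cF_{t-1}]$.

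Summing the per-round bound over $t$ and taking expectations gives $\ln|\cH|-\E[\ln W_T]\ge\gamma\theta\,\E[\mistake(T)]$; combining with $\E[\ln W_T]\ge\E[\OPT]\ln\gamma$ yields $\gamma\theta\,\E[\mistake(T)]\le\ln|\cH|-\E[\OPT]\ln\gamma$. Plugging in $\gamma=1/e$ and $1/\theta=\min\{\Delta+1+\tfrac1\beta,\Delta^2+2\}$ gives exactly $\E[\mistake(T)]\le e\cdot\min\{\Delta+1+\tfrac1\beta,\ \Delta^2+2\}(\ln|\cH|+\E[\OPT])$. The group-independent threshold is what keeps the potential-drop bound uniform across branches, the group-dependent discount is what prevents over-penalizing $h^\star$, and the delicate interaction between the randomness of $c_t$ and of the event "a mistake occurs" is exactly what the algebraic calibration of $\theta$ is designed to absorb.
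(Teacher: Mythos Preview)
Your proposal is correct and follows the same weighted-majority potential argument as the paper: lower-bound $W_T$ via the best expert, show a $\theta$-fraction expected weight drop per mistake calibrated by the two-regime choice of $\theta$, and telescope. The only organizational difference is in how the false-negative case is handled: the paper conditions on the event ``false negative'' and proves the clean claim $\Pr(c_t=B\mid\cF_{t-1},\text{FN})\ge\beta$ via Bayes' rule (using that group $A$, having more manipulation power, is weakly less likely to be a false negative), whereas you condition on $\cF_{t-1}$ alone and explicitly enumerate the three sub-cases for where $h_t$'s positive region sits relative to $N[u_t]$ and $N^2[u_t]$. Your enumeration is exactly what the paper's Bayes claim summarizes---in every sub-case where a false negative is possible, it occurs either for both groups or only for group $B$---so the two computations coincide; the paper's formulation is slightly slicker, while yours makes the interaction between the randomness of $c_t$ and the mistake indicator fully transparent.
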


\begin{remark}    
    In \Cref{thm:two-populations}, when all agents can make two hops (i.e., $\beta=0$), the mistake bound reduces to the guarantee provided \Cref{thm:biased-weighted-maj-vote-mistake-bound} with $\Delta^2$ as the maximum degree. In this case, \Cref{alg:two-populations} is equivalent with \Cref{alg:biased-weighted-maj-vote} running on the expanded neighborhood graph $\widetilde{G}$ in which every two nodes of distance at most two are connected by an edge. Here, $\Delta^2$ is an upper bound on the maximum degree of $\widetilde{G}$.In contrast, when all agents can only make one hop (i.e., $\beta=1$), the problem reduces to the baseline model, and \Cref{thm:two-populations}'s guarantee becomes the same as that of \Cref{thm:biased-weighted-maj-vote-mistake-bound} with the same set of parameters.
    For values of $\beta$ between 0 and 1, the mistake bound smoothly interpolates the guarantees of the two extreme cases.
\end{remark}
    \begin{proof}[Proof of~\Cref{thm:two-populations}]
        
        We show that whenever a mistake is made, we can reduce the total weight of experts ($W_t$) by a constant fraction in expectation.

     First, consider the case of a false positive. 
     Since $h_t(v_t)=+1$, the total weight of experts that predict positive on $v_t$ is at least $\theta W_t$ %
     ; and {the weight of each of them gets reduced} by a factor of $\lambda$. 
     Let $\cF_t$ be the $\sigma$-algebra generated by the random variables up to time $t$, then
     we have 
     \begin{align}
         \E[W_{t+1}\ |\ \mathcal{F}_{t-1},\text{ false positive}]\le W_t (1-\lambda \theta).
         \label{eq:cut-false-positive}
     \end{align}

     Next, consider the case of a false negative. Since $h_t(v_t)=-1$, we know that the agent did not move, i.e., $v_t=u_t$. The algorithm updates as follows: if $c_t=B$, it reduces the weight of experts who predict   {negative on all the nodes in the one-hop neighborhood of $v_t$, i.e., $N[v_t]$.} if $c_t=A$, then it reduces the weight of experts who predict  {negative on all the nodes in the two-hop neighborhood of $v_t$, i.e., $N^2[v_t]$}. We claim that:
     
    \begin{align*}
        &\frac{\Pr(c_t=B\ |\ \cF_{t-1},\text{false negative})}{\beta}\ge \frac{\Pr(c_t=A\ |\ \cF_{t-1},\text{false negative})}{1-\beta}\\ \Rightarrow\ &\Pr(c_t=B\ |\ \cF_{t-1},\text{false negative})\ge\beta.
    \end{align*}
    To see this, we can use the Bayes law to calculate the conditional probability of group assignments: for {$X\in \{A,B\}$}, we have
    \begin{align}
        \Pr(c_t=X\ |\ \cF_{t-1},\text{false negative})=\frac{\Pr(\text{false negative}\ |\ \cF_{t-1},c_t=X)\cdot\Pr(c_t=X\ |\ \cF_{t-1})}{\Pr(\text{false negative}\ |\ \cF_{t-1})}.
    \end{align}
    Since the group membership $c_t$ is independently realized after the adversary chooses $(u_t,y_t)$, we have
    \begin{align*}
        &\frac{\Pr(c_t=B\ |\ \cF_{t-1},\text{false negative})}{\beta}-\frac{\Pr(c_t=A\ |\ \cF_{t-1},\text{false negative})}{1-\beta}\\
        =&\frac{1}{\Pr(\text{false negative}\ |\ \cF_{t-1})}\Big(\Pr(\text{false negative}\ |\ \cF_{t-1},c_t=B)-\Pr(\text{false negative}\ |\ \cF_{t-1},c_t=A)\Big)\ge0,
    \end{align*}
    where the last step is because agents of population $A$ have more manipulation power, so under every possible classifier, group $A$ is able to get classified as positive whenever group $B$ is; therefore, group $A$ agents are less likely to become false negative. We have thus established the claim.

    Now we turn to the total weight that is reduced in this scenario. If $c_t=A$, then there are at most $(\Delta^2+1)$ nodes in the two-hop neighborhood, in which all of them are predicted negative. Therefore, the total weight of experts who predict negative on all of them is at least $W_t(1-\theta(\Delta^2+1))_+$. On the other hand, if $c_t=B$, then the total weight of experts who predict negative on the one-hop neighborhood is at least $W_t(1-\theta(\Delta+1))_+$.
    Putting the two cases together and conditioning on the false negative, the total weight that can be reduced is at least
    \begin{align}
    &\Pr(c_t=B\ |\ \text{false negative},\cF_{t-1})\cdot {(1-(\Delta+1)\theta)_+}
    \nonumber\\&\qquad\qquad
        +\Pr(c_t=A\ |\ \text{false negative},\cF_{t-1})\cdot (1-(\Delta^2+1)\theta)_+\nonumber\\
        \ge& \max\left\{\beta(1-(\Delta+1)\theta)_+,(1-(\Delta^2+1)\theta)_+\right\},
        \label{eq:tttmp}
    \end{align}
    where the first term in \eqref{eq:tttmp} is due to the claim we just established, and the second term follows from $(1-(\Delta+1)\theta)_+\ge(1-(\Delta^2+1)\theta)_+$ together with
    $\Pr(c_t=B\ |\ \text{false negative},\cF_{t-1})+\Pr(c_t=A\ |\ \text{false negative},\cF_{t-1})=1$.
    From \Cref{eq:tttmp}, we obtain
    \begin{align}
        \E[W_{t+1}\ |\ \cF_{t-1},\text{ false negative}]\le W_t\left(1-\lambda\cdot\max\left\{\beta(1-(\Delta+1)\theta)_+,(1-(\Delta^2+1)\theta)_+\right\}\right).\label{eq:cut-false-negative}
    \end{align}
Finally, we optimize the threshold $\theta$ to equalize the decrease in the case of false positive (\Cref{eq:cut-false-positive}) and false negative (\Cref{eq:cut-false-negative}). As a result, the optimal $\theta$ is obtained by solving the following equation:
\begin{align}
    \underbrace{\theta}_{f(\theta)}=\max\left\{\underbrace{\beta(1-(\Delta+1)\theta)}_{f_1(\theta)},\,\underbrace{1-(\Delta^2+1)\theta}_{f_2(\theta)},\,0\right\}.\nonumber
\end{align}
Since $f,f_1$, and $f_2$ are all linear functions where $f_1,f_2$ have a negative slope and $f$ has a positive slope, the intersection between $f$ and $\max\{f_1,f_2\}$ coincides with the maximum value between the intersection of $\{f,f_1\}$ and the intersection of $\{f,f_2\}$. Moreover, $\theta=0$ is not a valid solution because the other two intersections have strictly positive values.
Thus we obtain
\begin{align*}
    \theta\triangleq\max\left\{\frac{1}{\Delta+1+\frac{1}{\beta}},\ \frac{1}{\Delta^2+2}\right\}.
\end{align*}
Correspondingly, on each mistake, the optimal amount of decrease in the total weight is
\begin{align*}
    \E\left[\left.\frac{W_{t+1}}{W_t}\ \right|\ \cF_{t-1},\text{ mistake}\right]\le \min\left\{1-\frac{\lambda}{\Delta+1+\frac{1}{\beta}},\ 1-\frac{\lambda}{\Delta^2+2}\right\}.
\end{align*}
By Jensen's inequality, we further obtain that if a mistake is made at time $t$, then
\begin{align}
    \E\left[\ln \left.\frac{W_{t+1}}{W_t}\ \right|\ \cF_{t-1},\text{ mistake}\right]\le&\ln\E\left[\left.\frac{W_{t+1}}{W_t}\ \right|\ \cF_{t-1},\text{ mistake}\right]\nonumber\\
    \le &\ln\left(\min\left\{1-\frac{\lambda}{\Delta+1+\frac{1}{\beta}},\ 1-\frac{\lambda}{\Delta^2+2}\right\}\right).\label{eq:decrease:mistake}
\end{align}
The last step is to telescope \Cref{eq:decrease:mistake} over all mistakes.
Note that the algorithm only penalizes the experts that make mistakes, so the same argument as \Cref{thm:biased-weighted-maj-vote-mistake-bound} implies that $W_T\ge\gamma^{\OPT}$. Thus we have
\begin{align*}
\E[\ln(\gamma^{\OPT})-\ln|\cH|] \le \E[\ln W_T-\ln|\cH|]
    \le    \E[\mistake(T)]\cdot\ln\left(\min\left\{1-\frac{\lambda}{\Delta+1+\frac{1}{\beta}},\ 1-\frac{\lambda}{\Delta^2+2}\right\}\right).
\end{align*}
 
 {Rearranging} the above inequality, setting $\lambda=1/e$ {and using $\ln(1-x)\leq -x$} gives us an expected mistake bound of 
 \begin{align*}
     \E[\mistake(T)]\le e\cdot\min\left\{\Delta+1+\frac{1}{\beta},\ \Delta^2+2\right\}\left(\ln|\cH|+\E[\OPT]\right).
 \end{align*}
 This completes the proof.
    \end{proof}

\end{document}